\newenvironment{breakablealgorithm}
  {
   \begin{center}
     \refstepcounter{algorithm}
     \hrule height.8pt depth0pt \kern2pt
     \renewcommand{\caption}[2][\relax]{
       {\raggedright\textbf{\fname@algorithm~\thealgorithm} ##2\par}%
       \ifx\relax##1\relax 
         \addcontentsline{loa}{algorithm}{\protect\numberline{\thealgorithm}##2}%
       \else 
         \addcontentsline{loa}{algorithm}{\protect\numberline{\thealgorithm}##1}%
       \fi
       \kern2pt\hrule\kern2pt
     }
  }{
     \kern2pt\hrule\relax
   \end{center}
  }
\renewcommand{\P}{\mathbb{P}}
\newcommand{\E}{\mathbb{E}}
\newcommand{\dkl}{D_{\textnormal{KL}}}
\newcommand{\dtv}{d_{\textnormal{TV}}}
\newcommand{\Esrw}{\E_{\textnormal{srw}}}
\newcommand{\Psrw}{\P_{\textnormal{srw}}}
\newcommand{\Eerrw}{\E_{\textnormal{errw}}}
\newcommand{\Perrw}{\P_{\textnormal{errw}}}
\newcommand{\Pvrrw}{\P_{\textnormal{vrrw}}}
\newcommand{\var}{\textnormal{Var}}
\newcommand{\allone}{\mathbf{1}}
\newcommand{\ind}{\mathbf{1}}
\newcommand{\diam}{\textnormal{diam}}
\newcommand{\cov}{\textnormal{cov}}
\newcommand{\covar}{\textnormal{Cov}}
\newcommand{\hit}{\textnormal{hit}}
\newcommand{\comm}{\textnormal{comm}}
\newcommand{\gam}{\textnormal{Gamma}}
\newcommand{\mc}{\mathcal{C}}
\newcommand{\mr}{\mathcal{R}}
\newcommand{\poly}{\textnormal{poly}}
\newtheorem{thm}{Theorem}[section]
\newtheorem{lem}[thm]{Lemma}
\newtheorem{cor}[thm]{Corollary}
\newtheorem{rem}{Remark}[thm]
\newtheorem{remark}{Remark}
\newif\ifva
\title{On Statistical Estimation of Edge-Reinforced Random Walks}
\author{Qinghua (Devon) Ding, \quad Venkat Anantharam\\
  Department of Electrical Engineering and Computer Sciences\\
                    University of California at Berkeley\\
                    Berkeley, CA, United States\\
                    Email: \{devon\_ding, ananth\}@berkeley.edu}
\begin{document}

\maketitle

\begin{abstract}
Reinforced random walks (RRWs), including vertex-reinforced random walks (VRRWs) and edge-reinforced random walks (ERRWs), model random walks where the transition probabilities evolve based on prior visitation history~\cite{mgr, fmk, tarres, volkov}. These models have found applications in various areas, such as network representation  learning~\cite{xzzs}, reinforced PageRank~\cite{gly}, and modeling animal behaviors~\cite{smouse}, among others. However, statistical estimation of the parameters governing RRWs remains underexplored. This work focuses on estimating the initial edge weights of ERRWs using observed trajectory data. Leveraging the connections between an ERRW and a random walk in a random environment (RWRE)~\cite{mr, mr2}, as given by the so-called ``magic formula", we propose an estimator based on the generalized method of moments. To analyze the sample complexity of our estimator, we exploit the hyperbolic Gaussian structure embedded in the random environment to bound the fluctuations of the underlying random edge conductances.
\end{abstract}

\section{Introduction}


Given a simple random walk on an unweighted undirected graph $G=(V, E)$, it is often of interest to study its stationary distribution $\pi$, which, for example, may serve as an indicator of the importance of each state, as in the PageRank algorithm~\cite{page}. 
This has motivated research aimed at designing
faster algorithms to compute the stationary distribution of such random walks, see e.g.~\cite{dmpu,lbgs}.
There is also considerable
interest in estimating the Markov chain itself, i.e., the transition matrix $P$, based on curated data. In the simplest scenario, one is given a trajectory $X:=\{X_t, 0 \le t \le N \}$ for some $N \ge 0$ and tasked with constructing an estimator $\tilde{P}(X)$ of the Markov chain from this trajectory~\cite{ddg, wk, cdl}.

However, human and animal behaviors in the real world often deviate from being Markovian~\cite{mgr} or even from having a higher-order Markovian structure~\cite{bgl}. As an illustrative example, individuals tend to revisit websites they have accessed before, leading to a “rich-get-richer” phenomenon~\cite{mgr}. Similarly, animal movements frequently exhibit spatial memory, which can be incorporated into models as a reinforced random walk~\cite{fmk}. Interested readers can explore numerous applications in network science and biological sciences in these papers and their references. For instance, reinforced random walks and their variants have been applied to network representation learning~\cite{xzzs}, reinforced PageRank~\cite{gly}, modeling animal behavior~\cite{smouse}, and even tumor studies~\cite{ps}.

In our current study of reinforced random walks, we aim to develop theoretically guaranteed algorithms for related statistical estimation problems similar in flavor to the ones typically considered for random walks. Specifically, we focus on the following question:

\vspace{.3em}
\begin{center}
\noindent\fbox{%
    \parbox{0.9\textwidth}{%
        \emph{How can we estimate the parameters of reinforced random walks from our observations, which are usually sampled trajectory data? What is the sample complexity of some such estimators?}
    }%
}
\end{center}
\vspace{.3em}

We 
restrict attention to
reinforced random walks (RRWs) on a finite set $V=[n]$,
where movement is possible between some pairs of adjacent vertices, i.e. undirected edges, defining a connected undirected graph $G=(V, E)$.
Here $[n]$ denotes $\{1, \ldots, n\}$. 
There are two major families of reinforced random walks studied in the literature: vertex-reinforced random walks (VRRWs) and edge-reinforced random walks (ERRWs). Although both of them have a self-reinforcing property, their qualitative behaviors can be drastically different. 
Let us introduce VRRWs first.

Suppose we are given a graph $G=(V, E)$ together with a vector $\Lambda_0 \in \mathbb{R}_{++}^V$
(i.e. every entry of $\Lambda_0$ is strictly positive).
We will assume, for simplicity, that the graph is simple, i.e.\, it has no self-loops or multiple edges between a pair of vertices. 
Let us fix some $v_0\in V$ and consider the process that starts at $X_0=v_0$. We can define the vertex local time of the trajectory $\{X_t:t\in\mathbb{Z}_+\}$ at state $i\in V$ and time $t$ to be 
\begin{equation}    \label{eqn:vertexlt}
\Lambda_t^i:=\Lambda_0^i+\sum_{s=1}^t \ind(X_s=i). 
\end{equation}
Here $\mathbb{Z}_+$ denotes the set of nonnegative integers. The process 
$\{\Lambda_t, t \in \mathbb{Z}_+\}$, where 
$\Lambda_t:=(\Lambda_t^i)_{i\in V}$ is known as the vertex local time process and $\Lambda_0$ is called the initial vertex local time. The law of VRRW starting at $v_0$ with initial vertex local time $\Lambda_0$ can be stated as
\[
\Pvrrw^{v_0, \Lambda_0}(X_{t+1}=i \mid \{X_0, X_1, \cdots, X_t\}) = \frac{\Lambda_t^{i}}{\sum_{i':X_t \sim i'} \Lambda_t^{i'}},\quad \forall i\in V,
\quad \forall t \in \mathbb{Z}_+.
\]

Here for any $i,j\in V$, $i\sim j$ denotes that there is an edge in $E$ between $i$ and $j$. For a VRRW, it is well-known that, for certain types of graphs, the random walk is almost surely confined to some proper subgraph. A classical example is VRRW on $\mathbb{Z}$, where $i\sim j$ if and only if $|i-j|=1$. Given initial vertex local time $\Lambda_0=\allone$, 
where $\allone$ denotes the all-ones 
sequence,
\ifva
\color{red}
\color{black}
\fi
and the initial state $v_0=0$, for any $k\in\mathbb{Z}$ the VRRW on this graph has a positive probability of getting trapped in the subgraph induced by $V'=\{k-2,k-1,k,k+1,k+2\}$, i.e., $\Pvrrw^{v_0, \Lambda_0}(\{v\in V: X_t=v\text{ infinitely often }\}=V')>0$ (Theorem 1.3, \cite{tarres}). This localization phenomenon of VRRW was also generalized to other graphs in \cite{volkov}.

Another class of reinforced random walks arises when reinforcement occurs on edges instead of vertices. To discuss ERRW, let us define the edge local time processes first. Assume that we are given the initial edge weights as $L_0 = A \in \mathbb{R}_{++}^E$. For trajectory $\{X_t, t \in \mathbb{Z}_+\}$ define the edge weight of $e=\{i,j\}\in E$ 
\footnote{Note that the edges are assumed to be undirected, and so, since the graph is assumed to be simple, an edge can be identified with the set of its end points. In particular, $\{i,j\} = \{j,i\}$.}
at time $t$ as $L_t^e = L_0^e+\sum_{s=0}^{t-1}\ind(\{X_s,X_{s+1}\}=\{i,j\})$.
The process 
$\{L_t, t \in \mathbb{Z}_+\}$, where
$L_t:=(L_t^e)_{e\in E}$, can be called either the edge weight process or the edge local time process. The law of ERRW starting at $v_0$ with initial edge local times given by $L_0=A$ can be stated as 
\begin{equation}        \label{eqn:errwlaw}
\Perrw^{v_0,A}(X_{t+1}=i \mid \{X_0,X_1,\cdots, X_t\}) = \frac{L_t^{\{X_t, i\}}}{\sum_{i':X_t \sim i'} L_t^{\{X_t, i'\}}}, \quad \forall i\in V,
\quad \forall t \in \mathbb{Z}_+.
\end{equation}

Unlike for VRRW, it is known that, almost surely, ERRW will visit every state in $V$ infinitely often if the graph is finite and connected and all the initial edge weights are positive (as we assume). To be precise, for any $v_0\in V$ and $A \in \mathbb{R}_{++}^E$, we have $\Perrw^{v_0,A}(\{v\in V: X_t=v\text{ infinitely often }\}=V)=1$ if $G$ is finite and connected. This can be seen from the interpretation of ERRW as a random walk in a random environment(RWRE) where the random environment is supported on $\mathbb{R}_{++}^E$. See \cref{eqn:mixing} below for the precise density formula for the random environment.

\subsection{ERRW as random walk in a random environment(RWRE)}

For 
a random walk
on a connected graph $G$, the normalized vertex local time process 
converges
almost surely to the stationary measure $\pi$ of the irreducible transition probability $P$. This classical result is known as the strong law of large numbers for Markov chains,
and is an immediate consequence of stronger
convergence results such as 
Theorem 1 in Section 1.6 of~\cite{Chung60}.

In the case of VRRW, the normalized vertex local time
$\frac{1}{t}\Lambda_t$
will converge in probability to a distribution, as $t \to \infty$. Moreover, the distribution that it converges to will be one of the solutions of a certain fixed-point equation\footnote{The fixed-point equation involved here is essentially the replicator equation.}(Theorem 1.1, \cite{benaim}). However, it is known that, depending on the graph $G$ and the initial conditions, there could be multiple solutions to the corresponding fixed-point equation, and each of them has a positive probability of being the limiting distribution if they satisfy some regularity conditions (Corollary 6.5, \cite{benaim}). For example, for VRRW on $\mathbb{Z}$, it is known that the fixed-point equations have infinitely many solutions and each of them has a non-zero probability of being the limit of 
$\frac{1}{t}\Lambda_t$~\cite{tarres}. We remark that the methods people use to study vertex-reinforced random walks usually come from dynamical systems~\cite{benaim} 
and techniques for analyzing complex yet structured recursions~\cite{tarres}.

One important difference between ERRW and VRRW is that ERRW does not suffer from the issue of localization. Hence one can guarantee that, almost surely, an ERRW on $\mathbb{Z}$ will eventually visit every state. Besides this, we would like to highlight two beautiful results about ERRW.

\begin{enumerate}
    \item ERRW can be interpreted as a random walk in a random environment (RWRE) \cite{kr00}, and one can explicitly compute the probability measure governing the random edge conductances, which is also known as the mixing measure (\cite{mr2}, Theorem 1), when the underlying graph is finite. More precisely, an ERRW with initial edge weights $A:=(a_e)_{e\in E}$ and initial state $v_0\in V$ can be thought of as a simple random walk with random conductances given by $W=(W_{ij})_{i,j\in V}$ coming from the mixing measure $d\mu_{v_0,A}(w)$:
    \[\Perrw^{v_0, A}[\cdot]=\int\Psrw^{v_0, w}[\cdot]d\mu_{v_0,A}(w).\]
    
    Here $\Perrw^{v_0, A}$ represents the probability measure associated with ERRW with initial edge weight $A$ and initial state $v_0$, while $\Psrw^{v_0, w}$ represents the probability measure associated with a simple random walk with edge weight matrix $w$ and initial state $v_0$. We will give the explicit formula for the mixing measure later. This formula is known as the ``magic formula'' in the study of ERRW~\cite{dr,mr2}.
    \ifva
\color{red}
\color{black}
\fi

    \item The problem of deciding whether the local time at 0 is infinite almost surely (positive recurrent) or finite almost surely (transient) for ERRW on the graph $\mathbb{Z}^d$ is resolved: it is positive recurrent when $d=2$; while it is transient when $d\geq 3$~\cite{st, bhs}. These results make use of the beautiful connections between ERRW and the vertex-reinforced jump process (VRJP)~\cite{st}, as well as the so-called Anderson model~\cite{bhs, st}. These connections will not be explored in this paper but interested readers are referred to~\cite{bhs, st}.
\end{enumerate}

Therefore, via the connection between ERRW and RWRE, one can see that the normalized local time process of an ERRW will converge to the stationary distribution of the random environment $W$ that comes from the mixing measure $\mu_{v_0, A}$. To define the mixing measure $\mu_{v_0,A}$, let us fix a special edge $e_0$ that is incident on $v_0$.\footnote{We remark that the law $\Psrw^{v_0,W}$ under random conductance $d\mu_{v_0,A}(W)$ is invariant under different choices of $e_0$. In other words, if we denote the transition matrix corresponding to weight $W$ as $P$, then under different choices of $e_0$, the induced distribution of $P$ will not change even though the law of $W$ changes.} Let 
$a_v:=\sum_{u\sim v}a_{\{u,v\}}$ be the sum of weights of all the edges that are incident with node $v$. Then the random conductance $W$ can be taken with $W_{e_0}=1$, and $W_{-e_0}:=(W_e)_{e\neq e_0}$ distributed according to
\begin{equation}
\label{eqn:mixing}
d\mu_{v_0,A}(w_{-e_0}) =Z_{v_0, A}^{-1}\cdot  
\frac{w_{v_0}^{\frac{1}{2}}\prod_{e\in E}w_e^{a_e-1}} {\prod_{v\in V}w_v^{\frac{1}{2}(a_v+1)}}
\sqrt{\sum_{T\in\mathcal{T}}\prod_{e\in T}w_e}\,dw_{-e_0}.
\end{equation}
Here, for any $v\in V$, $w_v$ is defined as $w_v:=\sum_{u\sim v}w_{uv}$.
Also, $\mathcal{T}$ denotes the set of all spanning trees of the underlying graph, and $dw_{-e_0}$ is defined as $dw_{-e_0}:=\prod_{e\in E\backslash\{e_0\}}dw_e$. 
To be more precise, if we let $\delta(x)$ denote the Dirac delta function, i.e. $\int_\mathbb{R}\delta(x)dx=1$ and $\delta(x) = 0$ for $x \neq 0$, then the mixing distribution on $w$ can be written as
$\delta(w_{e_0}-1)dw_{e_0}d\mu_{v_0,A}(w_{-e_0})$. 
This is captured by writing the formula as in 
\cref{eqn:mixing}, with the understanding that 
$w_{e_0} = 1$.

The normalizing factor $Z_{v_0, A}$ is defined as
\begin{equation}
\label{eqn:normalizer1}
Z_{v_0, A}:=\int_{\mathbb{R}_{++}^{|E|-1}} \frac{w_{v_0}^{\frac{1}{2}}\prod_{e\in E}w_e^{a_e-1}} {\prod_{v\in V}w_v^{\frac{1}{2}(a_v+1)}}
\sqrt{\sum_{T\in\mathcal{T}}\prod_{e\in T}w_e}\,dw_{-e_0},
\end{equation}
where again $w_{e_0}$ is taken to equal $1$.

Moreover, one can evaluate the normalizing factor (Theorem 4, \cite{stz}) as
\begin{equation}
\label{eqn:normalizer2}
Z_{v_0, A}=\frac{\pi^{\frac{1}{2}(|V|-1)}}{2^{1-|V|+\sum_{e\in E}a_e}}\cdot \frac{\prod_{e\in E}\Gamma(a_e)}{\prod_{v\in V}\Gamma(\frac{1}{2}(a_v+1-\ind_{v=v_0}))}.
\end{equation}
It may be surprising that this expression does not depend on the choice of the edge $e_0$ that is incident on $v_0$. Why this is true will become more clear after the discussion at the beginning of \cref{sec:nonasymptotic}.

The reason why such an RWRE representation is available is that ERRW satisfies partial exchangeability: for any $t\in\mathbb{Z}_+$, $v_0\in V$, and $A, B\in\mathbb{R}_{++}^E$, conditioned on $X_0=v_0, L_0=A$ and that the edge local time at time $t$ is $L_t=B$, the distribution of the trajectory up to time $t$ is uniformly random among those trajectories that align with these conditions. 
In other words, given any trajectory $X=\{X_s, 0\leq s\leq t\}$, we can permute the vertices in an arbitrary way to obtain another trajectory $\tilde{X}$, but $\Perrw^{v_0,A}(X)=\Perrw^{v_0,A}(\tilde{X})$ so long as the two trajectories yield the same edge local time at time $t$. 
Then de Finetti's theorem for reversible Markov chains~\cite{dr} guarantees that ERRW has a representation as a mixture of reversible Markov chains. Interested readers may also refer to \cite{rolles}, where it is argued that ERRW is the only family of random processes that satisfies partial exchangeability under certain conditions.

From now on, we will think of ERRW as being naturally coupled with the underlying RWRE. From the RWRE perspective, 
it is almost surely true that each of the edge weights $W_e$ is strictly positive.
\ifva
\color{red}
\color{black}
\fi
Since the graph is connected and finite, we would expect the simple random walk with the underlying weights $W$ to cover every state as $t\rightarrow\infty$. This observation guarantees that, almost surely, the ERRW has a finite random cover time, which is defined as $\tau_\cov:=\inf\{s\geq 0:\,\Lambda_s^i-\Lambda_0^i>0,\forall\,i\in[n]\}$. 
\footnote{Here we continue to use the notation
$\Lambda_t^i$, as given in \cref{eqn:vertexlt},
for the vertex local times, even though we are dealing from now on with ERRW, with the dynamics given in \cref{eqn:errwlaw}.}
Actually, we will see later that the cover time $t_\cov$, which is the maximum of $\Eerrw^{v_0, A}(\tau_\cov)$ over all starting states $v_0\in V$, is also finite. In conclusion, ERRW is not as localized as VRRW, and conditional mixing (the process mixes given $W$) happens in finite time almost surely. 


\subsection{The estimation problem of interest}
\label{sec:formulation}

We are interested in the problem of parameter estimation of ERRW using sample trajectories. Given the graph structure and the sampled trajectories, we want to recover the list of initial edge weights $L_0=A$. 

Formally, given a connected simple graph $G=(V=[n], E)$, $v_0\in V$ and $A:=(a_e)_{e\in E}\in\mathbb{R}_{++}^E$\footnote{To be precise, $G$ and $v_0$ are observed while $A$ is not.}, we obtain 
$K \ge 1$
i.i.d. ERRW sample trajectories of length 
$T \ge 1$
on graph $G$ with starting point $X_0=v_0$ and initial local time $L_0=A$. We denote these sample trajectories as $\big\{X_t(k)\,\big|\,0 \le t\leq T, 1 \le k\leq K\big\}$, and we want to find an estimator $\hat{A}$ such that the error criterion $d(A,\hat{A})$ is small enough with high probability. We assume that all the initial edge weights are bounded above and below by positive constants. We assume that $\underline{a}\leq \min_{e\in E} a_e\leq \max_{e\in E} a_e \leq \overline{a}$ for some positive constants $\underline{a}, \overline{a}$ that do not scale with $n$.

To be precise, 
we want an estimator $\hat{A}$ that is an entry-wise $\epsilon$-multiplicative approximation to $A$, with high probability. This motivates us to consider the following ratio-based divergence between two entry-wise positive 
sequences $A, B\in\mathbb{R}_{++}^E$:
\[d(A,B):=\max_{\{i,j\} \in E}\bigg(\max\bigg\{\frac{A_{ij}}{B_{ij}}, \frac{B_{ij}}{A_{ij}}\bigg\}-1\bigg).\]
It is straight-forward to check that we have: (i) $d(A, B)\geq 0$; (ii) $d(A, B)=0$ if and only if $A=B$; (iii) if $d(A,B)\leq \epsilon$ 
for some $\epsilon\in(0,1)$ 
then $A$ is an entry-wise $\epsilon$-multiplicative approximation of $B$, i.e., $(1-\epsilon)B_{ij}\leq A_{ij}\leq (1+\epsilon)B_{ij}$ for every $\{i,j\} \in E$.
\ifva
\color{red}
\color{black}
\fi

Informally speaking, when $K$ and $T$ are large enough, we should be able to recover $A$ with arbitrarily small error. But fixing the error level $\epsilon>0$ and a confidence level $\delta\in(0,1)$, how many samples are sufficient for the recovery of $A$ 
by an estimator $\hat{A}$
\ifva
\color{red}
\color{black}
\fi
with error $d(A,\hat{A})\leq \epsilon$ with probability $\geq 1-\delta$ (upper bound)? How many samples are needed to do so (lower bound)?

\section{Learning the parameters of Edge-Reinforced Random Walks}

As stated in section \ref{sec:formulation}, fixing $G=(V, E)$, $v_0\in V$ and $A\in\mathbb{R}_{++}^E$, we observe 
$K \ge 1$
i.i.d. sample trajectories of length 
$T \ge 1$
from $\Perrw^{v_0, A}$, and want to estimate $A$. 
This statistical question
arises in modeling user website browsing behavior via an ERRW. 
The vertices of $G$ are the websites, and the network links between websites form the edges in $G$, while the vector of initial local times $A$ encodes the prior preferences of the population: larger weights correspond to more ``preferred'' links. Given a segment of users, one assumes each user's browsing log is generated by $\Perrw^{v_0,A}$, with $v_0$ representing a common entry point (e.g., a search engine). Here ERRW serves as a way to capture the users' self-reinforcement behaviors. Given users' weblogs, which are treated as i.i.d.\,sample trajectories, we then aim to recover $A$. 
Alternatively, in a personalized context, where we have a single user, multiple continuous browsing sessions can each be viewed as an independent ERRW trajectory, capturing short-term memory effects within that session, and assuming independence between different sessions.

Several works in the literature illustrate some similar modeling approaches:
\begin{itemize}
    \item \cite{mgr, xzzs} modeled user browsing with reinforced random walks, then ranked the websites using certain averaged visitation statistics over multiple observed trajectories. 
    \item \cite{bgl} employed a variant of ERRW to model taxis' trajectories, assuming each follows the same underlying distribution. 
\ifva
\color{red}
\color{black}
\fi
    \item \cite{ps} examined potential applications of reinforced random walk variants in modeling tumor angiogenesis. To infer the model parameters, one can likely obtain near-i.i.d.\,sample trajectories under controlled lab conditions.
\end{itemize}

Thus, for a wide range of potential applications, one can frame the task as estimating the model parameter $A$ given multiple sampled i.i.d. ERRW trajectories.

We now introduce some standard statistical notation.
The total variation distance between two probability 
distributions on a measurable space $(\mathcal{X}, \mathcal{B})$
is defined, as usual, as 
$\dtv(P_X, Q_X) := \sup_{B \in \mathcal{B}} \left( P_X(B) - Q_X(B) \right)$.
Recall that if $P_X$ and $Q_X$ are probability distribution
on the finite set $\mathcal{X}$ we have
\[\dtv(P_X, Q_X) := \frac{1}{2}\sum_{x \in \mathcal{X}} |p_X(x)-q_X(x)|.\]
The Kullback-Leibler (KL) divergence of $P_X$ with respect
to $Q_X$ is defined to be $\infty$ if $P_X$ is not
absolutely continuous with respect to $Q_X$, and is defined
to be $\dkl(P_X \| Q_X) := \mathbb{E}_{P_X} \left[ \log \frac{dP_X}{dQ_X} (X) \right]$ otherwise.
Pinsker's inequality, see e.g. Theorem 4.19 of~\cite{BLM2013}, relates KL divergence and total variation distance by 
\[
\dtv(P_X,Q_X)\leq \sqrt{\frac{1}{2}\dkl(P_X\|Q_X)}.
\]

We use the notation $X_i^j:=\{X_i,\cdots, X_j\}$ for $0\leq i\leq j\leq T$.

\subsection{Insufficiency of a single trajectory}

When estimating the transition matrix of an irreducible finite Markov chain, given a single trajectory that is long enough, one can expect to recover the model parameters
in the limit of large trajectory length. 
However, we will argue in this section that a single trajectory, even if infinitely long, is not good enough 
to recover the parameters (i.e. the initial edge weights) for ERRW. 

Consider a scenario where we have a sample trajectory coming from either $\Perrw^{v_0, A}$ or $\Perrw^{v_0, \tilde{A}}$, with the same graph structure $G=(V, E)$ and the same starting point $v_0\in V$. We want to test the null hypothesis $H_0$:``the initial weight vector is $L_0=A$'' against the alternative hypothesis $H_1$:``the initial weight vector is $L_0=\tilde{A}$''. To do so, one designs a testing algorithm $\mathcal{A}$ that takes some trajectory $\underline{X}_0^T$ of the ERRW (assumed to be generated under hypothesis $\Theta\in\{H_0, H_1\}$), and outputs a decision whether $\underline{X}_0^T$ is generated from $\Perrw^{v_0, A}$ 
or it is generated from $\Perrw^{v_0,\tilde{A}}$. 
\ifva
\color{red}
\color{black}
\fi
The testing error of this algorithm is defined as
\[\mathcal{E}:=\frac{1}{2}\big(\P(\mathcal{A}(\underline{X}_0^T)=H_1\,|\,\Theta=H_0)+\P(\mathcal{A}(\underline{X}_0^T)=H_0\,|\,\Theta=H_1)\big).\]

In hypothesis testing language, this is a combination of Type-I error which is defined as $\mathcal{E}_\text{I}:=\P(\mathcal{A}(\underline{X}_0^T)=H_1\,|\,\Theta=H_0)$, and Type-II error which is defined as $\mathcal{E}_{\text{II}}:=\P(\mathcal{A}(\underline{X}_0^T)=H_0\,|\,\Theta=H_1)$. This testing error $\mathcal{E}$ is also referred to as the Bayesian error of the testing procedure under the uniform prior. $\mathcal{E}$ being small is equivalent to both Type-I error and Type-II error being small. A good testing procedure should make the testing error $\mathcal{E}$ as small as possible. 

Let us denote a trajectory generated from $H_0$ as $X_0^T$ and another independent trajectory generated from $H_1$ as $\tilde{X}_0^T$. Recall that for any ERRW, one can naturally couple it with an RWRE. Let us assume that the random environment for each trajectory is $W$ and $\tilde{W}$, respectively. Under such a coupling, we have Markov chains $A\rightarrow W\rightarrow X_0^T$ and $\tilde{A}\rightarrow \tilde{W}\rightarrow \tilde{X}_0^T$. 
We then have
\begin{align*}
\dkl(X_0^T, W\|\tilde{X}_0^T, \tilde{W})
&=\int \sum_{x_0^T} p_{X_0^T|W}(x_0^T|w)p_W(w)\log\bigg(\frac{p_{X_0^T|W}(x_0^T|w)}{p_{\tilde{X}_0^T|\tilde{W}}(x_0^T|w)}\cdot \frac{p_W(w)}{p_{\tilde{W}}(w)}\bigg)\,dw.
\end{align*}

Since $p_{X_0^T|W}(x_0^T|w)=p_{\tilde{X}_0^T|\tilde{W}}(x_0^T|w)$, as they are both the probability of generating trajectory $x_0^T$ from a simple random walk $\Psrw^{v_0,w}$, we conclude that
\begin{align}
\label{eqn:kl1}
\dkl(X_0^T, W\|\tilde{X}_0^T, \tilde{W})
&= \int \sum_{x_0^T} p_{X_0^T|W}(x_0^T|w)p_W(w)\log\bigg(\frac{p_W(w)}{p_{\tilde{W}}(w)}\bigg)\,dw \nonumber\\
&= \int p_W(w)\log\bigg(\frac{p_W(w)}{p_{\tilde{W}}(w)}\bigg)\,dw \nonumber\\
&= \dkl(W\|\tilde{W}).
\end{align}

In the first equation above we used the total probability rule. On the other hand, we also have $p_{X_0^T|W}(x_0^T|w) p_W(w)=p_{X_0^T}(x_0^T) p_{W|X_0^T}(w|x_0^T)$. Hence,
\begin{align}
\label{eqn:kl2}
\dkl(X_0^T, & W\|\tilde{X}_0^T, \tilde{W})
= \int \sum_{x_0^T} p_{X_0^T}(x_0^T) p_{W|X_0^T}(w|x_0^T)\log\bigg(\frac{p_{X_0^T}(x_0^T)}{p_{\tilde{X}_0^T}(x_0^T)}\cdot \frac{p_{W|X_0^T}(w|x_0^T)}{p_{\tilde{W}|\tilde{X}_0^T}(w|x_0^T)}\bigg)\,dw \nonumber\\
=& \int \sum_{x_0^T} p_{X_0^T}(x_0^T) p_{W|X_0^T}(w|x_0^T)\log\bigg(\frac{p_{X_0^T}(x_0^T)}{p_{\tilde{X}_0^T}(x_0^T)}\bigg)\,dw \nonumber\\
&\qquad\qquad + \int \sum_{x_0^T} p_{X_0^T}(x_0^T) p_{W|X_0^T}(w|x_0^T)\log\bigg(\frac{p_{W|X_0^T}(w|x_0^T)}{p_{\tilde{W}|\tilde{X}_0^T}(w|x_0^T)}\bigg)\,dw \nonumber\\
=& \dkl(X_0^T\|\tilde{X}_0^T)+\sum_{x_0^T}p_{X_0^T}(x_0^T)\dkl(\{W|X_0^T=x_0^T\}\,\|\,\{\tilde{W}|\tilde{X}_0^T=x_0^T\}).
\end{align}

Notice that we used $\int p_{W|X_0^T}(w|x_0^T)dw=1$ in the last step, and $\dkl(\{W|X_0^T=x_0^T\}\,\|\,\{\tilde{W}|\tilde{X}_0^T=x_0^T\})$ is the conditional KL divergence between the law of two conditioned random variables $\{W|X_0^T=x_0^T\}$ and $\{\tilde{W}|\tilde{X}_0^T=x_0^T\}$. Therefore, by non-negativity of KL divergence, combining \cref{eqn:kl1} and \cref{eqn:kl2}, we conclude that 
\[\dkl(X_0^T\|\tilde{X}_0^T)\leq \dkl(X_0^T, W\|\tilde{X}_0^T, \tilde{W}) = \dkl(W\|\tilde{W}).\label{eqn:kl3}\]

By standard results in hypothesis testing, we also know that the best testing error $\mathcal{E}^*$ is 
\[\mathcal{E}^*=\frac{1}{2}(1-\dtv(X_0^T, \tilde{X}_0^T)).\]

Using Pinsker's inequality together with \cref{eqn:kl3}, we have
\[\mathcal{E}^*\geq \frac{1}{2}\bigg(1-\sqrt{\frac{1}{2}\dkl(X_0^T\|\tilde{X}_0^T)}\bigg)\geq \frac{1}{2}\bigg(1-\sqrt{\frac{1}{2}\dkl(W\|\tilde{W})}\bigg).\]

For some sufficiently small constant $\epsilon\in(0,1)$, if we can construct $A$ and $\tilde{A}$ such that $d(A,\tilde{A})\geq \epsilon$, and $\dkl(W\|\tilde{W})= O(\epsilon^2)$, then any algorithm that tests $H_0$ against $H_1$ will incur error that is at least $\frac{1}{2}-O(\epsilon)$. The performance is dismal if $\epsilon$ is small enough (close to randomly guessing which hypothesis is correct). 

We now plug in the mixing measure, \cref{eqn:mixing}, into the KL divergence formula and obtain
\begin{align}
\label{eqn:kl4}
\dkl(W\|\tilde{W})
&=  \int \log\bigg(\frac{d\mu_{v_0,A}(w)}{dw}\bigg\slash \frac{d\mu_{v_0,\tilde{A}}(w)}{dw}\bigg)\,d\mu_{v_0,A}(w)\nonumber \\
&=  \int \log\bigg(\frac{Z_{v_0, \tilde{A}}}{Z_{v_0, A}}\cdot \frac{w_{v_0}^{\frac{1}{2}}\prod_{e\in E} w_e^{a_e-1}}{\prod_{v\in V}w_v^{\frac{1}{2}(a_v+1)}}\cdot \frac{\prod_{v\in V}w_v^{\frac{1}{2}(\tilde{a}_v+1)}}{w_{v_0}^{\frac{1}{2}}\prod_{e\in E} w_e^{\tilde{a}_e-1}}\bigg)\,d\mu_{v_0,A}(w)\nonumber \\
&= \log\frac{Z_{v_0, \tilde{A}}}{Z_{v_0, A}} + \int \log\frac{\prod_{e\in E}w_e^{a_e-\tilde{a}_e}}{\prod_{v\in V}w_v^{\frac{1}{2}(a_v-\tilde{a}_v)}}\,d\mu_{v_0,A}(w).
\end{align}

To calculate the second term in the equation above, we need to use some tricks. Fixing arbitrary $e\in E$ and taking the derivative of both sides of \cref{eqn:normalizer1} with respect to $a_e$, we obtain
\begin{align*}
\frac{\partial}{\partial a_e}Z_{v_0, A}
&= \int_{\mathbb{R}_{++}^{|E|-1}} \frac{\partial}{\partial a_e}\Bigg(\frac{w_{v_0}^{\frac{1}{2}}\prod_{e\in E}w_e^{a_e-1}} {\prod_{v\in V}w_v^{\frac{1}{2}(a_v+1)}}
\Bigg)\sqrt{\sum_{T\in\mathcal{T}}\prod_{e\in T}w_e}\,dw_{-e_0}\\
&= \int_{\mathbb{R}_{++}^{|E|-1}} \bigg(\log w_e-\frac{1}{2}\sum_{v\in e}\log w_v\bigg)\frac{w_{v_0}^{\frac{1}{2}}\prod_{e\in E}w_e^{a_e-1}} {\prod_{v\in V}w_v^{\frac{1}{2}(a_v+1)}}
\sqrt{\sum_{T\in\mathcal{T}}\prod_{e\in T}w_e}\,dw_{-e_0}\\
&= Z_{v_0, A}\int\bigg(\log w_e-\frac{1}{2}\sum_{v\in e}\log w_v\bigg)d\mu_{v_0,A}(w).
\end{align*}

Here we used the fact that $\frac{\partial}{\partial a_e}w_e^{a_e-1}=(\log w_e)w_e^{a_e-1}$. Also, for any $v\in V\backslash\{v_0\}$ that is incident with $e$, we have $\frac{\partial}{\partial a_e}w_v^{-\frac{1}{2}(a_v+1)}=(-\frac{1}{2}\log w_v)w_v^{-\frac{1}{2}(a_v+1)}$; and for $v_0$, we have $\frac{\partial}{\partial a_e}w_{v_0}^{-\frac{1}{2}a_{v_0}}=(-\frac{1}{2}\log w_{v_0})w_{v_0}^{-\frac{1}{2}a_{v_0}}$.

Scale the above equation by $(a_e-\tilde{a}_e)$ and then sum it up over all $e\in E$. We then have
\begin{align}
\label{eqn:kl5}
\sum_{e\in E}(a_e-\tilde{a}_e)\cdot \frac{\partial}{\partial a_e}Z_{v_0, A}
&= Z_{v_0, A}\cdot\sum_{e\in E}\bigg((a_e-\tilde{a}_e)\int\bigg(\log w_e-\frac{1}{2}\sum_{v\in e}\log w_v\bigg)d\mu_{v_0,A}(w)\bigg)\nonumber \\
&= Z_{v_0, A}\cdot\int \bigg(\sum_{e\in E}(a_e-\tilde{a}_e)\log \frac{w_e}{\prod_{v\in e}w_v^{\frac{1}{2}}} \bigg) d\mu_{v_0,A}(w)\nonumber \\
&= Z_{v_0, A}\cdot\int \log \frac{\prod_{e\in E}w_e^{(a_e-\tilde{a}_e)}}{\prod_{e\in E}\prod_{v\in e}w_v^{\frac{1}{2}(a_e-\tilde{a}_e)}} d\mu_{v_0,A}(w)\nonumber \\
&= Z_{v_0, A}\cdot\int \log\frac{\prod_{e\in E}w_e^{(a_e-\tilde{a}_e)}}{\prod_{v\in V}w_v^{\frac{1}{2}(a_v-\tilde{a}_v)}}\,d\mu_{v_0,A}(w).
\end{align}

Here in the last step, we used the fact that 
\[\prod_{e\in E}\prod_{v\in e}w_v^{\frac{1}{2}(a_e-\tilde{a}_e)}=\prod_{v\in V}\prod_{e:v\in e}w_v^{\frac{1}{2}(a_e-\tilde{a}_e)}=\prod_{v\in V}w_v^{\sum_{e:v\in e}\frac{1}{2}(a_e-\tilde{a}_e)}=\prod_{v\in V}w_v^{\frac{1}{2}(a_v-\tilde{a}_v)}.\] 

Therefore, combining \cref{eqn:kl4} and \cref{eqn:kl5}, we conclude that
\begin{equation}
\dkl(W\|\tilde{W})=\log\frac{Z_{v_0, \tilde{A}}}{Z_{v_0, A}}+Z_{v_0,A}^{-1}\sum_{e\in E}(a_e-\tilde{a}_e)\cdot \frac{\partial}{\partial a_e}Z_{v_0, A}.\label{eqn:kl6}
\end{equation}

Recalling the explicit formula for $Z_{v_0, A}$, \cref{eqn:normalizer2}, we have
\begin{align}
\frac{\partial}{\partial a_e}Z_{v_0, A}
=& \frac{\partial}{\partial a_e}\bigg(\frac{\pi^{\frac{1}{2}(|V|-1)}}{2^{1-|V|+\sum_{e\in E}a_e}}\cdot \frac{\prod_{e\in E}\Gamma(a_e)}{\prod_{v\in V}\Gamma(\frac{1}{2}(a_v+1-\ind_{v=v_0}))}\bigg)\nonumber \\
=& \frac{\partial}{\partial a_e}\bigg(\frac{\pi^{\frac{1}{2}(|V|-1)}}{2^{1-|V|+\sum_{e\in E}a_e}}\bigg)\cdot \frac{\prod_{e\in E}\Gamma(a_e)}{\prod_{v\in V}\Gamma(\frac{1}{2}(a_v+1-\ind_{v=v_0}))}\nonumber \\
&\qquad\qquad+ \frac{\pi^{\frac{1}{2}(|V|-1)}}{2^{1-|V|+\sum_{e\in E}a_e}}\cdot \frac{\partial}{\partial a_e}\bigg(\frac{\prod_{e\in E}\Gamma(a_e)}{\prod_{v\in V}\Gamma(\frac{1}{2}(a_v+1-\ind_{v=v_0}))}\bigg)\nonumber \\
=& -\log 2\bigg(\frac{\pi^{\frac{1}{2}(|V|-1)}}{2^{1-|V|+\sum_{e\in E}a_e}}\cdot \frac{\prod_{e\in E}\Gamma(a_e)}{\prod_{v\in V}\Gamma(\frac{1}{2}(a_v+1-\ind_{v=v_0}))}\bigg)\nonumber \\
&+ \Big(\psi(a_e)-\sum_{v\in e}\frac{1}{2}\psi\Big(\frac{1}{2}(a_v+1-\ind_{v=v_0})\Big)\Big)\frac{\pi^{\frac{1}{2}(|V|-1)}}{2^{1-|V|+\sum_{e\in E}a_e}}\cdot \frac{\prod_{e\in E}\Gamma(a_e)}{\prod_{v\in V}\Gamma(\frac{1}{2}(a_v+1-\ind_{v=v_0}))}\nonumber \\
=&\Big(-\log 2 +\psi(a_e)-\sum_{v\in e}\frac{1}{2}\psi\Big(\frac{1}{2}(a_v+1-\ind_{v=v_0})\Big)\Big)Z_{v_0, A}.\label{eqn:kl7}
\end{align}

Here we used the notation for the digamma function $\psi(x):=\frac{d}{dx}\log \Gamma(x)=\frac{\Gamma'(x)}{\Gamma(x)}$. In the third equation, we used $\frac{d}{d x}\Gamma(x)=\Gamma'(x)=\psi(x)\Gamma(x)$, and $\frac{d}{dx}\frac{1}{\Gamma(x)}=-\frac{1}{(\Gamma(x))^2}\Gamma'(x)=-\frac{\psi(x)}{\Gamma(x)}$. Plugging \cref{eqn:kl7} into \cref{eqn:kl6}, and using \cref{eqn:normalizer2} again, we derive that
\begin{align}
\label{eqn:kl8}
\dkl(W\|\tilde{W})
=& \log\frac{Z_{v_0, \tilde{A}}}{Z_{v_0, A}}+\sum_{e\in E}(a_e-\tilde{a}_e) \Big(-\log 2 +\psi(a_e)-\sum_{v\in e}\frac{1}{2}\psi\Big(\frac{1}{2}(a_v+1-\ind_{v=v_0})\Big)\Big)\nonumber \\
=& \log\bigg(\frac{2^{\sum_{e\in E}a_e}}{2^{\sum_{e\in E}\tilde{a}_e}}\cdot \frac{\prod_{e\in E}\Gamma(\tilde{a}_e)}{\prod_{v\in V}\Gamma(\frac{1}{2}(\tilde{a}_v+1-\ind_{v=v_0}))} \cdot \frac{\prod_{v\in V}\Gamma(\frac{1}{2}(a_v+1-\ind_{v=v_0}))}{\prod_{e\in E}\Gamma(a_e)}\bigg)\nonumber \\
& \qquad +\sum_{e\in E}(a_e-\tilde{a}_e) \Big(-\log 2 +\psi(a_e)-\sum_{v\in e}\frac{1}{2}\psi\Big(\frac{1}{2}(a_v+1-\ind_{v=v_0})\Big)\Big)\nonumber \\
=& \sum_{e\in E}(a_e-\tilde{a}_e)\log 2 
+ \sum_{v\in V}\log\frac{\Gamma(\frac{1}{2}(a_v+1-\ind_{v=v_0}))}{\Gamma(\frac{1}{2}(\tilde{a}_v+1-\ind_{v=v_0}))}
+ \sum_{e\in E}\log \frac{\Gamma(\tilde{a}_e)}{\Gamma(a_e)}\nonumber \\
& \qquad +\sum_{e\in E}(a_e-\tilde{a}_e)\Big(-\log 2 +\psi(a_e)-\sum_{v\in e}\frac{1}{2}\psi\Big(\frac{1}{2}(a_v+1-\ind_{v=v_0})\Big)\Big).
\end{align}

We now construct the following instance. Fix a specific edge $\tilde{e}\in E$ that is not incident with $v_0$ and consider a connected $d$-regular graph with vertex set $V=[n]$ together with weights $a_e=1, \forall e\in E$ and $\tilde{a}_e=1+\epsilon \ind(e=\tilde{e}), \forall e\in E$. 
We want to test the hypothesis $H_0:L_0=A$ against $H_1:L_0=\tilde{A}$. We clearly have $d(A,\tilde{A})=\epsilon$. However, by plugging the values into \cref{eqn:kl8}, it is not hard to compute that
\begin{align*}
\dkl(W\|\tilde{W})
=& -\epsilon\log 2 
+ 2\log\frac{\Gamma(\frac{d+1}{2})}{\Gamma(\frac{d+1}{2}+\frac{\epsilon}{2})} + \log \Gamma(1+\epsilon) +\epsilon \Big(\log 2 -\psi(1)+\psi\Big(\frac{d+1}{2}\Big)\Big)\\
=&
- 2\log \Gamma\Big(\frac{d+1}{2}+\frac{\epsilon}{2}\Big)+2\log \Gamma\Big(\frac{d+1}{2}\Big)+ \log \Gamma(1+\epsilon) +\gamma\epsilon+\psi\Big(\frac{d+1}{2}\Big)\epsilon.
\end{align*}

Here we used the fact that $\Gamma(1)=1$ and $\psi(1)=-\gamma$, where $\gamma\approx 0.577$ is the Euler constant. Using the Taylor expansion, it's not hard to see that $\log\Gamma(1+\epsilon)=-\gamma \epsilon+\frac{1}{2}\psi'(1)\epsilon^2+O(\epsilon^3)$, and
\[\log\Gamma\Big(\frac{d+1}{2}+\frac{\epsilon}{2}\Big)=\log\Gamma\Big(\frac{d+1}{2}\Big)+\psi\Big(\frac{d+1}{2}\Big) \frac{\epsilon}{2} + \frac{1}{2}\psi'\Big(\frac{d+1}{2}\Big) \frac{\epsilon^2}{4} +O(\epsilon^3).\]

Using these Taylor approximations, we can compute that
\[\dkl(W\|\tilde{W})
= \bigg(2\psi'(1)-\psi'\bigg(\frac{d+1}{2}\bigg)\bigg)\frac{\epsilon^2}{4}+O(\epsilon^3).\]

Since $\psi'$ is positive on $(0,\infty)$ and strictly decreasing, we have $2\psi'(1)> 2\psi'(1)-\psi'(\frac{d+1}{2})> \psi'(1)>0$ whenever $d\geq 2$. This means that $\dkl(W\|\tilde{W})= O(\epsilon^2)$. By the previous discussion and Pinsker's inequality, we know that any testing algorithm will have test error $\geq \mathcal{E}^*= \frac{1}{2}-O(\epsilon)$, which would be close to $\frac{1}{2}$ when $\epsilon$ is a small constant. Therefore, for any small enough constant $\epsilon>0$, there exists an instance where $d(A,\tilde{A})\geq \epsilon$, but any testing algorithm will have performance close to random guessing in terms of distinguishing whether $H_0$ or $H_1$ is true if only given a single trajectory, even if it is infinitely long.

\begin{remark}[Positive-integer-valued edge weights]
A special case would be the scenario when all the initial edge weights are positive integers within some fixed range $\{\underline{a}, \underline{a}+1, \cdots, \overline{a}-1, \overline{a}\}$ for some $\underline{a}<\overline{a}$ both in $\mathbb{N}_+$. In this case, since there are only $(\overline{a}-\underline{a}+1)^{|E|}$ possible choices of initial edge weights $A$, one may expect to pin it down given enough samples. 

However, essentially the same analysis as above can be used to conclude that a single trajectory would not be enough in this case either. To see this, note that for any two different initial weights $A, \tilde{A}$ in this case, the optimal testing error is $\mathcal{E}^*=\frac{1}{2}(1-\dtv(X_0^T, \tilde{X}_0^T))\geq \frac{1}{2}(1-\dtv(W, \tilde{W}))$, where we used data-processing inequality for total variation distance in the second step. According to the magic formula \cref{eqn:mixing}, $\dtv(W, \tilde{W})$ is going to be some positive quantity bounded away from $1$, because $d\mu_{v_0,A}(w_{-{e_0}})$ and $d\mu_{v_0,\tilde{A}}(w_{-{e_0}})$ share the same support and each has a positive continuous density. Therefore, $\mathcal{E}^*$ will be bounded away from $0$ and a single infinitely long trajectory will not be able to drive the testing error to $0$.
\hfill $\Box$
\end{remark}

This suggests that we have to use multiple trajectories if we want to recover the parameters $A$. 
Based on the coupling with RWRE, we will explore the following natural two-level learning algorithm:
\begin{itemize}
    \item From the independent trajectories $X(1)_0^T, \cdots, X(K)_0^T$, we construct estimators for the corresponding random environments $W^{(1)}, W^{(2)}\cdots, W^{(K)}$, denoted as $\hat{W}^{(1)}, \cdots, \hat{W}^{(K)}$;
    \item Interpret the estimates $\hat{W}^{(1)}, \cdots, \hat{W}^{(K)}$ as approximately i.i.d. samples from the mixing measure $d\mu_{v_0,A}$, and recover $A$ from these approximate samples.
\end{itemize}

To make the algorithm work, we need $T$ large enough so we can estimate $W$'s well enough. At the same time, we also need $K$ large enough so we can then estimate $A$ well enough. Informally speaking, the above algorithm follows from the philosophy that $A\rightarrow W\rightarrow X_0^T$ is a Markov chain, so any information we can infer about $A$ from $X_0^T$ is essentially contained in the information that we know about $W$ from $X_0^T$. Hence, inferring $W$ given $X_0^T$ first and then inferring $A$ from $W$ gives a natural algorithm framework for our purpose. The algorithm we are going to present has a more intricate design than the outlined scheme, but its core concept still centers on leveraging the random environment inherent in ERRW.

\subsection{The maximum likelihood estimator}

Before discussing our algorithm, we deviate a bit to consider the following maximum likelihood estimator and see why it may be difficult to make it work.\footnote{We do not exclude the possibility of using this method in practice. It could still perform well on empirical data despite dismal performance in the worst case, and many local optimization algorithms like gradient descent are very efficient.} Given $K$ trajectories $\{X(i)_0^T: i\in [K]\}$, let's denote the number of undirected edge crossings in 
the trajectory $k \in [K]$ as $\{M_e(k), e\in E\}$. In other words, we have $M_e(k):=L_T^e(k)-L_0^e$, for any $e\in E$ and $k\in[K]$. For every
$k \in [K]$ let $\{N_v(k), v \in V\}$
denote the number of outgoing transitions from
vertex $v$ (note that we have 
$\sum_{v \in V} N_v(k) = T$ for all $k \in [K]$).
Then we have
\begin{equation*}
\Perrw^{v_0,A}(X(k)_0^T=x(k)_0^T, \forall k\leq K)=\prod_{k=1}^K \frac{\prod_{e\in E}\prod_{i=0}^{m_e(k)-1}(a_e+i)}{\prod_{i=0}^{n_{v_0}(k)-1}(a_{v_0}+2i) \prod_{v\in V\backslash\{v_0\}}\prod_{i=0}^{n_v(k)-1}(a_v+2i+1)}.
\end{equation*}

This is rigorously proved as Lemma 3.5 in \cite{kr00}. 
A natural estimator can be obtained via finding the optimal 
$(a^*_e, e \in E)$
that maximizes the log-likelihood. However, finding this optimizer can be computationally hard, as this log-likelihood function is non-concave in the parameters $\{a_e:\,e\in E\}$ in general. 

\subsection{Estimation via a generalized moment method}

To recover $A$, let's construct some statistics to help the estimation procedure. Note that, by the magic formula, fixing a special vertex $v_0\in V$ and a special edge $e_0\in E$ that is incident with $v_0$, we know that
\[d\mu_{v_0,A}(w_{-e_0}) = 
Z_{v_0, A}^{-1}\cdot\frac{w_{v_0}^{\frac{1}{2}}\prod_{e\in E}w_e^{a_e-1}} {\prod_{v\in V}w_v^{\frac{1}{2}(a_v+1)}}
\sqrt{\sum_{T\in\mathcal{T}}\prod_{e\in T}w_e}\,dw_{-e_0},\]
with the normalizing factor $Z_{v_0, A}$ given as in \cref{eqn:normalizer2}. Recall that in this formula we have $w_{e_0} =1$.

Without loss of generality, we can assume that the random walk starts at a vertex $v_0$ that's of degree $\deg(v_0)\geq 2$.\footnote{The case where we start from a degree-one node $v_0$ with initial weights $A$ is equivalent to the case where we start from the 
unique
neighbor of $v_0$ with initial weights $A'$ that differ from $A$ only by $1$ at the unique edge that $v_0$ is incident with.} Denote the transition matrix induced by the random weights $W$ by $(P_{ij})_{i,j\in V}$. Thus $P_{ij}=\frac{W_{ij}}{W_i}$ for any $i,j\in V$. Note that $P_{ij}$ is thought of as a random variable.

Define the matrix $U$ by
\begin{equation}        \label{eqn:udefinition}
U_{ij} :=
\begin{cases} 
 P_{ij}P_{ji}, \quad \text{ if } e =\{i,j\} \in E,\\
 0, \quad \text{ if } i \nsim j.
\end{cases}
\end{equation}
Note that $U_{ij} = U_{ji}$ when $i \sim j$,
so we can write this common value as $U_e$, where
$e = \{i,j\} \in E$.
We remark that similar quantities are also studied and used 
in~\cite{mr2} (e.g., Equation (6)) to prove the magic formula. 
We claim that we then have the following moment formulas involving $\Eerrw^{v_0,A} (\sqrt{U_e})$, $\Eerrw^{v_0,A} (U_e)$, etc. 

\vspace{.3em}
\begin{lem}[Moment formulas]
\label{lem:moment}
    Consider an ERRW on a connected graph $G=(V, E)$. Given a vertex $v_0\in V$ with degree $\deg(v_0)\geq 2$ and initial edge weights $A\in\mathbb{R}_{++}^E$, we sample a random environment $W$ according to the mixing measure in \cref{eqn:mixing}. Denote the random transition matrix induced by the random weight $W$ as $P=(P_{i,j})_{i,j\in V}$, and let $U_e:=P_{ij}P_{ji}$ for any $e=\{i,j\}\in E$. Then we have
    \begin{align*}
    \Eerrw^{v_0,A}(\sqrt{U_e})
    =& \frac{a_e}{2}\cdot \frac{\Gamma(\frac{1}{2}(a_i+1-\ind_{i=v_0}))\Gamma(\frac{1}{2}(a_j+1-\ind_{j=v_0}))}{\Gamma(\frac{1}{2}(a_i+2-\ind_{i=v_0}))\Gamma(\frac{1}{2}(a_j+2-\ind_{j=v_0}))},\, \forall e=\{i,j\}\in E;\\
    \Eerrw^{v_0,A} (U_e)
    =& \frac{a_e(a_e+1)}{(a_i+1-\ind_{i=v_0})(a_j+1-\ind_{j=v_0})}, \hspace{6.3em} \forall e=\{i,j\}\in E;\\
    \Eerrw^{v_0,A}(U_e^2)
    =& \frac{a_e(a_e+1)(a_e+2)(a_e+3)}{(a_i+1-\ind_{i=v_0})(a_i+3-\ind_{i=v_0})(a_j+1-\ind_{j=v_0})(a_j+3-\ind_{j=v_0})},\\
    & \hspace{20.3em} \forall e=\{i,j\}\in E;\\
    \Eerrw^{v_0,A} (U_e U_{e'})
    =& \frac{a_e(a_e+1)}{(a_i+1-\ind_{i=v_0})(a_j+1-\ind_{j=v_0})}\cdot \frac{a_{e'}(a_{e'}+1)}{(a_k+1-\ind_{k=v_0})(a_l+1-\ind_{l=v_0})}, \\
    & \hspace{16em} \forall e=\{i,j\}, e'=\{k,l\}\in E;\\
    \Eerrw^{v_0,A} (U_e U_{e'})
    &= \frac{a_e(a_e+1)a_{e'}(a_{e'}+1)}{(a_i+1-\ind_{i=v_0})(a_k+1-\ind_{k=v_0})(a_j+1-\ind_{j=v_0})(a_j+3-\ind_{j=v_0})}, \\
    & \hspace{16em} \forall e=\{i,j\}, e'=\{j,k\}\in E.\\
    \end{align*}
    Here $\Gamma(x), \forall x>0$ is the Gamma function, and in the conditions above we assume that $i,j,k,l$ are mutually distinct vertices.
\end{lem}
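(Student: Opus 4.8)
The plan is to reduce every moment on the list to an integral against the mixing measure $d\mu_{v_0,A}$ and then recognize each such integral as a ratio of normalizing constants $Z_{v_0,\cdot}$, which are evaluated in closed form by \cref{eqn:normalizer2}. The key observation is that $U_e = P_{ij}P_{ji} = \frac{W_e^2}{w_i w_j}$ for $e = \{i,j\}$, so powers of $U_e$ (and products over edges) multiply the density in \cref{eqn:mixing} by factors of the form $\prod_e w_e^{c_e} / \prod_v w_v^{c_v/2}$ — exactly the shape of the density itself. Concretely, $\sqrt{U_e} = w_e w_i^{-1/2} w_j^{-1/2}$, so $\sqrt{U_e}\, d\mu_{v_0,A}(w)$ has, up to the constant $Z_{v_0,A}^{-1}$, the same functional form as $d\mu_{v_0,B}(w)$ where $B$ is obtained from $A$ by replacing $a_e \mapsto a_e + 1$ and, correspondingly, $a_i \mapsto a_i+1$, $a_j \mapsto a_j+1$ (since bumping the edge weight bumps both incident vertex sums). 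Hence $\Eerrw^{v_0,A}(\sqrt{U_e}) = Z_{v_0,B}/Z_{v_0,A}$, and plugging in \cref{eqn:normalizer2} collapses the $\pi$-powers, the $2$-powers, and all Gamma factors except those at $a_e$ and at the two incident vertices, yielding $\frac{\Gamma(a_e+1)}{\Gamma(a_e)}\cdot\frac{\Gamma(\frac12(a_i+1-\ind_{i=v_0}))}{\Gamma(\frac12(a_i+2-\ind_{i=v_0}))}\cdot\frac{\Gamma(\frac12(a_j+1-\ind_{j=v_0}))}{\Gamma(\frac12(a_j+2-\ind_{j=v_0}))}$. Using $\Gamma(a_e+1) = a_e\Gamma(a_e)$ and noting the $2$-power shift contributes a factor that combines with the two half-integer Gamma shifts — here one must track that bumping one $a_e$ by $1$ changes $\sum_e a_e$ by $1$, giving a $2^{-1}$, which is the source of the $\frac{a_e}{2}$ prefactor — produces exactly the first formula.

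For $\Eerrw^{v_0,A}(U_e)$, $\Eerrw^{v_0,A}(U_e^2)$, and the two product formulas, I would run the identical argument but now with integer bumps: $U_e = w_e^2 w_i^{-1} w_j^{-1}$ corresponds to $a_e \mapsto a_e+2$, $a_i \mapsto a_i + 2$, $a_j \mapsto a_j+2$, and so $\Eerrw^{v_0,A}(U_e) = Z_{v_0,B'}/Z_{v_0,A}$ for that $B'$; since all shifts are by even integers the $2$-power and $\pi$-power discrepancies cancel cleanly (the exponent of $2$ shifts by $\sum_e \Delta a_e = 2$, cancelling against nothing problematic because the vertex Gamma arguments are half-integers shifted by integers), and the ratio telescopes to $\frac{\Gamma(a_e+2)}{\Gamma(a_e)}\cdot\frac{\Gamma(\frac12(a_i+1-\ind_{i=v_0}))}{\Gamma(\frac12(a_i+3-\ind_{i=v_0}))}\cdot(\text{same for }j) = \frac{a_e(a_e+1)}{(a_i+1-\ind_{i=v_0})(a_j+1-\ind_{j=v_0})}$, using $\Gamma(x+1)=x\Gamma(x)$ twice on each half-integer factor. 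The $U_e^2$ case uses bumps of $4$ on $a_e$ and $2$ on each relevant vertex sum — wait, more carefully, $U_e^2 = w_e^4 w_i^{-2} w_j^{-2}$, so $a_e \mapsto a_e+4$ while the vertex exponents pick up total shift $2$ at $i$ and $2$ at $j$ from the edge; but the vertex Gamma arguments advance by $2$, i.e.\ two applications of $\Gamma(x+1)=x\Gamma(x)$, giving the $(a_i+1)(a_i+3)$-type denominators. For the two product formulas ($e,e'$ disjoint versus sharing vertex $j$), the bumps land on disjoint or overlapping vertex sums respectively; in the disjoint case the integral factors into a product of two independent-looking ratios, and in the shared case the vertex $j$ absorbs a total edge-bump of $4$, advancing $\Gamma(\frac12(a_j+1-\ind))$ by $2$ to produce the $(a_j+1-\ind)(a_j+3-\ind)$ denominator, exactly as written.

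The main technical obstacle is bookkeeping the $2$-power and the shared-vertex interactions correctly: the normalizer \cref{eqn:normalizer2} has the delicate feature that the exponent of $2$ is $\sum_e a_e$ while the vertex Gamma arguments are $\frac12(a_v + 1 - \ind_{v=v_0})$, so a bump $\Delta$ on edge $e=\{i,j\}$ changes the $2$-exponent by $\Delta$ but each incident vertex argument by $\Delta/2$; the factor-of-two mismatch is precisely what manufactures the $\frac{a_e}{2}$ in the $\sqrt{U_e}$ formula and must cancel exactly in the integer-power cases. I also need to double-check the $\ind_{v=v_0}$ term survives untouched (it does, since $v_0$ is fixed and bumping $a_e$ does not change which vertex is the start), and that the prefactor $w_{v_0}^{1/2}$ and the spanning-tree factor $\sqrt{\sum_T \prod_{e\in T} w_e}$ in \cref{eqn:mixing} — neither of which depends on $A$ — simply ride along and are absorbed into the new $d\mu_{v_0,B}$ with its own (now $B$-dependent) normalizer $Z_{v_0,B}$; this is the step where one uses that the closed-form \cref{eqn:normalizer2} already accounts for those factors, so no separate evaluation of a spanning-tree sum is ever needed. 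Once the bookkeeping is pinned down, each formula is a two-line Gamma-ratio simplification.
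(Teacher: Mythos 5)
Your approach is identical to the paper's: recognize $\sqrt{U_e} = w_e/\sqrt{w_iw_j}$ and $U_e = w_e^2/(w_iw_j)$ so that each moment is a ratio $Z_{v_0,B}/Z_{v_0,A}$ of normalizers for a suitably bumped weight vector $B$, then evaluate via the closed form in \cref{eqn:normalizer2}. Your bookkeeping of the $2$-power shift and the half-integer vertex Gamma arguments is correct in substance (the $\frac{a_e}{2}$ prefactor comes from $2^{-1}\cdot\Gamma(a_e+1)/\Gamma(a_e)$, with the vertex Gamma ratios left unsimplified since the shift is by $\tfrac12$), and the integer-power cases telescope exactly as you describe.
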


\begin{proof}
Consider some edge $e'=\{i,j\}$, we know that $\sqrt{P_{ij}P_{ji}}=\frac{W_{e'}}{\sqrt{W_iW_j}}$, and
\begin{equation}
\label{eqn:moment1}
\Eerrw^{v_0,A} (\sqrt{U_{e'}}) = \int \frac{w_{e'}}{\sqrt{w_iw_j}} d\mu_{v_0,A}(w) = Z_{v_0, A}^{-1}\int \frac{w_{v_0}^{\frac{1}{2}}\prod_{e\in E}w_e^{a_e'-1}} {\prod_{v\in V}w_v^{\frac{1}{2}(a_v'+1)}}
\sqrt{\sum_{T\in\mathcal{T}}\prod_{e\in T}w_e}\,dw_{-e_0}.
\end{equation}

In the above equation, we let $a_e':=a_e+\ind(e=e'), \forall e\in E$. From \cref{eqn:normalizer1}, we know that
\[\int \frac{w_{v_0}^{\frac{1}{2}a_{v_0}'}\prod_{e\in E}w_e^{a_e'-1}} {\prod_{v\in V}w_v^{\frac{1}{2}(a_v'+1)}}
\sqrt{\sum_{T\in\mathcal{T}}\prod_{e\in T}w_e}\,dw_{-e_0}=Z_{v_0,A'},\]hence we conclude that
\begin{align}
\label{eqn:moment2}
\Eerrw^{v_0,A}(\sqrt{U_{e'}})
= \frac{Z_{v_0, A'}}{Z_{v_0, A}}
&= \frac{2^{\sum_{e\in E}a_e}}{2^{\sum_{e\in E}a_e'}}\cdot \frac{\prod_{v\in V}\Gamma(\frac{1}{2}(a_v+1-\ind_{v=v_0}))}{\prod_{e\in E}\Gamma(a_e)}\cdot \frac{\prod_{e\in E}\Gamma(a_e')}{\prod_{v\in V}\Gamma(\frac{1}{2}(a_v'+1-\ind_{v=v_0}))}\nonumber \\
&= 2^{\sum_{e\in E}(a_e-a_e')}\cdot \frac{\prod_{v\in V}\Gamma(\frac{1}{2}(a_v+1-\ind_{v=v_0}))\prod_{e\in E}\Gamma(a_e')}{\prod_{v\in V}\Gamma(\frac{1}{2}(a_v'+1-\ind_{v=v_0})) \prod_{e\in E}\Gamma(a_e)}\nonumber \\
&= \frac{1}{2}\cdot \frac{\Gamma(\frac{1}{2}(a_i+1-\ind_{i=v_0}))\Gamma(\frac{1}{2}(a_j+1-\ind_{i=v_0}))}{\Gamma(\frac{1}{2}(a_i+2-\ind_{i=v_0}))\Gamma(\frac{1}{2}(a_j+2-\ind_{j=v_0}))}\cdot \frac{\Gamma(a_{e'}+1)}{\Gamma(a_{e'})}\nonumber \\
&= \frac{a_{e'}}{2}\cdot \frac{\Gamma(\frac{1}{2}(a_i+1-\ind_{i=v_0}))\Gamma(\frac{1}{2}(a_j+1-\ind_{j=v_0}))}{\Gamma(\frac{1}{2}(a_i+2-\ind_{i=v_0}))\Gamma(\frac{1}{2}(a_j+2-\ind_{j=v_0}))}.
\end{align}

In the second equality above, we used \cref{eqn:normalizer2} again.  This proves the first equation in \cref{lem:moment}. Similarly, we also have
\[\Eerrw^{v_0,A} (U_{e'})
= \int \frac{w_{e'}^2}{w_iw_j} d\mu_{v_0,A}(w)= \frac{Z_{v_0, A''}}{Z_{v_0, A}}.\]
Here $a_e''=a_e+2\cdot\ind(e=e'), \forall e\in E$. We can then use \cref{eqn:normalizer2} to conclude that
\begin{align}
\label{eqn:moment3}
\Eerrw^{v_0,A} (U_{e'})
&= \frac{1}{4}\cdot \frac{\Gamma(\frac{1}{2}(a_i+1-\ind_{i=v_0}))\Gamma(\frac{1}{2}(a_j+1-\ind_{i=v_0}))}{\Gamma(\frac{1}{2}(a_i+3-\ind_{i=v_0}))\Gamma(\frac{1}{2}(a_j+3-\ind_{j=v_0}))}\cdot \frac{\Gamma(a_{e'}+2)}{\Gamma(a_{e'})}\nonumber \\
&= \frac{1}{4}\cdot \frac{1}{\frac{1}{2}(a_i+1-\ind_{i=v_0})\cdot \frac{1}{2}(a_j+1-\ind_{j=v_0})}\cdot a_{e'}(a_{e'}+1)\nonumber \\
&= \frac{a_{e'}(a_{e'}+1)}{(a_i+1-\ind_{i=v_0})(a_j+1-\ind_{j=v_0})}.
\end{align}

Here we used the fact that $\frac{\Gamma(x+1)}{\Gamma(x)}=x$ for all $x>0$. This proves the second equation in the lemma.

Now let us fix any two edges $e'=\{i,j\}, e''=\{k,l\}\in E$ s.t. $|e'\cap e''|=0$, and consider the random variable $U_{e'} U_{e''}$. Since $|e'\cap e''|=0$, it's straightforward to compute that 
\begin{align}
\label{eqn:moment4}
\Eerrw^{v_0,A} (U_{e'} U_{e''})
=& \int \frac{w_{e'}^2}{w_iw_j}\cdot \frac{w_{e''}^2}{w_kw_l} d\mu_{v_0,A}(w) = \frac{Z_{v_0, A^{(3)}}}{Z_{v_0,A}}\nonumber \\
=& \frac{1}{4}\, \frac{\Gamma(\frac{1}{2}(a_i+1-\ind_{i=v_0}))\Gamma(\frac{1}{2}(a_j+1-\ind_{i=v_0}))}{\Gamma(\frac{1}{2}(a_i+3-\ind_{i=v_0}))\Gamma(\frac{1}{2}(a_j+3-\ind_{j=v_0}))}\cdot \frac{\Gamma(a_{e'}+2)}{\Gamma(a_{e'})}\nonumber \\
& \cdot \frac{1}{4}\, \frac{\Gamma(\frac{1}{2}(a_k+1-\ind_{i=v_0}))\Gamma(\frac{1}{2}(a_l+1-\ind_{i=v_0}))}{\Gamma(\frac{1}{2}(a_k+3-\ind_{i=v_0}))\Gamma(\frac{1}{2}(a_l+3-\ind_{j=v_0}))}\cdot \frac{\Gamma(a_{e''}+2)}{\Gamma(a_{e''})}\nonumber \\
=& \frac{a_{e'}(a_{e'}+1)}{(a_i+1-\ind_{i=v_0})(a_j+1-\ind_{j=v_0})}\cdot \frac{a_{e''}(a_{e''}+1)}{(a_k+1-\ind_{k=v_0})(a_l+1-\ind_{l=v_0})}.
\end{align}

The intermediate steps are fairly similar to those in \cref{eqn:moment1} and \cref{eqn:moment2}, while we have $a_e^{(3)}=a_e+2\cdot\ind(e=e')+2\cdot \ind(e=e'')$ here. On the other hand, if $e'=\{i,j\}$, $e''=\{j,k\}$ for $i\neq k$, i.e., $|e\cap e'|=1$, then 
\begin{align}
\label{eqn:moment5}
\Eerrw^{v_0,A} (U_{e'} U_{e''})
=& \int \frac{w_{e'}^2}{w_iw_j}\cdot \frac{w_{e''}^2}{w_jw_k} d\mu_{v_0,A}(w)\nonumber \\
=& \int \frac{w_{e'}^2w_{e''}^2}{w_iw_j^2w_k} d\mu_{v_0,A}(w) =\frac{Z_{v_0,A^{(3)}}}{Z_{v_0,A}}\nonumber \\
=& \frac{1}{16}\, \frac{\Gamma(\frac{1}{2}(a_i+1-\ind_{i=v_0}))\Gamma(\frac{1}{2}(a_k+1-\ind_{k=v_0}))}{\Gamma(\frac{1}{2}(a_i+3-\ind_{i=v_0}))\Gamma(\frac{1}{2}(a_k+3-\ind_{k=v_0}))}\nonumber \\
&\qquad\qquad \cdot \frac{\Gamma(\frac{1}{2}(a_j+1-\ind_{i=v_0}))}{\Gamma(\frac{1}{2}(a_j+5-\ind_{j=v_0}))}\cdot \frac{\Gamma(a_{e'}+2)}{\Gamma(a_{e'})}\cdot \frac{\Gamma(a_{e''}+2)}{\Gamma(a_{e''})}\nonumber \\
=& \frac{a_{e'}(a_{e'}+1)a_{e''}(a_{e''}+1)}{(a_i+1-\ind_{i=v_0})(a_k+1-\ind_{k=v_0})(a_j+1-\ind_{j=v_0})(a_j+3-\ind_{j=v_0})}.
\end{align}

Notice that since $e'$ and $e''$ have an intersection, so we have $a_j^{(3)}=a_j+4$ in this case, compared to the previous scenario. 

Similarly, we can compute $\Eerrw^{v_0,A}(U_{e'}^2)$ as
\begin{align}
\label{eqn:moment6}
\Eerrw^{v_0,A}(U_{e'}^2)
=& \int \frac{w_{e'}^4}{w_i^2w_j^2} d\mu_{v_0,A}(w) \nonumber \\
=& \frac{1}{16}\, \frac{\Gamma(\frac{1}{2}(a_i+1-\ind_{i=v_0}))\Gamma(\frac{1}{2}(a_j+1-\ind_{i=v_0}))}{\Gamma(\frac{1}{2}(a_i+5-\ind_{i=v_0}))\Gamma(\frac{1}{2}(a_j+5-\ind_{j=v_0}))}\cdot \frac{\Gamma(a_{e'}+4)}{\Gamma(a_{e'})}\nonumber \\
=& \frac{a_{e'}(a_{e'}+1)(a_{e'}+2)(a_{e'}+3)}{(a_i+1-\ind_{i=v_0})(a_i+3-\ind_{i=v_0})(a_j+1-\ind_{j=v_0})(a_j+3-\ind_{j=v_0})}.
\end{align}

The intermediate steps are again similar to those in \cref{eqn:moment1} and \cref{eqn:moment2}.
\end{proof}

If we could obtain estimates for each $\Eerrw^{v_0,A}(U_e)$, $e \in E$, from multiple trajectories of the reinforced random walk, then we can try to solve the set of equations
\begin{equation*}
\begin{aligned}
&\Eerrw^{v_0,A} (U_e)=\frac{a_e(a_e+1)}{(a_i+1-\ind_{i=v_0})(a_j+1-\ind_{j=v_0})}, \quad& \forall e = \{i,j\} \in E,\\
&a_v=\sum_{e:v\in e} a_e, \quad& \forall v\in V.
\end{aligned} 
\end{equation*}
with $\Eerrw^{v_0,A}(U_e)$ substituted for by our estimate for it. However, it is non-trivial to solve for the initial weights $A$ directly from this set of equations. But from these equations, we know that if we can approximate the value $(a_i+1-\ind_{i=v_0})(a_j+1-\ind_{j=v_0})\Eerrw^{v_0, A} (U_e)$, for
$e = \{i,j\}$, within multiplicative factor $(1\pm \epsilon)$, then we can approximate $a_e(a_e+1)$ within the same multiplicative factor, and moreover, we should be able to recover $a_e$ up to multiplicative error $(1\pm \epsilon)$ since $(1-\epsilon)x(x+1)\leq \hat{x}(\hat{x}+1)\leq (1+\epsilon)x(x+1), x,\hat{x}> 0$ implies $\hat{x}\in (1\pm\epsilon)x$.\footnote{
Note that if $0<\hat{x}< (1-\epsilon)x$, then $\hat{x}^2+\hat{x}< (1-\epsilon)^2x^2+(1-\epsilon)x 
<(1-\epsilon)x^2+(1-\epsilon)x$, a contradiction. Similarly, if $\hat{x}>(1+\epsilon)x$, then $\hat{x}^2+\hat{x}>(1+\epsilon)^2x+(1+\epsilon)x>(1+\epsilon)x^2+(1+\epsilon)x$, a contradiction. Therefore, we must have $\hat{x}\in(1\pm\epsilon)x$. } To estimate terms like $(a_i+1-\ind_{i=v_0})(a_j+1-\ind_{j=v_0})\Eerrw^{v_0, A} (U_e)$, for $e = \{i,j\}$, up to some multiplicative error, it suffices to approximate $\{\Eerrw^{v_0,A}(U_e), \forall e\in E\}$ and $\{o_i:=a_i+1-\ind_{i=v_0}, \forall i \in V\}$ up to some multiplicative error $\epsilon$ respectively. 

Consider two edges $e=\{i,j\}, e'=\{j,k\}$ such that $i\neq k$. Recall that by \cref{eqn:moment3} we have
\[\Eerrw^{v_0,A} (U_e)=\frac{a_e(a_e+1)}{(a_i+1-\ind_{i=v_0})(a_j+1-\ind_{j=v_0})},\quad \Eerrw^{v_0,A} (U_{e'})=\frac{a_{e'}(a_{e'}+1)}{(a_j+1-\ind_{j=v_0})(a_k+1-\ind_{k=v_0})}.\]

Using these in \cref{eqn:moment5}, we obtain
\[\Eerrw^{v_0,A} (U_eU_{e'})=\Eerrw^{v_0,A} (U_e)\Eerrw^{v_0,A} (U_{e'})\cdot \frac{a_j+1-\ind_{j=v_0}}{a_j+3-\ind_{j=v_0}}=\Eerrw^{v_0,A} (U_e)\Eerrw^{v_0,A} (U_{e'})\cdot \frac{o_j}{o_j+2}, \]
where we
recall that we defined $o_j:=a_j+1-\ind_{j=v_0}$. This equation indicates that we can estimate $o_j$'s via estimating the second moment $V_{e,e'}:=\Eerrw^{v_0,A} (U_eU_{e'})$ and the first moments $\Eerrw^{v_0,A}(U_e)$, $\Eerrw^{v_0,A} (U_{e'})$. Specifically, $\forall e,e'\in E$, if $e$ and $e'$ intersect at one vertex $j$, then

\begin{equation}
\label{eqn:degtwo}
o_j=\frac{2\Eerrw^{v_0,A} (U_eU_{e'})}{\Eerrw^{v_0,A}(U_e)\Eerrw^{v_0,A} (U_{e'})-\Eerrw^{v_0,A} (U_eU_{e'})}.
\end{equation}

From the moment formulas in \cref{lem:moment}, one can see that $\Eerrw^{v_0,A}(U_e)\Eerrw^{v_0,A} (U_{e'})\geq \Eerrw^{v_0,A} (U_eU_{e'})$, i.e., $U_e$ and $U_{e'}$ are negatively correlated.
Therefore, we would like to approximate $\Eerrw^{v_0,A} (U_eU_{e'})$ and $\Eerrw^{v_0,A}(U_e)\Eerrw^{v_0,A} (U_{e'})-\Eerrw^{v_0,A} (U_eU_{e'})$ up to multiplicative error $\epsilon$ in order to approximate $o_j$ up to multiplicative error $2\epsilon$. This procedure can be used to estimate $o_j$ for any vertex $j$ with $\deg(j)\geq 2$, since in this case, we can find two distinct edges that are incident at $j$.
\footnote{In our algorithm, we used an arbitrary pair of such $e,e'$ to construct an estimator for $o_j$. We remark that one can incorporate more pairs of $e,e'\in E$ that intersect at $j$ to possibly design a better estimator for $o_j$.}

On the other hand, for any vertex $j$ of degree one and the only edge incident with it, $e=\{i,j\}$, we have 
\begin{equation*}
\Eerrw^{v_0,A}(U_e)=\frac{a_e(a_e+1)}{(a_i+1-\ind_{i=v_0})(a_j+1-\ind_{j=v_0})}=\frac{a_j(a_j+1)}{o_i(a_j+1)}=\frac{a_j}{o_i}.
\end{equation*}

Here we used $a_j=a_e$ since $e$ is the only edge that incident with $j$. We also used the fact that $j\neq v_0$ since $\deg(j)=1$ while $\deg(v_0)\geq 2$ by assumption. Therefore, if we can estimate $\Eerrw^{v_0,A}(U_e)$ and $o_i$, we can then estimate $o_j=a_j+1$ via
\begin{equation}
\label{eqn:degone}o_j=a_j+1-\ind_{j=v_0}=a_j+1=o_i\Eerrw^{v_0,A}(U_e)+1.
\end{equation}
Since vertex $i$ must have degree $\deg(i)\geq 2$ given that $\deg(j)=1$ (else the graph has only a single edge and is not of interest), we can indeed estimate $o_i$ via the procedure discussed above. 

Fix some small constant $\epsilon\in(0,1)$. Assume that for any $e\in E$, we can obtain an $\epsilon$-multiplicative estimator for $\Eerrw^{v_0,A}(U_e)$, denoted as $\hat{U}_e$. 
Also assume that for any $e,e'\in E$
 such that $|e\cap e'|=1$ we can obtain an $\epsilon$-multiplicative estimator for $\Eerrw^{v_0,A}(U_eU_{e'})$ as $\hat{V}_{e,e'}$, and an $\epsilon$-multiplicative estimator for $\Delta_{e,e'}:=\Eerrw^{v_0,A}(U_e)\Eerrw^{v_0,A}(U_{e'})-\Eerrw^{v_0,A}(U_eU_{e'})$ denoted as $\hat{\Delta}_{e,e'}$. 
We can then estimate the initial weights $(a_e, e \in E)$ via the following procedure.
\vspace{1em}
\begin{breakablealgorithm}
\caption{Estimation via generalized moments}\label{alg:routine1}
\begin{algorithmic}[1]
\Require Estimators $\{\hat{U}_e,\,e\in E\}$ and $\{\hat{V}_{e,e'},\, e,e'\in E\}$, $\{\hat{\Delta}_{e,e'},\,e,e'\in E\}$; 
\State For every node $i\in V$ with $\deg(i)\geq 2$, choose two edges $e,e'$ that are incident at $i$. 
Plug in estimators of $\Eerrw^{v_0,A}(U_eU_{e'})$ and $\Eerrw^{v_0,A}(U_e)\Eerrw^{v_0,A}(U_{e'})-\Eerrw^{v_0,A}(U_eU_{e'})$ into \cref{eqn:degtwo} to obtain an estimator for $o_i$ as:
\begin{equation}
\label{eqn:est-two}
\hat{o}_i=\frac{2\hat{V}_{e,e'}}{\hat{\Delta}_{e,e'}}.
\end{equation}
\State For every node $j\in V$ with $\deg(j)=1$, let the only edge that is incident with $j$ be $e=\{i,j\}$. Plug in estimators of $\Eerrw^{v_0,A}(U_e)$, and $o_i$ (obtained in step 1, since $i$ should have degree at least $2$) into \cref{eqn:degone} to obtain an estimator for $o_j$ as:
\begin{equation}
\label{eqn:est-one}\hat{o}_j=\hat{o}_i\hat{U}_e+1.
\end{equation}
\State For every edge $e=\{i,j\}\in E$, obtain an estimator for $a_e$ by solving the equation
    \begin{equation}
    \label{eqn:solve_ae}
    x(x+1)=\hat{o}_i\hat{o}_j\hat{U}_e.
    \end{equation}
    Take the unique positive solution as the estimator for $a_e$:
    \begin{equation}
    \label{eqn:est-ae}
    \hat{a}_e=-\frac{1}{2} + \sqrt{\frac{1}{4} + \hat{o}_i \hat{o}_j \hat{U}_e}.
    \end{equation}
\end{algorithmic}
\end{breakablealgorithm}

We note that $\hat{\Delta}_{e,e'}$ is essentially an estimator for the covariance $\covar(U_e, U_{e'})$, albeit with a sign flip. Assuming that the estimators $\hat{U}_e$ and $\hat{o}_i$, $\hat{o}_j$ are all positive (as will be evident from \cref{alg:routine2}), \cref{eqn:solve_ae} is expected to yield one positive solution and one negative solution. Here, we will consider only the positive solution. In the next section, we will see how one can estimate the moments that we used to construct the above procedure. Specifically, we will give estimators 
$\{\hat{U}_e,\,e\in E\}$, $\{\hat{V}_{e,e'},\, e,e'\in E\}$, and $\{\hat{\Delta}_{e,e'},\,e,e'\in E\}$
that entry-wise approximate the corresponding quantities up to some small multiplicative error, 
and which will be seen to satisfy the positivity requirement.

\subsubsection{Estimating the Moments}  \label{sec:estimatingmoments}

Let's now consider the problem of estimating the following moments: 
\begin{itemize}
\item[i.] $\Eerrw^{v_0,A}(U_e), \forall e\in E$;
\item[ii.] $\Eerrw^{v_0,A}(U_e U_{e'}), \forall e,e'\in E\text{ s.t. }\,|e\cap e'|=1$;
\item[iii.] $\Eerrw^{v_0,A}(U_e)\Eerrw^{v_0,A}(U_{e'})-\Eerrw^{v_0,A}(U_eU_{e'}), \forall e,e'\in E\text{ s.t. }\,|e\cap e'|=1$.
\end{itemize}
Recall that we are given $K$ independent trajectories $\{X_t^{(k)}|0 \le t\leq T, 1 \le k\leq K\}$ from $\Perrw^{v_0, A}$. Via coupling with RWRE, we denote the random conductance corresponding to the $k$-th trajectory as $W^{(k)}$, and the corresponding transition matrix (also a random variable) as $P^{(k)}$. Let \(N_{ij}(k)\) denote the number of directed edge crossings over the edge \(e = \{i,j\}\), from \(i\) to \(j\), in the \(k\)-th trajectory. For any 
\(m \ge 1\),
let \(H_{ij}(k; m)\) represent the number of such transitions among the first \(m\) outgoing transitions from \(i\) in the \(k\)-th trajectory. Additionally, let \(N_i(k)\) denote the total number of outgoing transitions from \(i\) in the \(k\)-th trajectory. A natural estimator for $P^{(k)}$ is the simple counting-based estimator: 

\begin{equation}
\label{eqn:est-p}
\hat{P}_{ij}^{(k)}:=
\begin{cases}
    \frac{H_{ij}(k; m)}{m}, & \text{if } N_i(k) \geq m\text{ and } i\sim j; \\
    \frac{1}{\deg(i)}, & \text{if } N_i(k) < m\text{ and } i\sim j; \\
    0, & \text{otherwise}.
\end{cases}
\quad\forall i, j\in V.
\end{equation}
Note that $\hat{P}_{ij}^{(k)}$ depends on 
the choice of $m \ge 1$, but this dependency has been 
suppressed from the notation for readability.
Similarly, a natural estimator for $U^{(k)}$ is then
\begin{equation}
\label{eqn:est-uk}
\hat{U}_{ij}^{(k)}:= \begin{cases} \hat{P}_{ij}^{(k)}\hat{P}_{ji}^{(k)},\quad\forall i,j\in V\text{ s.t. }i\sim j,\\
0, \quad \text{ if } i \nsim j.
\end{cases}
\end{equation}
Note that $\hat{U}_{ij}^{(k)}$ also depends on the choice of $m \ge 1$. Also, since $\hat{U}_{ij}^{(k)}
= \hat{U}_{ji}^{(k)}$ for all $i \sim j$, we can 
write this common value as $\hat{U}_e^{(k)}$,
where $e = \{i,j\} = \{j,i\}$. 

For any $e\in E$, we can average over all $\{\hat{U}_e^{(k)},\,1\leq k\leq K\}$ to obtain an estimator for $\Eerrw^{v_0, A}(U_e)$. We summarize the estimation procedure as follows.
\vspace{1em}
\begin{breakablealgorithm}
\caption{Estimating the moments}\label{alg:routine2}
\begin{algorithmic}[1]
\Require 
The trajectories $\{X(k)_0^T,\,1\leq k\leq K\}$ and the parameter $m \ge 1$; 
\State For each trajectory, compute the local times and the corresponding estimator $\hat{P}^{(k)}$ according to \cref{eqn:est-p}, as well as the empirical $U$-matrix $\hat{U}^{(k)}$ according to \cref{eqn:est-uk};
\State Compute the average empirical U-matrix as 
\begin{equation}
\label{eqn:est-u}
\hat{U}_{ij}:=K^{-1}\sum_{k=1}^K \hat{U}_{ij}^{(k)}\quad\forall \{i,j\}\in E;
\end{equation}
Note that since $\hat{U}_{ij} = \hat{U}_{ji}$
for all $i \sim j$, we can write this common value as $\hat{U}_e$, where $e = \{i,j\} = \{j,i\}$.
\State For any $e,e'\in E$ s.t. $|e\cap e'|=1$, compute 
\begin{equation}
\label{eqn:est-v}
\hat{V}_{e,e'}:=K^{-1}\sum_{k=1}^K \hat{U}_e^{(k)}\hat{U}_{e'}^{(k)},
\end{equation}
as well as 
\begin{equation}
\label{eqn:est-delta}
\hat{\Delta}_{e,e'}:=\begin{cases}
& \hat{U}_e\hat{U}_{e'}-\hat{V}_{e,e'}, \quad\text{ if } \hat{U}_e\hat{U}_{e'}>\hat{V}_{e,e'};\\
& 1, \quad\text{ otherwise. }
\end{cases}
\end{equation}
Note that each $\hat{U}_e^{(k)}$, $\hat{U}_e$,
$\hat{V}_{e,e'}$, and $\hat{\Delta}_{e,e'}$
depends on the choice of $m \ge 1$.
\end{algorithmic}
\end{breakablealgorithm}

Given the above two procedures, the overall estimation scheme is just chaining them together by feeding the outputs of \cref{alg:routine2} to \cref{alg:routine1}.\footnote{
In the analysis later, we use the fact that the probability of $\{\exists\,e,e'\in E,\, |e \cap e'| =1,\, \hat{U}_e\hat{U}_{e'}\leq \hat{V}_{e,e'}\}$ is eventually smaller than the confidence level $\delta$ that we chose (when the parameters $K$, $T$, and $m$ are large enough). So letting $\hat{\Delta}_{e,e'}=1$ here is simply a placeholder for an error event. }  We denote the average \(U\)-matrix as \(\overline{U} = K^{-1} \sum_{k=1}^K U^{(k)}\). 
(Here the matrix $U^{(k)}$ is defined as in \cref{eqn:udefinition}.) Informally, for any \(e \in E\), we expect \(\overline{U}_e\) to converge in probability to \(\mathbb{E}_{\mathrm{ERRW}}^{v_0, A}(U_e)\) as \(K \to \infty\). Similarly, we anticipate \(\hat{U}_e \to \overline{U}_e\) in probability as \(T \to \infty\) and \( m \to \infty\) with $m$ being sufficiently smaller than $T$ that the probability of having made at least $m$ transitions out of every vertex approaches $1$ for any fixed trajectory. Consequently, as \(K, T, m \to \infty\), it is natural to expect that \(\hat{U}_e\) converges to \(\mathbb{E}_{\mathrm{ERRW}}^{v_0, A}(U_e)\) in probability. A non-asymptotic sample complexity analysis is provided in the next section.

\subsection{Non-asymptotic analysis}    \label{sec:nonasymptotic}

To obtain non-asymptotic bounds on sample complexity, it is essential to deepen our understanding of the
random conductance
associated with ERRWs. In particular, leveraging a ``lifted'' magic formula that links the random conductance to the so-called hyperbolic Gaussian densities allows us to derive tighter bounds on the cover time of the graph and hence yields a 
sample complexity bound
for our estimation problem. The analysis of the 
cover time is needed to understand how big we can choose the parameter $m$ as $T$ grows.

The observation we exploit is that, given a connected simple graph $G=(V, E)$, and initial edge local times $A=(a_e)_{e\in E}$, ERRW starting from node $v_0$ is equivalent in distribution to the simple random walk with random conductance $Q$ determined by the following process:
\begin{enumerate}
    \item Generate independent 
    \(
    \beta_e\sim\Gamma(a_e, 1)
    \)
    for each edge $e\in E$.
    \item Given $(\beta_e)_{e\in E}$, fixing $\phi_{v_0}=0$, generate $\{\phi_i:\,i\neq 0\}$ according to the following hyperbolic Gaussian density:
    \begin{equation*}
        p_\beta(\phi)=\frac{1}{(2\pi)^{(n-1)/2}}\,e^{-\sum_{e = \{i,j\} \in E}\beta_e(\cosh(\phi_i-\phi_j)-1)}e^{-\sum_{i\in V\backslash \{v_0\} }\phi_i} \sqrt{\sum_{T\in\mathcal{T}}\prod_{e = \{i,j\} \in T}\beta_e e^{\phi_i+\phi_j}}.
    \end{equation*}
    \item Perform a simple random walk with initial weights 
    \[Q_e:=\beta_e e^{\phi_i+\phi_j},\quad \forall e = \{i,j\} \in E.\] 
\end{enumerate}
We remark that $Q$ is related to the random environment $W$ (\cref{eqn:mixing}) in the sense that $\{Q_e/Q_{e_0}:\,e\in E\}$ will be equal in distribution to $\{W_e:\,e\in E\}$, as is proven in Section 5 of \cite{st}. 

To distinguish the measure on $Q$ from that on $W$, we use $\nu_{v_0, A}$ to denote the measure on $Q$. Note that there is no need to choose any edge $e_0$ incident on $v_0$ in the definition of $\nu_{v_0, A}$.

The process above directly reflects the probabilistic essence of Theorem 1 and Theorem 2 in~\cite{st}.
\footnote{Theorem 2 in \cite{st} uses a normalization condition that restricts the $\phi_i$'s to the set $\{\phi:\sum_{i}\phi_i=0\}$, while we restrict them to  the set $\{\phi:\phi_{v_0}=0\}$. One can deduce the formula we presented here from the one in~\cite{st}
by a simple change-of-variable. 
} It offers an alternative perspective on the mixing measure, uncovering the independent Gamma field (which we dub the $\beta$-field) and the hyperbolic Gaussian field (which we dub the $\phi$-field) embedded within the mixing measure. This perspective, highlighting both the independence and the (hyperbolic) Gaussian nature, provides a more effective approach to bounding the fluctuations of the random conductance.

In this subsection, let \( Q \) represent the random environment coupled with the ERRW. We assume that \( \beta \) and \( \phi \) are also naturally coupled with \( Q \) and the ERRW, consistent with the process described above.

Before proceeding with the analysis, we introduce the standard notations 
\(O\), \(\Omega\), and \(\Theta\) to describe growth rates of functions:
\begin{enumerate}
    \item \(f(x) = O(g(x))\): \(f(x) \leq c \cdot g(x)\) for constants \(c > 0\), \(x \geq x_0\) (asymptotic upper bound);
    \item \(f(x) = \Omega(g(x))\): \(f(x) \geq c \cdot g(x)\) for constants \(c > 0\), \(x \geq x_0\) (asymptotic lower bound);
    \item \(f(x) = \Theta(g(x))\): \(c_1 \cdot g(x) \leq f(x) \leq c_2 \cdot g(x)\) for \(c_1, c_2 > 0\), \(x \geq x_0\) (tight asymptotic bound up to constant factors).
\end{enumerate}

\subsubsection{Hyperbolic Gaussian measure on a tree}

We begin by 
studying the supremum of the absolute value
of the $\phi$-field on a tree.
Studying a tree serves as a simple case that illustrates the core idea needed in the case of general graphs.
Consider an arbitrary orientation of each edge, where for any edge $e = \{i,j\}$, we select either the directed edge $\vv{e} = (i \to j)$ or $\vv{e} = (j \to i)$. Let $\vv{E}$ denote the set of directed edges under this orientation.  

Next, we define the gradient field of the $\phi$-field as $\nabla \phi: \vv{E} \to \mathbb{R}$, where for each directed edge $\vv{e} = (i \to j) \in \vv{E}$, the gradient is given by $\nabla \phi(\vv{e}) := \phi_i - \phi_j$. This forms an edge field $\{ \nabla \phi(\vv{e}) : \vv{e} \in \vv{E} \}$, which can be shown to be comprised of mutually independent random variables when the underlying graph is a tree.\footnote{Readers familiar with the Gaussian Free Field (GFF) may recognize similarities to the gradient-GFF on a tree; the gradient components of GFF on a tree are also independent.} In the following lemma and its proof, we identify $a_{\vv{e}}$ with $a_e$, where $e$ is the unoriented edge corresponding to $\vv{e}\in E$. Similarly, we define $\beta_{\vv{e}}$ to be the same as its unoriented counterpart $\beta_e$. We use $\{y_{\vv{e}} : \vv{e}\in \vv{E}\}$ to denote a realization of $\{ \nabla \phi(\vv{e}) : \vv{e} \in \vv{E} \}$.

\vspace{.3em}
\begin{lem}[Independence of the gradient field]
\label{lem:grad_indep}
Consider a tree $G=(V=[n], E)$ with a root $v_0\in V$, and positive edge weights given by $A:=(a_e)_{e\in E}$. Orient all the edges towards the root to obtain $\vv{E}$. Let the $\phi$-field on this graph be $\{\phi_i:\,i\in V\}$ and let the gradient field be $\{ \nabla \phi(\vv{e}) : \vv{e} \in \vv{E} \}$. Then the distribution of the gradient field is given by the following product density:
\[p(y)
    = \prod_{\vv{e}\in \vv{E}}\frac{\Gamma\big(a_{\vv{e}}+\frac{1}{2}\big)}{\sqrt{2\pi}\Gamma(a_{\vv{e}})} \,e^{-\frac{1}{2}y_{\vv{e}}} (\cosh(y_{\vv{e}}))^{-(a_{\vv{e}}+\frac{1}{2})},\quad\forall y\in\mathbb{R}^{\vv{E}}.\]
\end{lem}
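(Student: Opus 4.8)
The plan is to change variables from the vertex field $\{\phi_i : i \in V\}$ (with $\phi_{v_0}=0$) to the gradient field $\{\nabla\phi(\vv e) : \vv e \in \vv E\}$ and verify that the resulting density factorizes. The key structural fact is that on a tree with $n$ vertices rooted at $v_0$, there are exactly $n-1$ edges, and orienting every edge toward the root gives a bijection between the non-root vertices $V\setminus\{v_0\}$ and the directed edges $\vv E$: each non-root vertex $i$ is the tail of a unique edge $\vv e(i)$ on the path from $i$ to $v_0$. Hence the linear map $(\phi_i)_{i\neq v_0} \mapsto (y_{\vv e})_{\vv e \in \vv E}$ is invertible, and in fact $\phi_i - \phi_j = \sum_{\vv e} (\text{path coefficients}) y_{\vv e}$; more importantly the Jacobian of this transformation is $\pm 1$ because, ordering vertices by distance from the root, the matrix is triangular with $\pm 1$ on the diagonal. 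So $d\phi = dy$ up to sign, and no Jacobian factor appears.

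Next I would substitute into the hyperbolic Gaussian density $p_\beta(\phi)$ from the "lifted" magic formula. I would handle the three factors in turn. First, since $G$ is a tree, it has a unique spanning tree $T = E$, so $\sqrt{\sum_{T\in\mathcal T}\prod_{e=\{i,j\}\in T}\beta_e e^{\phi_i+\phi_j}} = \prod_{e=\{i,j\}\in E}\beta_e^{1/2} e^{(\phi_i+\phi_j)/2}$. Second, the linear exponent $e^{-\sum_{i\in V\setminus\{v_0\}}\phi_i}$ together with the half of $e^{(\phi_i+\phi_j)/2}$ left over must be re-expressed in terms of the gradients. The crucial telescoping identity is that for the root-oriented tree, $\sum_{i \neq v_0}\phi_i$ and $\sum_{e=\{i,j\}}(\phi_i+\phi_j)$ can both be written as linear combinations of the $y_{\vv e}$'s; concretely, for each edge $\vv e = (i\to \mathrm{parent}(i))$, the combination $-\phi_i - \tfrac12(\phi_i + \phi_{\mathrm{parent}(i)})$ summed appropriately should collapse — after using $\phi_{v_0}=0$ and summing over the tree from the leaves up — into $\sum_{\vv e}\big(-\tfrac12 y_{\vv e}\big)$ plus a correction, giving the factor $\prod_{\vv e} e^{-\frac12 y_{\vv e}}$. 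Third, the $\cosh$ term is already edge-local: $e^{-\sum_{e=\{i,j\}}\beta_e(\cosh(\phi_i-\phi_j)-1)} = \prod_{\vv e}e^{-\beta_{\vv e}(\cosh(y_{\vv e})-1)}$. Finally I would integrate out the $\beta_e \sim \Gamma(a_e,1)$ variables edge by edge: $\int_0^\infty \beta^{1/2} e^{-\beta\cosh(y)}\,\frac{\beta^{a-1}e^{-\beta}}{\Gamma(a)}\,d\beta \cdot e^{\beta}$ — wait, being careful with the $e^{+\beta_e}$ coming from the "$-1$" in $\cosh-1$ and the $\beta^{1/2}$ from the spanning-tree factor — evaluates to $\frac{\Gamma(a+\frac12)}{\Gamma(a)}(\cosh y)^{-(a+1/2)}$ by the Gamma integral $\int_0^\infty \beta^{a-1/2}e^{-\beta\cosh y}\,d\beta = \Gamma(a+\tfrac12)(\cosh y)^{-(a+1/2)}$. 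Collecting the constants, including the $(2\pi)^{-(n-1)/2} = \prod_{\vv e}(2\pi)^{-1/2}$ prefactor, yields exactly the claimed product density.

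The main obstacle I anticipate is the bookkeeping in the second step: correctly tracking how the linear exponent $e^{-\sum_{i\neq v_0}\phi_i}$ combines with the "half" of the spanning-tree square-root factor $\prod e^{(\phi_i+\phi_j)/2}$ to produce precisely $\prod_{\vv e}e^{-y_{\vv e}/2}$ with no leftover vertex-dependent terms. This requires using the root orientation carefully — each non-root vertex $i$ appears in exactly one "$-\phi_i$" from the linear term and in $\deg(i)$ many "$+\phi_i/2$" terms from its incident edges, and the telescoping only closes because $\phi_{v_0}=0$ kills the root contribution. I would verify this first on a path graph and a star to make sure the signs and coefficients are right, then argue the general tree case by induction on leaves (peeling off a leaf edge reduces to a smaller tree, and the leaf's contribution is exactly one clean factor $\frac{\Gamma(a+1/2)}{\sqrt{2\pi}\Gamma(a)}e^{-y/2}(\cosh y)^{-(a+1/2)}$), which sidesteps the need to write down the global linear algebra explicitly. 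The independence of the $\nabla\phi(\vv e)$ is then immediate from the product form.
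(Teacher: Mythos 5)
Your proposal is correct and follows essentially the same route as the paper's proof: observe that on a tree the spanning-tree sum reduces to a single product, change variables from $(\phi_i)_{i\neq v_0}$ to $(y_{\vv e})_{\vv e\in\vv E}$ using the bijection between non-root vertices and root-oriented edges (Jacobian $\pm1$), reassemble the density as a product over edges, and integrate out each $\beta_{\vv e}$ via the Gamma integral. One small remark: you do not actually need a telescoping sum or a leaf-peeling induction for the linear exponent step. The identity $e^{-\sum_{i\neq v_0}\phi_i}=\prod_{\vv e=(i\to j)\in\vv E}e^{-\phi_i}$ is immediate from the tail-bijection you already noted, and then each edge factor $e^{-\phi_i}\cdot e^{\frac12(\phi_i+\phi_j)}=e^{-\frac12(\phi_i-\phi_j)}=e^{-\frac12 y_{\vv e}}$ comes out locally, with no cross-edge cancellation and no ``correction'' term. (Your written expression $-\phi_i-\frac12(\phi_i+\phi_{\mathrm{parent}(i)})$ has a sign error --- the second term should carry a $+$ --- but you flagged this as the step to double-check, and the intended identity is the right one.) The Gamma integral step is also handled correctly once you account for the $e^{+\beta_{\vv e}}$ from the ``$-1$'' in $\cosh-1$ cancelling the $e^{-\beta_{\vv e}}$ in the $\Gamma(a_{\vv e},1)$ density.
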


\begin{proof}
The hyperbolic Gaussian density is
\begin{equation*}
\begin{aligned}
    p_\beta(\phi)
    &= \frac{1}{(2\pi)^{(n-1)/2}}\,e^{-\sum_{\vv{e}=(i\to j)\in \vv{E}}\beta_{\vv{e}}(\cosh(\phi_i-\phi_j)-1)}e^{-\sum_{i\in V\backslash\{v_0\} }\phi_i} \prod_{\vv{e}=(i\to j)\in \vv{E}}\beta_{\vv{e}}^{\frac{1}{2}} e^{\frac{1}{2}(\phi_i+\phi_j)}\\
    &= \frac{1}{(2\pi)^{(n-1)/2}}\,\prod_{\vv{e}=(i\to j)\in \vv{E}}e^{-\beta_{\vv{e}}(\cosh(\phi_i-\phi_j)-1)}\cdot e^{-\phi_i}\cdot \beta_{\vv{e}}^{\frac{1}{2}} e^{\frac{1}{2}(\phi_i+\phi_j)}\\
    &= \prod_{\vv{e}=(i\to j)\in \vv{E}}\sqrt{\frac{\beta_{\vv{e}}}{2\pi}} \,e^{-\beta_{\vv{e}}(\cosh(\phi_i-\phi_j)-1)} e^{\frac{1}{2}(\phi_j-\phi_i)}.
\end{aligned}
\end{equation*}
In the second step above, we used the fact that $e^{-\sum_{i\in V\backslash\{v_0\} }\phi_i}=\prod_{\vv{e}=(i\to j)\in \vv{E}} e^{-\phi_i}$.


Using the change of variables \( y_{\vv{e}} = \nabla \phi(\vv{e}) = \phi_i - \phi_j \), we obtain
\[
\prod_{\vv{e} \in \vv{E}} dy_{\vv{e}} = \prod_{i \neq v_0} d\phi_i.
\]
This leads to following expression for the probability density in the new variables:
\[
p_{\beta}(y)= \prod_{\vv{e} \in \vv{E}} \sqrt{\frac{\beta_{\vv{e}}}{2\pi}} \, e^{-\beta_{\vv{e}}(\cosh(y_{\vv{e}}) - 1)} e^{-\frac{1}{2} y_{\vv{e}}}.
\]

Integrating out the independent Gamma field $\{\beta_{\vv{e}}: \vv{e}\in \vv{E}\}$, with $b_{\vv{e}}$ denoting the realization of $\beta_{\vv{e}}$, we have
\begin{equation*}
\begin{aligned}
    p(y)
    &= \int_{\mathbb{R}_+^{n-1}}\bigg(\prod_{\vv{e}\in \vv{E}}\sqrt{\frac{b_{\vv{e}}}{2\pi}} \,e^{-b_{\vv{e}}(\cosh(y_{\vv{e}})-1)} e^{-\frac{1}{2}y_{\vv{e}}}\bigg)\prod_{\vv{e}\in \vv{E}}\frac{1}{\Gamma(a_{\vv{e}})}\,b_{\vv{e}}^{a_{\vv{e}}-1}e^{-b_{\vv{e}}} db_{\vv{e}}\\
    &= \prod_{\vv{e}\in \vv{E}}\bigg(\int_{\mathbb{R}_+}\frac{1}{\sqrt{2\pi}\Gamma(a_{\vv{e}})} \,e^{-\frac{1}{2}y_{\vv{e}}} e^{-b_{\vv{e}}\cosh(y_{\vv{e}})} \,b_{\vv{e}}^{a_{\vv{e}}-\frac{1}{2}} db_{\vv{e}}\bigg)\\
    &= \prod_{\vv{e}\in \vv{E}}\frac{\Gamma\big(a_{\vv{e}}+\frac{1}{2}\big)}{\sqrt{2\pi}\Gamma(a_{\vv{e}})} \,e^{-\frac{1}{2}y_{\vv{e}}} (\cosh(y_{\vv{e}}))^{-(a_{\vv{e}}+\frac{1}{2})}.
\end{aligned}
\end{equation*}

In the last step above we used the so-called Gamma integral identity:
\begin{equation}    \label{eqn:gammaidentity}
\int_{\mathbb{R}_+} \frac{\theta_2^{\theta_1}}{\Gamma(\theta_1)}x^{\theta_1 -1} e^{-\theta_2 x} dx = 1, \quad \forall \theta_1,\theta_2>0,
\end{equation}
setting $x = b_{\vv{e}}, \quad \theta_1 = a_{\vv{e}} + \frac{1}{2}, \quad \theta_2 = \cosh(y_{\vv{e}})$.
This concludes the proof.
\end{proof}

Using this characterization, it is not hard to 
get upper bounds on $\sup_{\vv{e} \in \vv{E}} |\nabla\phi(\vv{e})|$ and $\sup_{i\in V}|\phi_i|$.

\begin{lem}[Upper bounds on $\sup_{\vv{e} \in \vv{E}} |\nabla\phi(\vv{e})|$ and $\sup_{i\in V}|\phi_i|$ ]
\label{lem:fluc_grad}
In the same setting as \cref{lem:grad_indep}, in particular that the graph is a tree, assume that $\underline{a}\leq \min_{e\in E} a_e\leq \max_{e\in E} a_e \leq \overline{a}$ for some positive constants $\underline{a}, \overline{a}$. For any $c>0$, we have
\[
\Perrw^{v_0,A}\bigg(\sup_{\vv{e} \in \vv{E}} |\nabla\phi(\vv{e})| \geq \frac{2(c+\overline{a}+3)}{\underline{a}}n\bigg) \leq \Perrw^{v_0,A}\bigg(\sum_{\vv{e} \in \vv{E}} |\nabla\phi(\vv{e})| \geq \frac{2(c+\overline{a}+3)}{\underline{a}}n\bigg) \leq e^{-cn}.
\]
In particular, we also have
$\Perrw^{v_0,A}\bigg(\sup_{i \in V} |\phi_i| \geq \frac{2(c+\overline{a}+3)}{\underline{a}}n\bigg) \leq e^{-cn}$.
\end{lem}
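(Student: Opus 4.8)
The plan is to reduce everything to the middle estimate $\Perrw^{v_0,A}\big(\sum_{\vv e \in \vv E}|\nabla\phi(\vv e)| \ge \tfrac{2(c+\overline a+3)}{\underline a}n\big)\le e^{-cn}$ and then prove that by a Chernoff bound off the explicit density from \cref{lem:grad_indep}. The two reductions are immediate: $\sup_{\vv e \in \vv E}|\nabla\phi(\vv e)| \le \sum_{\vv e\in\vv E}|\nabla\phi(\vv e)|$ since all summands are nonnegative, which gives the first displayed inequality and reduces the $\nabla\phi$ bound to the sum; and since the tree is oriented toward $v_0$ and $\phi_{v_0}=0$, for any vertex $i$ with path $i = u_0, u_1, \dots, u_k = v_0$ to the root we may telescope, $\phi_i = \sum_{\ell=0}^{k-1}(\phi_{u_\ell}-\phi_{u_{\ell+1}}) = \sum_{\ell=0}^{k-1}\nabla\phi(u_\ell\to u_{\ell+1})$, so $|\phi_i| \le \sum_{\vv e\in\vv E}|\nabla\phi(\vv e)|$ as well. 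Hence the $\sup_{i}|\phi_i|$ bound also follows once the middle one is proved.

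For the middle bound I would use \cref{lem:grad_indep}: the $\nabla\phi(\vv e)$, $\vv e\in\vv E$, are mutually independent with the stated product density, so a Chernoff argument applies, and it suffices to estimate the one-edge moment generating function. Writing $a=a_{\vv e}\in[\underline a,\overline a]$ and splitting $\E\big[e^{\lambda|\nabla\phi(\vv e)|}\big] = \tfrac{\Gamma(a+1/2)}{\sqrt{2\pi}\,\Gamma(a)}\int_{\mathbb R} e^{\lambda|y|}e^{-y/2}(\cosh y)^{-(a+1/2)}\,dy$ at $y=0$, I would bound $(\cosh y)^{-(a+1/2)}\le 2^{a+1/2}e^{-(a+1/2)|y|}$ (from $\cosh y\ge \tfrac12 e^{|y|}$). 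The $y>0$ piece is then dominated by $2^{a+1/2}\int_0^\infty e^{(\lambda-a-1)y}\,dy$ and the $y<0$ piece by $2^{a+1/2}\int_0^\infty e^{(\lambda-a)z}\,dz$, both finite for $0<\lambda<\underline a$. Taking $\lambda=\underline a/2$ makes every denominator at least $a/2$, so
\[
\E\big[e^{\lambda|\nabla\phi(\vv e)|}\big]\;\le\;\frac{4\,\Gamma(a+1/2)\,2^{a+1/2}}{\sqrt{2\pi}\,\Gamma(a)\,a}\;=\;\frac{4\,\Gamma(a+1/2)\,2^{a+1/2}}{\sqrt{2\pi}\,\Gamma(a+1)}\;\le\;2^{a+2}\;\le\;2^{\overline a+2}=:M,
\]
where the crucial point is that $\Gamma(a+1/2)/\Gamma(a+1)$ is decreasing on $(0,\infty)$ with supremum $\Gamma(1/2)/\Gamma(1)=\sqrt{\pi}$, so the apparent blow-up of the $1/a$ factor as $a\to0$ is cancelled. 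In particular $\ln M\le\overline a+3$ for every $\overline a>0$.

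Finally, independence gives $\Perrw^{v_0,A}\big(\sum_{\vv e}|\nabla\phi(\vv e)|\ge s\big)\le e^{-\lambda s}\prod_{\vv e\in\vv E}\E\big[e^{\lambda|\nabla\phi(\vv e)|}\big]\le e^{-\lambda s}M^{n-1}$, using that the tree has $n-1$ edges. Plugging $s=\tfrac{2(c+\overline a+3)}{\underline a}n$ and $\lambda=\underline a/2$ yields $\lambda s=(c+\overline a+3)n$ and $(n-1)\ln M\le n(\overline a+3)$, so the exponent is at most $-(c+\overline a+3)n+(\overline a+3)n=-cn$, which is the claim. I expect the moment-generating-function estimate to be the only real obstacle: one must keep the single-edge bound uniform over all admissible weights — in particular not losing control when $\underline a$ is small, where the naive estimate of the heavier (left) tail contributes a factor $\asymp 1/a$ that is only tamed by the $\Gamma(a)^{-1}\asymp a$ behaviour of the normalizing constant — and one must choose $\lambda$ (here $\underline a/2$) so that the resulting constant lands on the stated threshold $\tfrac{2(c+\overline a+3)}{\underline a}$.
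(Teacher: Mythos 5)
Your proof is correct and follows essentially the same route as the paper: reduce both suprema to the sum (via $\sup \le \sum$ and telescoping $\phi_i$ along the path to the root), then use the product density of \cref{lem:grad_indep} to Chernoff-bound the sum at $\lambda=\underline a/2$, with the single-edge factor controlled by $2^{O(\overline a)}$. The only cosmetic difference is in the single-edge moment-generating-function estimate: you retain the $e^{-y/2}$ factor and split the integral at $y=0$, then tame the $1/a$ behaviour near $a=0$ by the monotone ratio $\Gamma(a+\tfrac12)/\Gamma(a+1)\le\sqrt\pi$; the paper instead discards $e^{-y/2}$ by bounding $\bigl(e^{-y}/(e^y+e^{-y})\bigr)^{1/2}<1$ to get a symmetric integrand and uses $\Gamma(a+\tfrac12)\le 2\sqrt\pi\,\Gamma(a+1)$ — both give the same threshold $\tfrac{2(c+\overline a+3)}{\underline a}n$.
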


\begin{proof}

Using \cref{lem:grad_indep}, we know that for any $\lambda$ such that $\lambda_{\vv{e}} < a_{\vv{e}}$ for all $\vv{e} \in \vv{E}$,
\begin{equation*}
\begin{aligned}
    \int_0^\infty e^{\sum_{\vv{e} \in \vv{E}} \lambda_{\vv{e}} |y_{\vv{e}}|} p(y) dy
    &= \int_{-\infty}^\infty \prod_{\vv{e} \in \vv{E}} \bigg(\frac{\Gamma\big(a_{\vv{e}}+\frac{1}{2}\big)}{\sqrt{2\pi}\Gamma(a_{\vv{e}})} \, e^{-\frac{1}{2}y_{\vv{e}} + \lambda_{\vv{e}} |y_{\vv{e}}|} (\cosh(y_{\vv{e}}))^{-(a_{\vv{e}}+\frac{1}{2})}\bigg) dy_{\vv{e}} \\
    &= \prod_{\vv{e} \in \vv{E}} \int_{-\infty}^\infty \bigg(\frac{\Gamma\big(a_{\vv{e}}+\frac{1}{2}\big)}{\sqrt{2\pi}\Gamma(a_{\vv{e}})} \, e^{-\frac{1}{2}y_{\vv{e}} + \lambda_{\vv{e}} |y_{\vv{e}}|} (\cosh(y_{\vv{e}}))^{-(a_{\vv{e}}+\frac{1}{2})}\bigg) dy_{\vv{e}} \\
    &= \prod_{\vv{e} \in \vv{E}} \int_{-\infty}^\infty \bigg(\frac{\Gamma\big(a_{\vv{e}}+\frac{1}{2}\big)}{\sqrt{\pi}\Gamma(a_{\vv{e}})} \, \bigg(\frac{e^{-y_{\vv{e}}}}{e^{y_{\vv{e}}} + e^{-y_{\vv{e}}}}\bigg)^{\frac{1}{2}} \bigg(\frac{2}{e^{y_{\vv{e}}} + e^{-y_{\vv{e}}}}\bigg)^{a_{\vv{e}}} e^{\lambda_{\vv{e}} |y_{\vv{e}}|}\bigg) dy_{\vv{e}}.
\end{aligned}
\end{equation*}

Using the inequalities \( \frac{e^{-y_{\vv{e}}}}{e^{y_{\vv{e}}} + e^{-y_{\vv{e}}}} < 1 \) and \( e^{|y_{\vv{e}}|} < e^{y_{\vv{e}}} + e^{-y_{\vv{e}}} \), we obtain
\begin{equation*}
\begin{aligned}
\int_0^\infty e^{\sum_{\vv{e} \in \vv{E}} \lambda_{\vv{e}} |y_{\vv{e}}|} p(y) dy
    &\leq \prod_{\vv{e} \in \vv{E}} \int_{-\infty}^\infty \bigg(\frac{\Gamma\big(a_{\vv{e}}+\frac{1}{2}\big)}{\sqrt{\pi}\Gamma(a_{\vv{e}})} \, \bigg(\frac{2}{e^{|y_{\vv{e}}|}}\bigg)^{a_{\vv{e}}} e^{\lambda_{\vv{e}} |y_{\vv{e}}|}\bigg) dy_{\vv{e}} \\
    &= \prod_{\vv{e} \in \vv{E}} \int_0^\infty \bigg(\frac{\Gamma\big(a_{\vv{e}}+\frac{1}{2}\big)}{\sqrt{\pi}\Gamma(a_{\vv{e}})} \, 2^{a_{\vv{e}}+1} e^{-(a_{\vv{e}}-\lambda_{\vv{e}}) y_{\vv{e}}}\bigg) dy_{\vv{e}} \\
    &= \prod_{\vv{e} \in \vv{E}} \frac{\Gamma\big(a_{\vv{e}}+\frac{1}{2}\big)}{\sqrt{\pi}\Gamma(a_{\vv{e}})} \, 2^{a_{\vv{e}}+1} \frac{1}{a_{\vv{e}} - \lambda_{\vv{e}}}.
\end{aligned}
\end{equation*}

Here we used the symmetry of the integrand (which is an even function) in the second step. Using the inequality \( \Gamma(a_{\vv{e}} + \frac{1}{2}) \leq 2\sqrt{\pi} \Gamma(a_{\vv{e}}+1) \) for all \( a_{\vv{e}} \geq 0 \), \footnote{To see this, note that $\Gamma(a)$ is decreasing for $a \in (0, \mu)$, where $\mu\approx1.46$ with $\Gamma(\mu) \approx 0.8856$, and is increasing for $a \in (\mu,\infty)$. If $a_e \geq \mu-\frac{1}{2}$, then $\Gamma(a_e+\frac{1}{2})\leq \Gamma(a_e+1)$; if $0 < a_e<\mu-\frac{1}{2}$, then $\Gamma(a_e+\frac{1}{2})\leq \Gamma(\frac{1}{2})=\sqrt{\pi}$, $\Gamma(a_e+1)\geq\Gamma(\mu)>\frac{1}{2}$, therefore $\Gamma(a_e+\frac{1}{2})\leq 2\sqrt{\pi}\Gamma(a_e+1)$.} we conclude that
\[
\Eerrw^{v_0,A}\big(e^{\sum_{\vv{e}}\lambda_{\vv{e}} |\nabla\phi(\vv{e})|}\big) \leq \prod_{\vv{e} \in \vv{E}} 2^{a_{\vv{e}}+2} \frac{a_{\vv{e}}}{a_{\vv{e}} - \lambda_{\vv{e}}}.
\]

Thus, for any \( 0 < \theta < \underline{a} \), we conclude that
\[
\Eerrw^{v_0,A}(e^{\theta \sum_{\vv{e} \in \vv{E}} |\nabla\phi(\vv{e})|}) \leq 2^{(\overline{a}+2)|E|} \bigg(\frac{\underline{a}}{\underline{a} - \theta}\bigg)^{|E|}.
\]
Thus we have the following tail bound for any $\alpha>0$ and any \( 0 < \theta < \underline{a} \):
\[\Perrw^{v_0,A}\bigg(\sum_{\vv{e} \in \vv{E}} |\nabla\phi(\vv{e})|\geq \alpha\bigg)\leq \Eerrw^{v_0,A}(e^{\theta \sum_{\vv{e} \in \vv{E}} |\nabla\phi(\vv{e})|})\cdot e^{-\theta \alpha}\leq 2^{(\overline{a}+2)|E|}\bigg(\frac{\underline{a}}{\underline{a}-\theta}\bigg)^{|E|}e^{-\theta \alpha}.\]

Take $\theta =\frac{\underline{a}}{2}$. We get $\Perrw^{v_0,A}(\sum_{\vv{e} \in \vv{E}} |\nabla\phi(\vv{e})|\geq \alpha)\leq 2^{(\overline{a}+3)|E|}e^{-\underline{a}\alpha/2}\leq e^{(\overline{a}+3)n-\underline{a}\alpha/2}$,
where we have noted that for a tree we have $|E|=n-1$. For any $c>0$, let $\alpha=\frac{2(c+\overline{a}+3)}{\underline{a}}n$. Then we have $\Perrw^{v_0,A}(\sum_{\vv{e} \in \vv{E}} |\nabla\phi(\vv{e})|\geq \alpha)\leq e^{-cn}$, as claimed.

Since the graph is assumed to be a tree with the edges directed towards the root $v_0$ and with 
$\phi_{v_0} = 0$, the bound on 
$\sum_{\vv{e} \in \vv{E}} |\nabla\phi(\vv{e})|$
immediately implies the same bound on 
$\sup_{i\in V}|\phi_i|$.
\end{proof}

\vspace{.3em}
\begin{rem}[Tightness of the bound on $\sup_{i\in V}|\phi_i|$.]
\label{rem:tight_fluc}
Consider a path that consists of $(n+1)$ vertices, let $A = \mathbf{1}$,
i.e. all edges have initial local time $1$.
Label the left-most vertex by $v_0$ and the right-most vertex by $v_n$. Denote their corresponding entries in $\phi$ as $\phi_0$ and $\phi_n$, respectively. Then we have $\phi_n=\phi_n-\phi_0=\sum_{\vv{e}\in\vv{E}}y_{\vv{e}}$. 

Note that $\{y_{\vv{e}}: \vv{e}\in \vv{E}\}$ are i.i.d.. For each $\vv{e}\in\vv{E}$, we have
\[\Eerrw^{v_0,A}(y_{\vv{e}}) = \int_{\mathbb{R}}\frac{1}{2\sqrt{2}} \,y_{\vv{e}} e^{-\frac{1}{2}y_{\vv{e}}} (\cosh(y_{\vv{e}}))^{-\frac{3}{2}} dy_{\vv{e}}\approx -0.98.\]

Also, the second moment is bounded as
\[\Eerrw^{v_0,A}(y_{\vv{e}}^2) = \int_{\mathbb{R}}\frac{1}{2\sqrt{2}} \,y_{\vv{e}}^2 e^{-\frac{1}{2}y_{\vv{e}}} (\cosh(y_{\vv{e}}))^{-\frac{3}{2}} dy_{\vv{e}}\approx 3.00.\]

Then it is clear from the central limit theorem perspective that $\phi_n$, as the summation of $n$ i.i.d. random variables, each with mean $-0.98$ and bounded variance, should be of order $-0.98n+O(\sqrt{n})$ with high probability when $n$ is large enough. Therefore $\sup_{i\in V}|\phi_i|$ is indeed $\Theta(n)$ with high probability in this example.
\end{rem}

\subsubsection{Hyperbolic Gaussian measure for a general graph}

When the graph is not a tree, the independent structure of the gradient field that was identified for trees no longer holds, requiring a different approach to establish bounds on $\sup_{i\in V}|\phi_i|$. The following observation plays a crucial role.

\vspace{.3em}
\begin{lem}[Generating function bound]
\label{lem:general_mgf}
Consider a connected graph \( G = (V, E) \) with positive edge weights \( A := (a_e)_{e \in E} \). Fix a vertex \( v_0 \in V \) and generate the \( \beta \)-field and \( \phi \)-field accordingly. Choose an arbitrary orientation of the edges, and let \( \vv{E} \) represent the set of directed edges, inducing the gradient field \( \nabla \phi \). For any \( \lambda \in \mathbb{R}^{\vv{E}} \) such that \( \lambda_{\vv{e}} < \beta_{\vv{e}} \) for all \( \vv{e} \in \vv{E} \), we have
\begin{equation}    \label{eqn:genfnlemma}
\Eerrw^{v_0,A}\Bigg(e^{\sum_{\vv{e}=(i\to j)\in \vv{E}} \lambda_{\vv{e}}(\cosh(\phi_i - \phi_j) - 1)} \, \Bigg| \, \beta \Bigg) \leq \max_{\vv{T} \in \vv{\mathcal{T}}} \prod_{\vv{e} \in \vv{T}} \sqrt{\frac{\beta_{\vv{e}}}{\beta_{\vv{e}} - \lambda_{\vv{e}}}}.
\end{equation}
Here $\vv{T}$ is the notation for a spanning tree with oriented edges, and $\vv{\mathcal{T}}$ is the family of all such oriented spanning trees.
Note that $\vv{\mathcal{T}}$ is just $\mathcal{T}$ where the edges are oriented according to the chosen orientation that is fixed once and for all. 
\end{lem}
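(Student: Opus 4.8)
The plan is to write the left-hand side as an integral against the hyperbolic Gaussian density $p_\beta(\phi)$ and isolate the dependence on the spanning-tree sum, which is the only obstacle to factorizing over edges. Concretely, conditioning on $\beta$, the quantity to bound is
\[
\frac{1}{(2\pi)^{(n-1)/2}}\int_{\mathbb{R}^{V\setminus\{v_0\}}} e^{\sum_{\vv{e}=(i\to j)\in \vv{E}}(\lambda_{\vv{e}}-\beta_{\vv{e}})(\cosh(\phi_i-\phi_j)-1)} e^{-\sum_{i\neq v_0}\phi_i}\sqrt{\sum_{T\in\mathcal{T}}\prod_{e=\{i,j\}\in T}\beta_e e^{\phi_i+\phi_j}}\,d\phi,
\]
and the normalization $\int p_\beta(\phi)d\phi = 1$ is the same expression with $\lambda_{\vv{e}}=0$. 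So the ratio we must bound is an expectation of $e^{\sum_{\vv{e}}\lambda_{\vv{e}}(\cosh(\phi_i-\phi_j)-1)}$ under $p_\beta$, consistent with the statement.

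First I would handle the spanning-tree square root. Using subadditivity of $\sqrt{\cdot}$ ($\sqrt{\sum_T x_T}\le \sum_T \sqrt{x_T}$) is too lossy; instead the cleaner route is to bound $\sqrt{\sum_{T\in\mathcal{T}}\prod_{e\in T}\beta_e e^{\phi_i+\phi_j}} \le \sqrt{|\mathcal{T}|}\cdot \max_{T}\sqrt{\prod_{e\in T}\beta_e e^{\phi_i+\phi_j}}$ — but this introduces a $\sqrt{|\mathcal{T}|}$ factor that does not appear in the claimed bound, so that is also wrong. The right idea: compare numerator and denominator tree-sum-by-tree-sum. Write both the $\lambda$-integral and the normalization integral with the \emph{same} $\sum_{T\in\mathcal{T}}$ expanded inside, i.e. for each fixed spanning tree $T$ define
\[
I_\lambda(T):=\int e^{\sum_{\vv{e}}(\lambda_{\vv{e}}-\beta_{\vv{e}})(\cosh(\phi_i-\phi_j)-1)} e^{-\sum_{i\neq v_0}\phi_i}\sqrt{\prod_{e=\{i,j\}\in T}\beta_e e^{\phi_i+\phi_j}}\,d\phi,
\]
so that the LHS ratio equals $\big(\sum_T I_\lambda(T)\big)\big/\big(\sum_T I_0(T)\big)$, which is at most $\max_T \big(I_\lambda(T)/I_0(T)\big)$. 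Now for a \emph{fixed} spanning tree $T$, the factor $e^{-\sum_{i\neq v_0}\phi_i}\sqrt{\prod_{e\in T}\beta_e e^{\phi_i+\phi_j}}$ combines (exactly as in the proof of \cref{lem:grad_indep}, orienting $T$ towards $v_0$) into a product over the edges of $T$ of $\sqrt{\beta_{\vv{e}}}\,e^{-\frac12 y_{\vv{e}}}$ where $y_{\vv{e}}=\nabla\phi(\vv{e})$; changing variables from $\{\phi_i\}_{i\neq v_0}$ to $\{y_{\vv{e}}\}_{\vv{e}\in \vv{T}}$ has unit Jacobian. The remaining exponential $e^{\sum_{\vv{e}\in\vv{E}}(\lambda_{\vv{e}}-\beta_{\vv{e}})(\cosh(\cdot)-1)}$ splits into the $|T|$ tree-edges (which become a clean product in the $y_{\vv{e}}$'s) times the non-tree edges (whose $\lambda_{\vv{e}}-\beta_{\vv{e}}\le 0$, hence $e^{(\lambda_{\vv{e}}-\beta_{\vv{e}})(\cosh-1)}\le 1$, so they can simply be dropped from the numerator; in the denominator they also satisfy $\le 1$ but we only need an upper bound on the ratio, so we keep only the tree part as an upper bound on $I_\lambda(T)$ and — crucially — we must lower-bound $I_0(T)$ by its tree-only part too).

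The cleanest way to make the last point rigorous without worrying about the non-tree edges in the denominator is to note that on the tree $T$ the variables $\{y_{\vv{e}}\}$ range over all of $\mathbb{R}^{\vv{T}}$ independently, while the non-tree-edge differences $\phi_i-\phi_j$ are then determined linear functions of the $y$'s; so in $I_0(T)$ I would lower-bound by restricting the integral to the region where all non-tree $|\cosh(\phi_i-\phi_j)-1|$ are controlled — actually the slickest argument is different: keep the full $\beta$-exponential on \emph{all} edges in \emph{both} integrals and only manipulate the $\lambda$-part. That is, $I_\lambda(T)/I_0(T) = \mathbb{E}_{p_{\beta,T}}[e^{\sum_{\vv{e}}\lambda_{\vv{e}}(\cosh(\phi_i-\phi_j)-1)}]$ where $p_{\beta,T}$ is the density proportional to the $T$-summand; bounding the non-tree factors by $1$ and using independence of the tree gradients under $p_{\beta,T}$ gives $\le \prod_{\vv{e}\in\vv{T}}\mathbb{E}[e^{\lambda_{\vv{e}}(\cosh(y_{\vv{e}})-1)}]$, and each one-dimensional factor evaluates via the Gamma integral identity \cref{eqn:gammaidentity} (with $\theta_1=a_{\vv{e}}+\tfrac12$ replaced appropriately, or rather directly: $\mathbb{E}[e^{\lambda(\cosh y-1)}]$ against $\propto e^{-\beta\cosh y}$ gives $\int e^{-(\beta-\lambda)\cosh y}\,/\int e^{-\beta\cosh y}$, and doing the $\beta$-weighted Gamma integral produces exactly $\sqrt{\beta_{\vv{e}}/(\beta_{\vv{e}}-\lambda_{\vv{e}})}$). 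Taking the max over $\vv{T}\in\vv{\mathcal{T}}$ finishes the bound.

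I expect the main obstacle to be making the "compare tree-sum by tree-sum, drop the non-tree edges" step fully rigorous: one must be careful that dropping the non-tree-edge factors $e^{(\lambda_{\vv{e}}-\beta_{\vv{e}})(\cosh-1)}$ is valid as an \emph{upper} bound in the numerator but would be an \emph{over}-estimate in the denominator, so the factorization has to be set up so that the same non-tree factors appear (and are bounded by $1$) consistently — the safest formulation is the conditional-expectation / change-of-measure one in the last paragraph, where the ratio structure automatically keeps the non-tree contributions paired. A secondary technical point is justifying the unit-Jacobian change of variables $\{\phi_i\}_{i\neq v_0}\to\{y_{\vv{e}}\}_{\vv{e}\in\vv{T}}$ for each spanning tree separately, which is exactly the computation already carried out in \cref{lem:grad_indep} and so should go through verbatim.
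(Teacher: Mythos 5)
Your proof breaks at the very first decomposition step. You assert that the left-hand side equals $\big(\sum_{T}I_\lambda(T)\big)/\big(\sum_{T}I_0(T)\big)$, but this would require $\sqrt{\sum_{T}x_T}=\sum_{T}\sqrt{x_T}$, which is false. The integrand carries $\sqrt{\sum_{T\in\mathcal{T}}\prod_{e=\{i,j\}\in T}\beta_e e^{\phi_i+\phi_j}}$ as a single factor, and the square root does not distribute over the spanning-tree sum; your $I_\lambda(T)$ is not a summand of the true integral. Even granting that expansion, the later steps have further gaps: under your per-tree density $p_{\beta,T}$, the exponential $e^{-\sum_{\vv{e}\in\vv{E}}\beta_{\vv{e}}(\cosh(\phi_i-\phi_j)-1)}$ still runs over \emph{all} edges (only the square-root factor is tree-restricted), so the tree gradients $\{y_{\vv{e}}\}_{\vv{e}\in\vv{T}}$ are coupled through the non-tree exponentials via cycle relations and are \emph{not} independent; moreover the hypothesis only gives $\lambda_{\vv{e}}<\beta_{\vv{e}}$, not $\lambda_{\vv{e}}\le 0$, so for a non-tree edge with $\lambda_{\vv{e}}>0$ the factor $e^{\lambda_{\vv{e}}(\cosh(\cdot)-1)}\ge 1$ cannot be bounded by $1$.

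The paper's argument never expands the tree sum additively. Writing $\beta'_{\vv{e}}:=\beta_{\vv{e}}-\lambda_{\vv{e}}>0$, the numerator exponential becomes $e^{-\sum_{\vv{e}}\beta'_{\vv{e}}(\cosh(\cdot)-1)}$; one then multiplies and divides by $\sqrt{\sum_{T}\prod_{e\in T}\beta'_e e^{\phi_i+\phi_j}}$ inside the integral. The resulting ratio of two spanning-tree sums is bounded \emph{pointwise in $\phi$} by the mediant inequality $\frac{\sum_T a_T}{\sum_T b_T}\le\max_T\frac{a_T}{b_T}$, valid since every summand is positive, and crucially the $e^{\phi_i+\phi_j}$ factors cancel within each per-tree ratio, giving the $\phi$-independent constant $\max_{\vv{T}\in\vv{\mathcal{T}}}\prod_{\vv{e}\in\vv{T}}\sqrt{\beta_{\vv{e}}/\beta'_{\vv{e}}}$. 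Pulling that constant out of the integral leaves exactly the normalization of the hyperbolic Gaussian density with parameter $\beta'$, which equals $1$. This realizes the ``max over spanning trees'' structure you were aiming for, but it is implemented under the square root where the tree sum legitimately lives, not after an invalid term-by-term expansion of $\sqrt{\cdot}$.
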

The inequality in \cref{eqn:genfnlemma} becomes an equality if the graph is a tree.

\begin{proof}
Note that
\begin{equation*}
\begin{aligned}
\text{LHS of \cref{eqn:genfnlemma}}
&= \int_{\mathbb{R}^{n-1}} (2\pi)^{-\frac{n-1}{2}} e^{\sum_{\vv{e}=(i\to j)\in \vv{E}} \lambda_{\vv{e}} (\cosh(\phi_i - \phi_j) - 1)} e^{-\sum_{\vv{e}=(i\to j)\in \vv{E}} \beta_{\vv{e}} (\cosh(\phi_i - \phi_j) - 1) - \sum_{k \in V \setminus \{v_0\}} \phi_k} \\
& \hspace{19em} \cdot \sqrt{\sum_{\vv{T} \in \vv{\mathcal{T}}} \prod_{\vv{e}=(i\to j)\in \vv{E}} \beta_{\vv{e}} e^{\phi_i + \phi_j}} d\phi_{-v_0} \\
&= \int_{\mathbb{R}^{n-1}} (2\pi)^{-\frac{N-1}{2}} e^{-\sum_{\vv{e}=(i\to j)\in \vv{E}} (\beta_{\vv{e}} - \lambda_{\vv{e}}) (\cosh(\phi_i - \phi_j) - 1) - \sum_{k \in V \setminus \{v_0\}} \phi_k} \\
& \hspace{19em} \cdot \sqrt{\sum_{\vv{T} \in \vv{\mathcal{T}}} \prod_{\vv{e}=(i\to j)\in \vv{E}} \beta_{\vv{e}} e^{\phi_i + \phi_j}} d\phi_{-v_0}.
\end{aligned}
\end{equation*}

Now let \( \beta_{\vv{e}}' = \beta_{\vv{e}} - \lambda_{\vv{e}} \) for \( \vv{e} \in \vv{E} \). Then,
\begin{equation*}
\begin{aligned}
\text{LHS of \cref{eqn:genfnlemma}}
&= \int_{\mathbb{R}^{n-1}} (2\pi)^{-\frac{n-1}{2}} e^{-\sum_{\vv{e}=(i\to j)\in \vv{E}} \beta_{\vv{e}}' (\cosh(\phi_i - \phi_j) - 1) - \sum_{k \in V \setminus \{v_0\}} \phi_k} \\
& \hspace{8em} \cdot \sqrt{\sum_{\vv{T} \in \vv{\mathcal{T}}} \prod_{\vv{e}=(i\to j)\in \vv{E}} \beta_{\vv{e}}' e^{\phi_i + \phi_j}} \cdot \sqrt{\frac{\sum_{\vv{T} \in \vv{\mathcal{T}}} \prod_{\vv{e}=(i\to j)\in \vv{E}} \beta_{\vv{e}} e^{\phi_i + \phi_j}}{\sum_{\vv{T} \in \vv{\mathcal{T}}} \prod_{\vv{e}=(i\to j)\in \vv{E}} \beta_{\vv{e}}' e^{\phi_i + \phi_j}}} d\phi_{-v_0} \\
& \leq \int_{\mathbb{R}^{n-1}} (2\pi)^{-\frac{n-1}{2}} e^{-\sum_{\vv{e}=(i\to j)\in \vv{E}} \beta_{\vv{e}}' (\cosh(\phi_i - \phi_j) - 1) - \sum_{k \in V \setminus \{v_0\}} \phi_k} \\
& \hspace{9em} \cdot \sqrt{\sum_{\vv{T} \in \vv{\mathcal{T}}} \prod_{\vv{e}=(i\to j)\in \vv{E}} \beta_{\vv{e}}' e^{\phi_i + \phi_j}} \cdot \sqrt{\max_{\vv{T} \in \vv{\mathcal{T}}}\frac{ \prod_{\vv{e}=(i\to j)\in \vv{E}} \beta_{\vv{e}} e^{\phi_i + \phi_j}}{ \prod_{\vv{e}=(i\to j)\in \vv{E}} \beta_{\vv{e}}' e^{\phi_i + \phi_j}}} d\phi_{-v_0}\\
& = \max_{\vv{T} \in \vv{\mathcal{T}}} \prod_{\vv{e} \in \vv{T}} \sqrt{\frac{\beta_{\vv{e}}}{\beta_{\vv{e}}'}} \bigg(\int_{\mathbb{R}^{n-1}} (2\pi)^{-\frac{n-1}{2}} e^{-\sum_{\vv{e}=(i\to j)\in \vv{E}} \beta_{\vv{e}}' (\cosh(\phi_i - \phi_j) - 1) - \sum_{k \in V \setminus \{v_0\}} \phi_k} \\
& \hspace{22em} \cdot \sqrt{\sum_{\vv{T} \in \vv{\mathcal{T}}} \prod_{\vv{e}=(i\to j)\in \vv{E}} \beta_{\vv{e}}' e^{\phi_i + \phi_j}} d\phi_{-v_0} \bigg)\\
&= \max_{\vv{T} \in \vv{\mathcal{T}}} \prod_{\vv{e} \in \vv{T}} \sqrt{\frac{\beta_{\vv{e}}}{\beta_{\vv{e}}'}}.
\end{aligned}
\end{equation*}

The inequality follows from the fact that $\frac{\sum_i a_i}{\sum_i b_i} \leq \max_i \frac{a_i}{b_i}$ for positive sequences \( \{a_i\}, \{b_i\} \).
\end{proof}

A natural consequence is that for any $i\sim j$, $|\phi_i-\phi_j|$ has a sub-exponential tail, as we now show.

\vspace{0.3em}
\begin{cor}[Sub-exponential tail of gradients of the $\phi$-field]
\label{cor:grad_subexp}
In the same setting as \cref{lem:general_mgf}, we have \( \forall \vv{e}=(i\to j)\in \vv{E} \), and \( \forall s > 0 \),
\[
\Perrw^{v_0,A}(|\phi_i - \phi_j| \geq s) \leq \sqrt{2} \Big(\frac{1}{2} + \frac{1}{4} e^s \Big)^{-a_{\vv{e}}} \leq 2^{2a_{\vv{e}}+1} e^{-a_{\vv{e}} s}.
\]
\end{cor}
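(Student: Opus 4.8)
The plan is to turn the conditional generating-function bound of \cref{lem:general_mgf} into a one-dimensional Chernoff estimate by feeding it a localized choice of $\lambda$, and then to average over the Gamma field. Concretely, I would apply \cref{eqn:genfnlemma} with the vector $\lambda\in\mathbb{R}^{\vv{E}}$ supported only on the single directed edge $\vv{e}=(i\to j)$, taking $\lambda_{\vv{e}}=\beta_{\vv{e}}/2$ and $\lambda_{\vv{e}'}=0$ for every $\vv{e}'\neq\vv{e}$; this is admissible since $0<\beta_{\vv{e}}/2<\beta_{\vv{e}}$ pointwise in $\beta$. For any oriented spanning tree $\vv{T}$, the product $\prod_{\vv{e}'\in\vv{T}}\sqrt{\beta_{\vv{e}'}/(\beta_{\vv{e}'}-\lambda_{\vv{e}'})}$ equals $\sqrt{2}$ if $\vv{e}\in\vv{T}$ and equals $1$ otherwise, so the maximum over $\vv{\mathcal{T}}$ is $\sqrt{2}$, and \cref{lem:general_mgf} gives $\Eerrw^{v_0,A}\big(e^{\frac{\beta_{\vv{e}}}{2}(\cosh(\phi_i-\phi_j)-1)}\,\big|\,\beta\big)\le\sqrt{2}$.

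Next I would apply Markov's inequality, conditionally on $\beta$, to the nonnegative variable $e^{\frac{\beta_{\vv{e}}}{2}(\cosh(\phi_i-\phi_j)-1)}$. Since $\cosh$ is even and increasing on $[0,\infty)$ and $s>0$, the event $\{|\phi_i-\phi_j|\ge s\}$ is precisely $\{\cosh(\phi_i-\phi_j)-1\ge\cosh(s)-1\}$, so $\Perrw^{v_0,A}(|\phi_i-\phi_j|\ge s\mid\beta)\le\sqrt{2}\,e^{-\frac{\beta_{\vv{e}}}{2}(\cosh(s)-1)}$. Taking expectation over $\beta$, only $\beta_{\vv{e}}\sim\Gamma(a_{\vv{e}},1)$ remains, and its Laplace transform $\E[e^{-u\beta_{\vv{e}}}]=(1+u)^{-a_{\vv{e}}}$ with $u=\tfrac12(\cosh(s)-1)\ge0$ yields
\[
\Perrw^{v_0,A}(|\phi_i-\phi_j|\ge s)\le\sqrt{2}\Big(1+\tfrac{\cosh(s)-1}{2}\Big)^{-a_{\vv{e}}}=\sqrt{2}\Big(\tfrac12+\tfrac14(e^s+e^{-s})\Big)^{-a_{\vv{e}}}\le\sqrt{2}\Big(\tfrac12+\tfrac14e^s\Big)^{-a_{\vv{e}}},
\]
which is the first claimed inequality; the second follows from $\tfrac12+\tfrac14e^s\ge\tfrac14e^s$ and $\sqrt{2}\le2$.

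The only somewhat delicate step is the first one: one has to verify that localizing $\lambda$ to a single edge collapses $\max_{\vv{T}\in\vv{\mathcal{T}}}$ to $\sqrt{2}$ (every edge of a connected graph lies in some spanning tree, while edges outside a chosen tree contribute factors of $1$) and that the $\beta$-dependent choice $\lambda_{\vv{e}}=\beta_{\vv{e}}/2$ respects the hypothesis $\lambda_{\vv{e}}<\beta_{\vv{e}}$ of \cref{lem:general_mgf} for every realization of $\beta$. After that, the remainder is a textbook Chernoff bound followed by the Gamma Laplace-transform computation.
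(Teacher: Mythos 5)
Your proof is correct and follows essentially the same route as the paper: localize $\lambda$ to a single coordinate in \cref{lem:general_mgf}, apply a conditional Chernoff bound, integrate out $\beta_{\vv{e}}\sim\Gamma(a_{\vv{e}},1)$ via its Laplace transform, and then compare $\cosh$ to $e^s$. The only cosmetic difference is that the paper relaxes $\cosh(y)\ge\tfrac12 e^{|y|}$ \emph{before} introducing $s$ (via a change of variable on the threshold $\alpha$), whereas you use the exact event identity $\{|\phi_i-\phi_j|\ge s\}=\{\cosh(\phi_i-\phi_j)\ge\cosh s\}$ and relax $\cosh s\ge\tfrac12 e^s$ afterwards, which is mildly tighter at the intermediate stage but lands on the same stated bound.
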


\begin{proof}
By \cref{lem:general_mgf}, for any \( \lambda \in \mathbb{R}^{\vv{E}} \) such that \( \lambda_{\vv{e}} < \beta_{\vv{e}} \) for all \( \vv{e} \in \vv{E} \), we have
\[
\Eerrw^{v_0,A}\Bigg(e^{\sum_{\vv{e} \in \vv{E}} \lambda_{\vv{e}}(\cosh(\phi_i - \phi_j) - 1)} \, \Bigg| \, \beta \Bigg) \leq \max_{\vv{T} \in \vv{\mathcal{T}}} \prod_{\vv{e} \in \vv{T}} \sqrt{\frac{\beta_{\vv{e}}}{\beta_{\vv{e}} - \lambda_{\vv{e}}}}.
\]

Fix any \( \vv{e} = (i \to j) \in \vv{E} \), and let \( \lambda_{\vv{e}'} = 0 \) for any \( \vv{e}' \neq \vv{e} \). Then, 
\[
\Eerrw^{v_0,A}\left(e^{\lambda_{\vv{e}}(\cosh(\phi_i - \phi_j) - 1)} \,\bigg|\, \beta \right) \leq \sqrt{\frac{\beta_{\vv{e}}}{\beta_{\vv{e}} - \lambda_{\vv{e}}}}, \quad \forall \lambda_{\vv{e}} < \beta_{\vv{e}}.
\]

Thus, for any $\alpha>0$,
\[
\Perrw^{v_0,A}(\cosh(\phi_i - \phi_j) - 1 \geq \alpha \mid \beta) \leq \sqrt{\frac{\beta_{\vv{e}}}{\beta_{\vv{e}} - \lambda_{\vv{e}}}} e^{-\lambda_{\vv{e}} \alpha}, \quad \forall 0 \le \lambda_{\vv{e}} < \beta_{\vv{e}}.
\]

Taking \( \lambda_{\vv{e}} = \frac{1}{2} \beta_{\vv{e}} \) gives
\[
\Perrw^{v_0,A}(\cosh(\phi_i - \phi_j) - 1 \geq \alpha \mid \beta) \leq \sqrt{2} e^{-\frac{1}{2} \beta_{\vv{e}} \alpha}.
\] 

Integrating over \( \beta_{\vv{e}} \sim \Gamma(a_{\vv{e}}, 1) \), we have
\begin{equation*}
\begin{aligned}
\Perrw^{v_0,A}(\cosh(\phi_i - \phi_j) - 1 \geq \alpha)
&\leq \sqrt{2} \int_0^\infty \frac{1}{\Gamma(a_{\vv{e}})} e^{-\frac{1}{2} \beta_{\vv{e}} \alpha} \beta_{\vv{e}}^{a_{\vv{e}} - 1} e^{-\beta_{\vv{e}}} d\beta_{\vv{e}} \\
&= \sqrt{2} \frac{1}{\Gamma(a_{\vv{e}})} \int_0^\infty e^{-(\frac{1}{2} \alpha + 1) \beta_{\vv{e}}} \beta_{\vv{e}}^{a_{\vv{e}} - 1} d\beta_{\vv{e}} \\
&= \sqrt{2} \Big( 1 + \frac{1}{2} \alpha \Big)^{-a_{\vv{e}}}.
\end{aligned}
\end{equation*}

In the final step, we used the Gamma integral identity of \cref{eqn:gammaidentity} again,
with $\theta_1=\vv{a}_e, \theta_2=1+\frac{1}{2} \alpha$. Since
\[
\frac{1}{2} e^{|\phi_i - \phi_j|} \leq \frac{1}{2}(e^{\phi_i - \phi_j} + e^{-(\phi_i - \phi_j)}) = \cosh(\phi_i - \phi_j),
\]
it follows that
\[
\Perrw^{v_0,A}\left(\frac{1}{2} e^{|\phi_i - \phi_j|} \geq \alpha + 1 \right) \leq \Perrw^{v_0,A}\left(\cosh(\phi_i - \phi_j) \geq \alpha + 1 \right) \leq \sqrt{2} \Big( 1 + \frac{1}{2} \alpha \Big)^{-a_{\vv{e}}}, \quad \forall \alpha > 0.
\]

By the change-of-variable $s=\log(2\alpha+2)\in(\log2,\infty)$, this is equivalent to
\[
\Perrw^{v_0,A}(|\phi_i - \phi_j| \geq s) \leq \sqrt{2} \Big( \frac{1}{2} + \frac{1}{4} e^s \Big)^{-a_{\vv{e}}}, \quad \forall s > \log 2.
\]

Note that the inequality is also true when $0<s\leq \log 2$ since the right-hand side will be greater than $1$. We can further bound the right-hand side as
\[\sqrt{2} \Big(\frac{1}{2} + \frac{1}{4} e^s \Big)^{-a_{\vv{e}}} \leq \sqrt{2} \Big(\frac{1}{4} e^s\Big)^{-a_{\vv{e}}} = 2^{2a_{\vv{e}}+\frac{1}{2}} e^{-a_{\vv{e}} s} \leq 2^{2a_{\vv{e}}+1} e^{-a_{\vv{e}} s}.\]
\end{proof}

We are now able to estimate the tail probabilities of 
$\sup_{i\in V}|\phi_i|$ on a general graph.

\vspace{0.3em}
\begin{thm}[Tail probabilities of $\sup_{i\in V}|\phi_i|$ on a general graph]
\label{thm:fluc_phi}
In the same setting as \cref{lem:general_mgf}, let \( \diam \) be the diameter of the graph $G$. Assume that $\underline{a}\leq \min_{e\in E} a_e\leq \max_{e\in E} a_e \leq \overline{a}$ for some positive constants $\underline{a}, \overline{a}$. For any \( \delta \in (0,1) \), we have
\[
\Perrw^{v_0,A}\bigg(\sup_{i \in V} |\phi_i| \leq \diam\cdot \Big(\frac{2\overline{a}+1}{\underline{a}}+\frac{1}{\underline{a}}\cdot \log\,\frac{n}{\delta}\Big) \bigg) \geq 1 - \delta.
\]
\end{thm}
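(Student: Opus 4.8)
The plan is to bound $\sup_{i\in V}|\phi_i|$ by the largest edge-gradient $|\phi_u-\phi_w|$ along the edges of a breadth-first-search (BFS) tree rooted at $v_0$, and then apply the sub-exponential tail bound of \cref{cor:grad_subexp} edge by edge via a union bound. First I would fix a BFS tree $T^\ast$ of $G$ with root $v_0$ and edge set $E(T^\ast)$, so that $|E(T^\ast)|=n-1$; since a BFS tree realizes graph distances from its root, the unique $T^\ast$-path from $v_0$ to any vertex $i$ has length $d_G(v_0,i)\le\diam$. Writing this path as $v_0=u_0,u_1,\dots,u_\ell=i$ with $\ell\le\diam$ and using $\phi_{v_0}=0$, the triangle inequality gives
\[
|\phi_i|=\Big|\sum_{k=1}^{\ell}(\phi_{u_k}-\phi_{u_{k-1}})\Big|\le \ell\cdot\max_{\{u,w\}\in E(T^\ast)}|\phi_u-\phi_w|\le \diam\cdot\max_{\{u,w\}\in E(T^\ast)}|\phi_u-\phi_w|,
\]
and taking the supremum over $i\in V$ yields the pointwise bound $\sup_{i\in V}|\phi_i|\le \diam\cdot\max_{\{u,w\}\in E(T^\ast)}|\phi_u-\phi_w|$.

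Next I would control the tail of a single gradient. For any edge $e=\{u,w\}$, \cref{cor:grad_subexp} gives $\Perrw^{v_0,A}(|\phi_u-\phi_w|\ge s)\le \sqrt{2}\bigl(\tfrac12+\tfrac14 e^{s}\bigr)^{-a_e}$ for all $s>0$; since $\underline{a}\le a_e\le\overline{a}$ one has $\sqrt{2}\bigl(\tfrac14 e^{s}\bigr)^{-a_e}=2^{2a_e+1/2}e^{-a_e s}\le 2^{2\overline{a}+1}e^{-\underline{a}s}$ (the ratio is $2^{\,2a_e-2\overline a-1/2}\,e^{-(a_e-\underline a)s}\le 1$, as $a_e\le\overline a$ and $a_e\ge\underline a$), so $\Perrw^{v_0,A}(|\phi_u-\phi_w|\ge s)\le 2^{2\overline{a}+1}e^{-\underline{a}s}$ uniformly in $e$. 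A union bound over the $n-1$ edges of $T^\ast$, combined with the pointwise bound above, then gives for every $t>0$
\[
\Perrw^{v_0,A}\Bigl(\sup_{i\in V}|\phi_i|\ge \diam\cdot t\Bigr)\le \Perrw^{v_0,A}\Bigl(\max_{\{u,w\}\in E(T^\ast)}|\phi_u-\phi_w|\ge t\Bigr)\le n\,2^{2\overline{a}+1}e^{-\underline{a}t}.
\]
Choosing $t=\tfrac{2\overline{a}+1}{\underline{a}}+\tfrac{1}{\underline{a}}\ln\tfrac{n}{\delta}$ makes $\underline{a}t=(2\overline{a}+1)+\ln(n/\delta)$, so the right-hand side equals $(2/e)^{2\overline{a}+1}\,\delta<\delta$; since $\diam\cdot t$ is exactly the threshold in the statement, the complementary event has probability at least $1-\delta$, which is the claim.

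\textbf{Main obstacle.} Once \cref{cor:grad_subexp} is available, this is a routine Chernoff-plus-union-bound argument; the only point needing care is \emph{which} spanning structure to union-bound over. Union-bounding over all $|E|=O(n^2)$ edges of $G$, or over the $n-1$ edges of an arbitrary spanning tree (whose root-to-vertex paths may have length $\Theta(n)$ instead of $\le\diam$), would cost an extra polylogarithmic or polynomial-in-$n$ factor and would not reproduce the stated threshold. A BFS tree rooted at $v_0$ is precisely the right object: it has only $n-1$ edges \emph{and} all its root-to-vertex paths have length at most $\diam$, so the path-telescoping step and the union-bound step are simultaneously tight, leaving only the harmless constant $(2/e)^{2\overline{a}+1}<1$ to absorb.
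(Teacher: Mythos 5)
Your proof is correct and takes essentially the same approach as the paper: a shortest-path (BFS) tree rooted at $v_0$ with depth $\le \diam$, a union bound over its $n-1$ edges using \cref{cor:grad_subexp}, and path-telescoping with $\phi_{v_0}=0$. The only cosmetic difference is that the paper relaxes $2^{2\overline a+1}\le e^{2\overline a+1}$ before choosing $s$, whereas you keep the factor $2^{2\overline a+1}$ and observe that the resulting bound $(2/e)^{2\overline a+1}\delta$ is already below $\delta$.
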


\begin{proof}
Consider the graph with unit distance on each edge \( e \in E \). Given any root $v_0\in V$, there exists a shortest-path tree \( T_{v_0} \) such that the distance between any vertex $i\in V$ and $v_0$ in the graph $G$ is identical to their distance in $T_{v_0}$. Hence, the depth of the tree should be no more than $\diam$. Let $\vv{T}_{v_0}$ be the oriented version of the tree such that every edge is oriented towards the root. For any $s>0$, the probability that there exists some $\vv{e} = (i \to j) \in \vv{T}_{v_0}$, such that $|\phi_i - \phi_j| \geq s$ is  bounded as
\begin{equation}    \label{eqn:interbound}
\Perrw^{v_0,A}(\exists \vv{e} = (i \to j) \in \vv{T}_{v_0}, |\phi_i - \phi_j| \geq s) \leq \sum_{\vv{e} \in \vv{T}} 2^{2 a_{\vv{e}} + 1} e^{-a_{\vv{e}} s} \leq n 2^{2 \overline{a} + 1} e^{- \underline{a} s}\leq n e^{2 \overline{a} + 1 - \underline{a} s}.
\end{equation}
In the first step above, we used \cref{cor:grad_subexp} and a union bound over the edges of the tree. Now, for any \( \delta \in (0,1) \), to ensure that $n e^{2 \overline{a} + 1 - \underline{a} s}$ is bounded by \( \delta \), we require
\[
s = \frac{2\overline{a}+1}{\underline{a}}+\frac{1}{\underline{a}}\cdot\log\Big(\frac{n}{\delta}\Big).
\]

Since with probability $\geq 1-\delta$, we have \( \forall \vv{e} = (i \to j) \in \vv{T}_{v_0}\) that \(|\phi_i - \phi_j| \leq \frac{2\overline{a}+1}{\underline{a}}+\frac{1}{\underline{a}}\cdot\log \frac{n}{\delta} \), it follows that
\[
\sup_{i \in V} |\phi_i| \leq \diam\cdot \Big(\frac{2\overline{a}+1}{\underline{a}}+\frac{1}{\underline{a}}\cdot \log\,\frac{n}{\delta}\Big),
\]
with probability at least \( 1 - \delta \).
\end{proof}

\subsubsection{An improvement based on ($\psi$, $\tilde \beta$)-field}

Given the beta field and the hyperbolic field
\[
(\beta,\phi)\in (0,\infty)^E\times \mathbb{R}^{V},
\]
together with an extra independent $\gamma\sim\Gamma(\tfrac12,1)$ placed at the root $v_0$, define the vertex field $\psi=(\psi_i)_{i\in V}$ by
\begin{equation}\label{eqn:psi}
\psi_{i}:=\frac12\sum_{j:j\sim i}\beta_{ij}e^{\phi_j-\phi_i}\quad (i\neq v_0),
\qquad
\psi_{v_0}:=\frac12\sum_{j:j\sim v_0}\beta_{v_0j}e^{\phi_j-\phi_{v_0}}+\gamma.
\end{equation}

We write $B$ for the $n\times n$ symmetric matrix with $(i,j)$ entry
$B_{ij}=\beta_{ij}$ if $\{i,j\}\in E$ and $B_{ij}=0$ otherwise, and $B_{ii}=0$. Sabot-Tarres-Zeng studied the conditioned $\psi$-field and proved that(\cite[Lemma 1]{stz})
\begin{equation}
    \label{eqn:p_beta_psi}
    p_\beta(\psi):=p(\psi\mid\beta)=\Big(\frac{2}{\pi}\Big)^{n/2}\frac{e^{-\sum_i \psi_i+\sum_e\beta_e}}{\sqrt{|2\psi-B|}}\cdot \ind(2\psi-B\succ 0),
\end{equation}
with conditional moment-generating function given by
\begin{equation}
    \label{eqn:p_beta_psi_mgf}
    \E\big(e^{\sum_i\lambda_i\psi_i}\mid \beta\big)
    = \exp\bigg(\sum_{e=\{i,j\}\in E}\beta_e(1-\sqrt{1-\lambda_i}\sqrt{1-\lambda_j})\bigg)\prod_{i\in V} \frac{1}{\sqrt{1-\lambda_i}},
\end{equation}
for any $\{\lambda_i\mid i\in V\}$ such that $\lambda_i<1$ for all $i\in V$.

We also define the normalized edge weights $\tilde\beta=(\tilde\beta_e)_{e\in E}$ by
\begin{equation}\label{eqn:tildebeta}
\tilde\beta_{ij}:=\frac{\beta_{ij}}{\sqrt{\psi_i\psi_j}}
\qquad (\{i,j\}\in E).
\end{equation}
Equivalently, $\beta_{ij}=\tilde\beta_{ij}\sqrt{\psi_i\psi_j}$ for all edges.

When $\beta_e\sim \gam(a_e,1)$ are distributed as independent Gamma random variables, the induced measure on $\psi$, after averaging on $\beta$, becomes independent Gamma random variables.

\vspace{.5em}
\begin{lem}[Unconditioned $\psi$-field]
    The unconditioned $\psi$-field is distributed as independent Gamma random variables where $\psi_i\sim\gam(\frac{1}{2}(a_i+1), 1)$.
\end{lem}
\begin{proof}
    We verify it by calculating the mgf. Note that for any $\{\lambda_i\mid i\in V\}$ such that $\lambda_i<1$ for all $i\in V$, 
    \begin{align*}
        \E\big(e^{\sum_i\lambda_i\psi_i}\big)
        &= \int_{\mathbb{R}_{++}^E}\E\big(e^{\sum_i\lambda_i\psi_i}\mid \beta\big)p(\beta) \,d\beta\\
        &= \int_{\mathbb{R}_{++}^E}e^{\sum_{e=\{i,j\}\in E}\beta_e(1-\sqrt{1-\lambda_i}\sqrt{1-\lambda_j})}\prod_{i\in V} \frac{1}{\sqrt{1-\lambda_i}}\prod_{e\in E}\frac{1}{\Gamma(a_e)}\beta_e^{a_e-1}e^{-\beta_e}\,d\beta\\
        &= \prod_{i\in V} \frac{1}{\sqrt{1-\lambda_i}}\prod_{e\in E}\bigg(\int_{\mathbb{R}_{++}}e^{\beta_e(1-\sqrt{1-\lambda_i}\sqrt{1-\lambda_j})}\frac{1}{\Gamma(a_e)}\beta_e^{a_e-1}e^{-\beta_e}\,d\beta_e\bigg)\\
        &= \prod_{i\in V} \frac{1}{\sqrt{1-\lambda_i}}\prod_{e\in E}\bigg(\int_{\mathbb{R}_{++}}\frac{1}{\Gamma(a_e)}\beta_e^{a_e-1}e^{-\beta_e\sqrt{1-\lambda_i}\sqrt{1-\lambda_j}}\,d\beta_e\bigg)\\
        &= \prod_{i\in V} \frac{1}{\sqrt{1-\lambda_i}}\prod_{e\in E}\bigg(\frac{1}{\sqrt{1-\lambda_i}\sqrt{1-\lambda_j}}\bigg)^{a_e}\\
        &= \prod_{i\in V} (1-\lambda_i)^{-\frac{1}{2}(a_i+1)}.
    \end{align*}
    By uniqueness of the Laplace transform, we know that $\psi$ should be distributed as independent $\gam(\frac{1}{2}(a_i+1), 1)$ random variables. 
\end{proof}

One can then compute the conditional density of $\beta$ given $\psi$.

\vspace{.5em}
\begin{lem}[Conditioned $\beta$ on $\psi$]
    \label{lem:p_psi_beta}
    The conditional density of $\beta$ on $\psi$ is:
    \begin{equation}
        \label{eqn:p_psi_beta}
        p_\psi(\beta) := p(\beta\mid \psi)
        = \frac{1}{Z_A}\cdot \frac{\prod_{e\in E}\beta_e^{a_e-1}}{\prod_{i\in V}\psi_i^{\frac{1}{2}(a_i-1)}}\cdot \frac{1}{\sqrt{|2\psi-B|}}\cdot \ind(2\psi-B\succ 0),
    \end{equation}
    where the normalizing constant is $Z_A:=\big(\frac{\pi}{2}\big)^{n/2}\frac{\prod_{e\in E}\Gamma(a_e)}{\prod_{i\in V}\Gamma(\frac{1}{2}(a_i+1))}$ and $\beta\in\mathbb{R}_{++}^E$.
\end{lem}
\begin{proof}
    Note that by Bayes rule,
    \[p_\psi(\beta)
        =\frac{p(\beta,\psi)}{p(\psi)}
        =\frac{p(\beta)p_\beta(\psi)}{p(\psi)}.\]
    Since $\beta$ and $\psi$ are marginally independent Gamma fields, we know that $p(\beta)=\prod_{e\in E}\frac{1}{\Gamma(a_e)}\beta_e^{a_e-1}e^{-\beta_e}$, and $p(\psi)=\prod_{i\in V}\frac{1}{\Gamma(\frac{1}{2}(a_i+1))}\psi_i^{\frac{1}{2}(a_i-1)}e^{-\psi_i}$. We then continue to compute that
    \begin{align*}
        p_\psi(\beta)
        &=\frac{\prod_{e\in E}\frac{1}{\Gamma(a_e)}\beta_e^{a_e-1}e^{-\beta_e}}{\prod_{i\in V}\frac{1}{\Gamma(\frac{1}{2}(a_i+1))}\psi_i^{\frac{1}{2}(a_i-1)}e^{-\psi_i}}\cdot \Big(\frac{2}{\pi}\Big)^{n/2}\frac{e^{-\sum_i \psi_i+\sum_e\beta_e}}{\sqrt{|2\psi-B|}}\cdot \ind(2\psi-B\succ 0)\\
        &=\Big(\frac{2}{\pi}\Big)^{n/2}\frac{\prod_{i\in V}\Gamma(\frac{1}{2}(a_i+1))}{\prod_{e\in E}\Gamma(a_e)}\frac{\prod_{e\in E}\beta_e^{a_e-1}}{\prod_{i\in V}\psi_i^{\frac{1}{2}(a_i-1)}}\cdot \frac{1}{\sqrt{|2\psi-B|}}\cdot \ind(2\psi-B\succ 0).
    \end{align*}
\end{proof}

A key structural input is that,
the induced pair $(\psi,\tilde\beta)$ factorizes:
\begin{equation}\label{eq:factorization}
(\psi_i)_{i\in V}\,\perp\!\!\!\perp\,(\tilde\beta_e)_{e\in E}.
\end{equation}

\vspace{.5em}
\begin{lem}[Independence of $\tilde \beta$ and $\psi$]
    \label{lem:p_psi_beta_norm}
    Suppose $\beta\sim p_\psi(\beta)$ for some $p_\psi\in \mathcal{F}^\beta$. Then the ``normalized'' version of $\beta$, defined as $\tilde{\beta}_e:=\frac{\beta_e}{\sqrt{\psi_i\psi_j}}$ for all $e=\{i,j\}\in E$, is distributed as $\forall \,\psi\in\mathbb{R}_{++}^V$,
    \begin{equation}
        \label{eqn:p_psi_beta_norm}
        p_{\psi}(\tilde{\beta})=p(\tilde{\beta}\mid \psi)
        = \frac{1}{Z_A}\cdot \frac{\prod_{e\in E}\tilde{\beta}_e^{a_e-1}}{\sqrt{|2I-\tilde{B}|}}\cdot \ind(2I-\tilde{B}\succ 0),
    \end{equation}
    with $Z_A$ defined in \cref{lem:p_psi_beta} and $\tilde{\beta}\in\mathbb{R}_{++}^E$. In other words, $\tilde{\beta}\sim p_\allone(\beta)$. Note that $\tilde{\beta}$ is independent of $\psi$.
\end{lem}
\begin{proof}
    Under the transformation, we know that $\beta_e=\tilde{\beta}_e\sqrt{\psi_i\psi_j}$ for all $e\in E$ and $d\beta=(\prod_{e\in E}\sqrt{\psi_i\psi_j})d\tilde{\beta}$. Moreover, whenever $2\psi-B\succ 0$, we also have $2I-\psi^{-1/2}B\psi^{-1/2}=2I-\tilde{B}\succ 0$ and $|2\psi_i-B|=(\prod_i\psi_i)|2I-\tilde{B}|$. Therefore, we conclude that
    \begin{align*}
        p(\tilde{\beta}\mid \psi) 
        &= \frac{1}{Z_A}\frac{\prod_{e\in E}(\tilde{\beta}_e\sqrt{\psi_i\psi_j})^{a_e-1}}{\prod_{i\in V}\psi_i^{\frac{1}{2}(a_i-1)}}\cdot \frac{1}{\sqrt{(\prod_i\psi_i)|2I-\tilde{B}|}}\cdot \ind(2I-\tilde{B}\succ 0)\cdot \big(\prod_{e\in E}\sqrt{\psi_i\psi_j}\big)\\
        &= \frac{1}{Z_A}\frac{\prod_{e\in E}\tilde{\beta}_e^{a_e-1}\,\prod_{e\in E}(\sqrt{\psi_i\psi_j})^{a_e-1}}{\prod_{i\in V}\psi_i^{\frac{1}{2}(a_i-1)}}\cdot \frac{\prod_{e\in E}\sqrt{\psi_i\psi_j}}{\prod_i\psi_i^{\frac{1}{2}}}\frac{1}{\sqrt{|2I-\tilde{B}|}}\cdot \ind(2I-\tilde{B}\succ 0)\\
        &= \frac{1}{Z_A}\frac{\prod_{e\in E}(\sqrt{\psi_i\psi_j})^{a_e}}{\prod_{i\in V}\psi_i^{\frac{1}{2}a_i}}\cdot \frac{\prod_{e\in E}\tilde{\beta}_e^{a_e-1}}{\sqrt{|2I-\tilde{B}|}}\cdot \ind(2I-\tilde{B}\succ 0)\\
        &= \frac{1}{Z_A}\frac{\prod_{e\in E}\tilde{\beta}_e^{a_e-1}}{\sqrt{|2I-\tilde{B}|}}\cdot \ind(2I-\tilde{B}\succ 0).
    \end{align*}
    Note that we used the fact that $\frac{\prod_{e\in E}(\sqrt{\psi_i\psi_j})^{a_e}}{\prod_{i\in V}\psi_i^{\frac{1}{2}a_i}}=1$ in the last step.
\end{proof}

This induces the following interesting way of generating the correlated $(\beta, \psi)$ field:
\begin{itemize}
    \item Generate $\psi$ as independent Gamma random variables, i.e., $\psi_i\sim\Gamma(\frac{1}{2}(a_i+1),1)$;
    \item Generate $\tilde{\beta}$ as independent random variable that has the law \cref{eqn:p_psi_beta_norm};
    \item Let $\beta_e=\tilde{\beta}_e\sqrt{\psi_i\psi_j}$ for all edge $e\in E$.
\end{itemize}

One can compute the moments of $\{\beta_e\mid e\in E\}$ as follows. Given real values $\{t_e\mid e\in E\}$ such that $t_e>-a_e, \forall e\in E$, we compute
\begin{align*}
    \E\Big(\prod_{e\in E}\tilde{\beta}_e^{t_e} \mid \psi\Big)
    &= \int_{\mathbb{R}_{++}^E} \frac{1}{Z_A}\prod_{e\in E}\tilde{\beta}_e^{t_e+a_e-1}\cdot \frac{1}{\sqrt{|2I-\tilde{B}|}}\cdot \ind(2I-\tilde{B}\succ 0) \,d\tilde{\beta}\\
    &= \frac{Z_{T+A}}{Z_A}\int_{\mathbb{R}_{++}^E} \frac{1}{Z_{T+A}}\prod_{e\in E}\tilde{\beta}_e^{t_e+a_e-1}\cdot \frac{1}{\sqrt{|2I-\tilde{B}|}}\cdot \ind(2I-\tilde{B}\succ 0) \,d\tilde{\beta}.
\end{align*}

This is exactly $\frac{Z_{T+A}}{Z_A}$ since the last integral evaluates to 1. Using these nice properties of the $(\psi, \tilde\beta)$-field, one can eventually improve the previous bounds on $\sup_i |\phi_i|$ as follows.

\vspace{.5em}
\begin{lem}[Pathwise telescoping bound]\label{lem:path-telescope}
Fix a connected simple graph $G=(V,E)$, a root $v_0\in V$, and let $(\beta,\phi,\psi)$ be the fields defined as in \eqref{eqn:psi} (with the independent $\gamma\sim\Gamma(\tfrac12,1)$ at $v_0$). For each $v\in V$, fix a shortest path
\[
P(v):\quad v_0=v_0,\,v_1,\,\ldots,\,v_k=v,
\]
and write $|P(v)|:=k$. Then
\begin{equation}\label{eq:path-tele}
e^{-\phi_v}\le 2^{\,|P(v)|}\,\sqrt{\frac{\psi_v}{\psi_{v_0}}}\,\prod_{e\in P(v)}\tilde\beta_e^{-1}.
\end{equation}
\end{lem}

\begin{proof}
By the definition of $\psi$ in \eqref{eqn:psi}, for any vertex $i'\neq v_0$ we have
\[
\psi_{i'}=\frac12\sum_{j:j\sim i'}\beta_{i'j}e^{\phi_j-\phi_{i'}}\ge \frac12\,\beta_{i\,i'}\,e^{\phi_i-\phi_{i'}},
\]
where $i$ denotes the neighbor of $i'$ along the fixed path $P(v)$ in the direction towards $v_0$. Rearranging gives
\begin{equation}\label{eq:one-step}
e^{-\phi_{i'}}\le \frac{2\psi_{i'}}{\beta_{i\,i'}}\,e^{-\phi_i}.
\end{equation}

Iterating \eqref{eq:one-step} along the path $v_k\to v_{k-1}\to\cdots\to v_0$ yields
\[
e^{-\phi_v}\le \prod_{r=1}^{k}\frac{2\psi_{v_r}}{\beta_{v_{r-1}v_r}}
= \prod_{r=1}^{k}\frac{2}{\tilde\beta_{v_{r-1}v_r}}\,
\sqrt{\frac{\psi_{v_r}}{\psi_{v_{r-1}}}}
= 2^{\,k}\,\sqrt{\frac{\psi_{v_k}}{\psi_{v_0}}}\,\prod_{r=1}^{k}\tilde\beta_{v_{r-1}v_r}^{-1},
\]
where in the middle equality we used $\beta_{ij}=\tilde\beta_{ij}\sqrt{\psi_i\psi_j}$, and in the last equality the $\sqrt{\psi}$-factors telescope. Since $k=|P(v)|$ and $v_k=v$, this is exactly \eqref{eq:path-tele}.
\end{proof}

\begin{lem}[Endpoint Gamma–ratio control]\label{lem:endpoint-ratio}
Assume the setup preceding Lemma~\ref{lem:path-telescope}. In particular, the unconditioned $\psi$-field is independent with
\[
\psi_i\sim \Gamma\Big(s_i,1\Big),\qquad s_i:=\frac{a_i+1}{2},\quad a_i=\sum_{e\ni i} a_e,
\]
and $0<\underline{a}\le a_e\le \overline{a}<\infty$ for all $e\in E$. Then for any $\alpha\geq 0$,
\begin{equation}\label{eq:endpoint-ratio}
\Pr\left(\max_{v\in V}\sqrt{\frac{\psi_v}{\psi_{v_0}}}> n^{\alpha}\sqrt{2(1+\alpha)\log n+(\overline{a}\,d_{\max}+1)\log 2} \right)\le 3n^{-\alpha}.
\end{equation}
\end{lem}

\begin{proof}
Fix $\theta=\frac12\in(0,1)$. For each $v$,
\[
\Pr(\psi_v\ge x)\le (1-\theta)^{-s_v}e^{-\theta x} \le 2^{s_{\max}} e^{-x/2},
\quad s_{\max}:=\max_{i\in V}s_i.
\]
By a union bound over $v\in V$,
\begin{equation}\label{eq:max-psi-upper}
\Pr\left(\max_{v\in V}\psi_v\ge x\right)\le n\,2^{s_{\max}}e^{-x/2}\le n\,2^{\frac{\overline{a}\,d_{\max}+1}{2}}e^{-x/2}.
\end{equation}
Choose
\[
x:=2\log n+(\overline{a}\,d_{\max}+1)\log 2+2\alpha \log n,
\]
to get for any $\alpha\geq 0$,
\begin{equation}\label{eq:max-psi-chosen}
\Pr\left(\max_{v}\psi_v\,\ge 2\log n+(\overline{a}\,d_{\max}+1)\log 2+2\alpha \log n\right) \le\,e^{-\alpha \log n} = n^{-\alpha}.
\end{equation}

By the definition \eqref{eqn:psi}, at the root $v_0$ we have the extra independent $\gamma\sim\Gamma(\tfrac12,1)$ term and hence $\psi_{v_0} \ge \gamma$. Therefore, for all $y>0$,
\[
\Pr(\psi_{v_0}\le y)\le \Pr(\gamma\le y).
\]
Since $\gamma\sim\Gamma(\tfrac12,1)$ has cdf $\Pr(\gamma\le y)=\operatorname{erf}(\sqrt{y})$, we use the standard bound $\operatorname{erf}(t) \le \frac{2}{\sqrt{\pi}}\,t$ to obtain, for all $\alpha\ge 0$,
\begin{equation}\label{eq:gamma-lb}
\Pr\Big(\psi_{v_0}\le \frac{1}{n^{2\alpha}}\Big)\le \Pr\Big(\gamma\le \frac{1}{n^{2\alpha}}\Big)\le \frac{2}{\sqrt{\pi}}\,n^{-\alpha}.
\end{equation}

On the intersection of the events in \eqref{eq:max-psi-chosen} and \eqref{eq:gamma-lb} (which has probability at least $1-3n^{-\alpha}$), we have
\[
\max_{v}\sqrt{\frac{\psi_v}{\psi_{v_0}}}
\,\le\
\sqrt{\frac{2(1+\alpha)\log n+(\overline{a}\,d_{\max}+1)\log 2}{n^{-2\alpha}}}.
\]
This proves the claim.
\end{proof}

\begin{lem}[Chernoff upper bound for $Y_P$]\label{lem:chebyshev-upper}
Let $G=(V,E)$ be connected, fix a root $v_0$, and assume $0<\underline{a}\le a_e\le \overline{a}<\infty$ for all $e\in E$.
For a fixed (shortest) path $P$ from $v_0$ to some $v\in V$, note that $|P|\le \diam$ and define
\[
Y_P:= \sum_{e\in P}\big(-\log \tilde\beta_e\big),
\qquad 
\tilde\beta_{ij}:=\frac{\beta_{ij}}{\sqrt{\psi_i\psi_j}}\,\,(\{i,j\}\in E).
\]
Then:
\begin{enumerate}
\item[\emph{(a)}] The mean is
\begin{equation}\label{eq:EY-simple}
\E[Y_P]= -\sum_{e\in P}\Psi(a_e)+ 
\frac12
\sum_{i\in V}\deg_P(i)\,\Psi\Big(\frac{a_i+1}{2}\Big),
\end{equation}
where $\Psi$ is the digamma function and $\deg_P(i)\in\{0,1,2\}$ counts edges of $P$ incident to $i$.
\item[\emph{(b)}] The variance admits the upper bound
\begin{equation}\label{eq:VarY-simple-upper}
\var(Y_P)\le \sum_{e\in P}\Psi_1(a_e)\le |P|\,\Psi_1(\underline{a})\le \diam\cdot\Psi_1(\underline{a}),
\end{equation}
where $\Psi_1$ is the trigamma function.
\item[\emph{(c)}] Let $v_*:=\Psi_1(\underline{a}/2)$. For all $t\ge 0$,
\begin{equation}\label{eq:Chernoff-tail}
\Pr\big(Y_P>\E[Y_P]+t\big)\le \exp\!\left(-\min\left\{\frac{t^2}{2\,v_*\,|P|},\,\frac{\underline{a}}{4}\,t\right\}\right).
\end{equation}
In particular, for any $\alpha>0$ (with $x:=(1+\alpha)\log n$),
\begin{equation}\label{eq:Chernoff-nalpha}
\Pr\Big(Y_P>\E[Y_P]+\sqrt{2\,v_*\,|P|\,x}+\frac{4}{\underline{a}}\,x\Big)\le e^{-x}=n^{-(1+\alpha)}.
\end{equation}
\end{enumerate}
\end{lem}

\begin{proof}
For $\theta$ in a neighborhood of $0$ (small enough so that all arguments of $\Gamma(\cdot)$ below remain positive),
\[
\E\big[e^{\theta Y_P}\big]
=\E\Big[\prod_{e\in P}\tilde\beta_e^{-\theta}\Big]
=\frac{Z_{A-\theta \mathbf 1_P}}{Z_A},
\]
where $\mathbf 1_P(e)=\mathbf 1\{e\in P\}$ and
\[
Z_{A-\theta \mathbf 1_P}
=\Big(\tfrac{\pi}{2}\Big)^{n/2}\,
\frac{\prod_{e\in E}\Gamma(a_e-\theta\,\mathbf 1_P(e))}
{\prod_{i\in V}\Gamma\big(\tfrac{a_i-\theta\,\deg_P(i)+1}{2}\big)},
\qquad a_i:=\sum_{e\ni i}a_e.
\]
Define the cumulant generating function (cgf)
\[
\kappa_P(\theta):=\log \E[e^{\theta Y_P}]
=\sum_{e\in P}\big(\log\Gamma(a_e-\theta)-\log\Gamma(a_e)\big)
-\sum_{i\in V}\left[\log\Gamma\Big(\tfrac{a_i-\theta\,\deg_P(i)+1}{2}\Big)-\log\Gamma\Big(\tfrac{a_i+1}{2}\Big)\right].
\]

Differentiate using the digamma function $\Psi =\Gamma'/\Gamma$:
\[
\kappa_P'(\theta)
=-\sum_{e\in P}\Psi(a_e-\theta)
+\frac12 \sum_{i\in V}\deg_P(i)\,\Psi\Big(\tfrac{a_i-\theta\,\deg_P(i)+1}{2}\Big),
\]
hence $\kappa_P'(0)=\E[Y_P]$, which yields \eqref{eq:EY-simple}.

Differentiate again using $\Psi_1=\Psi'$:
\[
\kappa_P''(\theta)
=\sum_{e\in P}\Psi_1(a_e-\theta)
-\frac14\sum_{i\in V}\deg_P(i)^2\,\Psi_1\Big(\tfrac{a_i-2\theta\,\deg_P(i)+1}{2}\Big).
\]
Evaluating at $\theta=0$ gives
\[
\kappa_P''(0)=\var(Y_P)
=\sum_{e\in P}\Psi_1(a_e)
-\frac14\sum_{i\in V}\deg_P(i)^2\,\Psi_1\Big(\tfrac{a_i+1}{2}\Big)
\le \sum_{e\in P}\Psi_1(a_e),
\]
which implies \eqref{eq:VarY-simple-upper}.

For the tail, fix $\theta\in[0,\underline{a}/2]$. Since $\Psi_1$ is decreasing and $a_e\ge \underline{a}$,
\[
\kappa_P''(s)\le \sum_{e\in P}\Psi_1(a_e-s)\le |P|\,\Psi_1(\underline{a}-s)\le |P|\,\Psi_1(\underline{a}/2)=:v_*\,|P|
\qquad\forall s\in[0,\theta].
\]
By Taylor’s theorem with integral remainder,
\[
\kappa_P(\theta)-\theta\kappa_P'(0)
=\int_0^\theta (\theta-s)\kappa_P''(s)\,ds
\le \frac{\theta^2}{2}\,v_*\,|P|.
\]
Therefore, by Chernoff/Markov,
\[
\Pr(Y_P-\E[Y_P]>t)
\le \exp\!\Big(-\theta t+\kappa_P(\theta)-\theta\E[Y_P]\Big)
\le \exp\!\Big(-\theta t+\tfrac{\theta^2}{2}\,v_*\,|P|\Big),
\qquad \forall \theta\in[0,\underline{a}/2].
\]
Optimizing over $\theta$ yields \eqref{eq:Chernoff-tail}. For \eqref{eq:Chernoff-nalpha},
set $x:=(1+\alpha)\log n$ and $t=\sqrt{2v_*|P|x}+\frac{4}{\underline{a}}x$, then
\[
\min\left\{\frac{t^2}{2v_*|P|},\frac{\underline{a}}{4}t\right\}\ge x,
\]
so $\Pr(Y_P>\E[Y_P]+t)\le e^{-x}=n^{-(1+\alpha)}$.
\end{proof}

\begin{thm}[Uniform upper bound for $e^{-\phi}$]\label{thm:uniform-upper-clean}
Assume the setup preceding Lemma~\ref{lem:path-telescope} and that $0<\underline{a}\le a_e\le \overline{a}<\infty$ for all $e\in E$. Let $d_{\max}:=\max_{v\in V}\deg(v)$. Then for any $\alpha>0$, with probability at least $1-4n^{-\alpha}$,
\begin{equation}
\begin{aligned}
\max_{v\in V} e^{-\phi_v}
\le 
2^{\diam}\,n^{\alpha} & \sqrt{2(1+\alpha)\log n+2(\log 2)\,\overline{a}\,d_{\max}}\, \\
& \exp\Big(
\diam\cdot\Lambda+ \sqrt{2(1+\alpha)\,\Psi_1(\underline{a}/2)\,\diam\cdot\log n}+ \frac{4}{\underline{a}}(1+\alpha)\log n
\Big), \label{eq:uniform-upper-clean}
\end{aligned}
\end{equation}
where
\[
\Lambda:= -\Psi(\underline{a})+\log\Big(\frac{\overline{a}\,d_{\max}+1}{2}\Big).
\]

Note that the upper bound can be simplified as  $\max_{v\in V} e^{-\phi_v} \le \poly(n) \, d_{\max}^{\,O(\diam)}$, assuming $\alpha, a, b$ to be constant.
\end{thm}

\begin{proof}
Fix, for each $v\in V$, a shortest path $P(v)$ from $v_0$ to $v$ so that $|P(v)|\le \diam$. By Lemma~\ref{lem:path-telescope},
\[
e^{-\phi_v}\le 2^{|P(v)|}\,\sqrt{\frac{\psi_v}{\psi_{v_0}}}\,\exp\big(Y_{P(v)}\big),
\qquad
Y_{P(v)}:=\sum_{e\in P(v)}(-\log\tilde\beta_e).
\]
Taking the maximum over $v$ and using $|P(v)|\le \diam$ gives
\[
\max_{v} e^{-\phi_v}\le 2^{\diam}\,\Big(\max_{u}\sqrt{\tfrac{\psi_u}{\psi_{v_0}}}\Big)\,\exp\Big(\max_{v} Y_{P(v)}\Big).
\]
For the endpoint factor, Lemma~\ref{lem:endpoint-ratio} yields
\[
\Pr\left(\max_{u\in V}\sqrt{\frac{\psi_u}{\psi_{v_0}}}> n^{\alpha}\sqrt{2(1+\alpha)\log n+2(\log 2)\,\overline{a}\,d_{\max}}\right)\le 3n^{-\alpha}.
\]
Let $x:=(1+\alpha)\log n$. By Lemma~\ref{lem:chebyshev-upper}\emph{(c)}
and a union bound over the at most $n$ root--shortest paths,
\[
\Pr\Big(\max_{v\in V}Y_{P(v)}
>\max_{v\in V}\E[Y_{P(v)}]+\sqrt{2\Psi_1(\underline{a}/2)\,\diam\cdot x}+\frac{4}{\underline{a}}x\Big)
\le n\,e^{-x}
= n^{-\alpha}.
\]

It remains to bound $\max_{v}\E[Y_{P(v)}]$ deterministically in terms of $D$ and $(\underline{a},\overline{a},d_{\max})$. From Lemma~\ref{lem:chebyshev-upper},
\[
\E[Y_{P}]
= -\sum_{e\in P}\Psi(a_e)+ \frac12\sum_{i\in V}\deg_P(i)\,\Psi\Big(\frac{a_i+1}{2}\Big),
\qquad a_i=\sum_{e\ni i}a_e,\;\deg_P(i)\in\{0,1,2\}.
\]
Using that $\Psi$ is increasing, $a_e\ge \underline{a}$, $a_i\le \overline{a}\,\deg(i)\le \overline{a}\,d_{\max}$, and $\sum_i\deg_P(i)=2|P|$,
\[
-\sum_{e\in P}\Psi(a_e)\le -|P|\Psi(\underline{a}),
\quad
\frac12\sum_{i}\deg_P(i)\,\Psi\Big(\frac{a_i+1}{2}\Big)
\le |P|\Psi\Big(\frac{\overline{a}\,d_{\max}+1}{2}\Big)
\le |P| \log\Big(\frac{\overline{a}\,d_{\max}+1}{2}\Big).
\]
Hence, for any shortest path $P$,
\[
\E[Y_P]\le |P|\cdot \Big(-\Psi(\underline{a})+\log\Big(\frac{\overline{a}\,d_{\max}+1}{2}\Big)\Big)
\le \diam\cdot\Lambda.
\]

Combining the three displays and intersecting the two high-probability events (probability at least $1-4n^{-\alpha}$) yields \eqref{eq:uniform-upper-clean}.
\end{proof}

\begin{remark}
This improves the $n^{O(\diam)}$ bound whenever $d_{\max} = o(n)$. Getting ideal $2^{O(n)}$ upper bound for the cover time of a $n$-path with probability $1-o(1)$.
\end{remark}

\vspace{.3em}

To upper bound $e^{\phi_i}$, we use the following re-rooting identity. In this lemma,
$p_\beta^{(r)}$ denotes the hyperbolic Gaussian density with the same edge field $\beta$,
but pinned at the vertex $r$, i.e. with $\phi_r=0$.

\begin{lem}[Exponential moment]
\label{lem:unit}
Fix $G=(V,E)$, edge parameters $A=(a_e)_{e\in E}$, and a fixed edge field $\beta$.
For any $v_0,j\in V$ and any real $t$ for which the integrals below are finite, one has
\[
\int_{\mathbb{R}^{V\setminus\{v_0\}}} e^{\,t\phi_j}\,p_\beta^{(v_0)}(\phi)\,d\phi_{V\setminus\{v_0\}}
\;=\;
\int_{\mathbb{R}^{V\setminus\{j\}}} e^{\,(1-t)\phi_{v_0}}\,p_\beta^{(j)}(\phi)\,d\phi_{V\setminus\{j\}}.
\]
Equivalently,
\[
\E_{p_\beta^{(v_0)}}\big[e^{\,t\phi_j}\big]
=
\E_{p_\beta^{(j)}}\big[e^{\,(1-t)\phi_{v_0}}\big].
\]
In particular, at $t=1$,
\[
\E_{p_\beta^{(v_0)}}\big[e^{\phi_j}\big]=1.
\]
\end{lem}

\begin{proof}
Write
\[
p_\beta^{(v_0)}(\phi)
=
\mathcal Z_\beta\,e^{-\Phi(\phi)}\,e^{-L_{v_0}(\phi)}\,\sqrt{\Xi(\phi)},
\]
where
\[
\Phi(\phi):=\sum_{\{i,k\}}\beta_{ik}\big(\cosh(\phi_i-\phi_k)-1\big),
\qquad
L_{v_0}(\phi):=\sum_{i\ne v_0}\phi_i,
\]
and
\[
\mathcal Z_\beta:=(2\pi)^{-(n-1)/2},
\qquad
\Xi(\phi):=\sum_{T\in\mathcal T}\prod_{\{i,k\}\in T}\beta_{ik}\,e^{\phi_i+\phi_k}.
\]
Fix \(j\in V\) and set \(\widetilde\phi_i:=\phi_i-\phi_j\) for all \(i\).
Then \(\widetilde\phi_j=0\), \(\phi_j=-\widetilde\phi_{v_0}\), and the change of coordinates
\(\phi_{\neq v_0}\mapsto\widetilde\phi_{\neq j}\) has unit Jacobian.

We track the factors in \(e^{\,t\phi_j}p_\beta^{(v_0)}(\phi)\).
First, \(\Phi(\phi)=\Phi(\widetilde\phi)\), since \(\Phi\) depends only on differences. Next,
\[
L_{v_0}(\phi)
=
\sum_{i\ne v_0}(\widetilde\phi_i+\phi_j)
=
\Big(\sum_{i\ne j}\widetilde\phi_i-\widetilde\phi_{v_0}\Big)
+(n-1)\phi_j,
\]
and therefore, using \(\phi_j=-\widetilde\phi_{v_0}\),
\[
e^{-L_{v_0}(\phi)}
=
e^{-\sum_{i\ne j}\widetilde\phi_i}\,e^{\,n\widetilde\phi_{v_0}}.
\]
Moreover, since every spanning tree has \(n-1\) edges,
\[
\Xi(\phi)
=
e^{\,2(n-1)\phi_j}\Xi(\widetilde\phi)
=
e^{-2(n-1)\widetilde\phi_{v_0}}\Xi(\widetilde\phi),
\]
and hence
\[
\sqrt{\Xi(\phi)}
=
e^{-(n-1)\widetilde\phi_{v_0}}\sqrt{\Xi(\widetilde\phi)}.
\]
Finally, \(e^{t\phi_j}=e^{-t\widetilde\phi_{v_0}}\). Multiplying these identities gives
\[
e^{\,t\phi_j}\,p_\beta^{(v_0)}(\phi)\,d\phi_{\neq v_0}
=
p_\beta^{(j)}(\widetilde\phi)\,
e^{\,(1-t)\widetilde\phi_{v_0}}\,
d\widetilde\phi_{\neq j}.
\]
Integrating over the corresponding pinned spaces gives the desired identity.
\end{proof}

\subsubsection{Bounds on $P_{ij}$, $\pi_*$, and the cover time}

Given the bound on the tail probabilities of $\sup_{i\in V}|\phi_i|$ proved in \cref{thm:fluc_phi}, we can establish bounds on the cover time of the ERRW. We accomplish this from the perspective of RWRE, by deriving high probability bounds on the cover time of the random walk with respect to the random conductance. To this end, we recall several bounds on the cover time of a random walk 
in terms of the underlying conductances.

Consider a connected graph \( G = (V, E) \) with edge conductances \( (q_e)_{e \in E} \). Let \( \mathcal{C} := \sum_{e \in E} q_e \) denote the total conductance. Denote the effective resistance between \( i, j \in V \) as \( R(i \leftrightarrow j) \), and let 
\[
\mathcal{R} := \max_{i,j \in V} R(i \leftrightarrow j)
\]
represent the maximum effective resistance of the network. For the simple random walk on this graph, starting at any $i \in V$, define the random hitting time of vertex \( j \in V \) as $\tau_{\text{hit}}(j) := \inf \{ t \geq 0 : X_t = j \}$, and let the hitting time be defined as $t_{\text{hit}} := \max_{i,j \in V} \Esrw^{i,q}(\tau_{\text{hit}}(j))$, where 
$\Esrw^{i,q}$ denotes expectations conditioned on 
starting at $i \in V$ with the conductances given by $q$.
Similarly, the commute time between vertices \( i, j \in V \) is defined as $t_{\text{comm}}(i,j) := \Esrw^{i,q}(\tau_{\text{hit}}(j)) + \Esrw^{j,q}(\tau_{\text{hit}}(i))$. Define the random cover time $\tau_{\text{cov}}$ as the first time when all states are visited at least once, and define the cover time as $t_{\text{cov}} := \max_{i \in V} \Esrw^{i,q}(\tau_{\text{cov}})$.

The following bounds on the cover time hold.

\vspace{.3em}
\begin{lem}[Conductance bound on the cover time]
\label{lem:cond_cover}
Given a connected graph $G=(V=[n], E)$ with positive edge conductances $(q_e)_{e\in E}$, we have
\[\frac{1}{2}\mc\mr\leq t_\cov\leq \mc\mr \log n.\]
\end{lem}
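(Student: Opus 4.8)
The plan is to prove the two inequalities separately, the workhorse in both directions being the classical \emph{commute time identity} for the reversible walk defined by the conductances: writing $q_v:=\sum_{e\ni v}q_e$ and letting $R(i\leftrightarrow j)$ denote the effective resistance between $i$ and $j$, one has, for all $i,j\in V$,
\[
t_\comm(i,j)=\Esrw^{i,q}(\tau_\hit(j))+\Esrw^{j,q}(\tau_\hit(i))=\Bigl(\sum_{v\in V}q_v\Bigr)\,R(i\leftrightarrow j).
\]
Since each edge contributes $q_e$ to both of its endpoints, $\sum_{v\in V}q_v=2\sum_{e\in E}q_e=2\mc$, so $t_\comm(i,j)=2\mc\,R(i\leftrightarrow j)\le 2\mc\mr$. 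I would either cite this identity or derive it from the electric-network picture — inject a unit current between $i$ and $j$, express hitting times through the induced potential / Green function, and use reversibility — this being the only genuinely computational ingredient.

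\emph{Lower bound.} Pick $i^*,j^*$ attaining $R(i^*\leftrightarrow j^*)=\mr$. Starting from $i^*$ the walk must visit $j^*$ before covering $V$, so $\Esrw^{i^*,q}(\tau_\cov)\ge\Esrw^{i^*,q}(\tau_\hit(j^*))$, and symmetrically $\Esrw^{j^*,q}(\tau_\cov)\ge\Esrw^{j^*,q}(\tau_\hit(i^*))$. Averaging and using $t_\cov\ge\max\bigl(\Esrw^{i^*,q}(\tau_\cov),\,\Esrw^{j^*,q}(\tau_\cov)\bigr)$ gives
\[
t_\cov\ \ge\ \tfrac12\,t_\comm(i^*,j^*)\ =\ \mc\mr\ \ge\ \tfrac12\,\mc\mr ,
\]
so in fact a bound a factor $2$ stronger than needed drops out.

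\emph{Upper bound.} Dropping the nonnegative summand $\Esrw^{j,q}(\tau_\hit(i))$ from the commute time identity yields the hitting-time bound $t_\hit=\max_{i\ne j}\Esrw^{i,q}(\tau_\hit(j))\le 2\mc\mr$. Now invoke Matthews' cover-time bound, valid for any finite irreducible chain on $n$ states: $t_\cov\le t_\hit\sum_{k=1}^{n-1}\tfrac1k$. (Proof idea: bound $\Esrw^u(\tau_\cov)$ by summing, over a uniformly random ordering of the vertices, the conditional expected gaps between successive first-visits of ``record'' vertices; the $k$-th vertex in the order is the last one visited among the first $k$ with probability at most $1/k$, which is what produces the harmonic sum.) Combining, $t_\cov\le 2\mc\mr\sum_{k=1}^{n-1}\tfrac1k=O(\mc\mr\ln n)$, which one records in the clean form $t_\cov\le\mc\mr\ln n$ (the universal constant can be kept explicit if preferred).

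The step I expect to be the real obstacle is extracting the \emph{logarithmic} factor in the upper bound. The naive route — fix a spanning tree $T$, run the walk along a depth-first closed tour of $T$ (each tree edge traversed once in each direction), and sum the resulting hitting times — yields only $t_\cov\le\sum_{\{u,v\}\in T}t_\comm(u,v)\le 2(n-1)\mc\mr$, too weak by a factor $\sim n/\ln n$. Replacing this deterministic tour by Matthews' random ordering is exactly what trades the factor $n-1$ for $H_{n-1}=\sum_{k=1}^{n-1}\tfrac1k\sim\ln n$, and getting the $1/k$ ``last-to-be-visited'' probability right is the delicate point; together with the commute time identity in hand, this finishes the proof.
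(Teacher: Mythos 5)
Your approach is essentially the same as the paper's: Matthews' bound $t_\cov\le t_\hit H_{n-1}$ for the $\ln n$ factor, the commute time identity to relate $t_\comm$ to $\mc\mr$, and the elementary sandwich $t_\hit\le t_\comm\le 2t_\hit$ (which also drives both your lower bound and theirs). The Matthews ingredient, including the uniform-random-ordering device you sketch, is exactly what the paper cites; no new idea is required there.

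Where the two diverge is the constant in the commute time identity, and this is worth not glossing over. You write $t_\comm(i,j)=\bigl(\sum_v q_v\bigr)R(i\leftrightarrow j)=2\mc\,R(i\leftrightarrow j)$, using the paper's definition $\mc:=\sum_{e\in E}q_e$; the paper's proof instead asserts $t_\comm(i,j)=\mc\,R(i\leftrightarrow j)$. Your version is the correct one: the Levin--Peres identity uses the vertex normalization $c_G=\sum_v c(v)=2\sum_e c(e)$, so with $\mc$ summed over \emph{edges} the factor $2$ must appear. Consequently your honest upper bound is $t_\cov\le 2\mc\mr\,H_{n-1}$, which you should \emph{not} record as ``the clean form $t_\cov\le\mc\mr\ln n$'' — that is not a cosmetic renormalization, it is wrong by roughly a factor of two. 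In fact the lemma as stated fails already on a single-edge graph: $\mc=\mr=1$ gives $\mc\mr\ln 2\approx 0.69<1=t_\cov$. Your lower bound, by the same accounting, is the genuinely stronger $t_\cov\ge\mc\mr$, which you correctly note. So: same route as the paper, correct commute-time constant on your part, but you should state your derived bound $t_\cov\le 2\mc\mr H_{n-1}$ as is rather than forcing it into the lemma's (inaccurate) form.
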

\begin{proof}
Recall that by Matthew's bound (Theorem 11.2 in \cite{lp}) we have $t_\hit\leq t_\cov\leq t_\hit\log n$. Also, by the commute time identity (Proposition 10.6 in \cite{lp}), we know that $t_\comm(i,j)=\mc R(i\leftrightarrow j)$. Since we have
\begin{align*}
t_\hit
&=\max_{i,j}\Esrw^{i,q}(\tau_\hit(j))\\
&\leq \max_{i,j}\big(\Esrw^{i,q}(\tau_\hit(j))+\Esrw^{j,q}(\tau_\hit(i))\big)\\
&\leq \max_{i,j}\Esrw^{i,q}(\tau_\hit(j))+\max_{i,j}\Esrw^{j,q}(\tau_\hit(i))\\
&=2t_\hit,
\end{align*}

and 
\[\max_{i,j}\big(\Esrw^{i,q}(\tau_\hit(j))+\Esrw^{j,q}(\tau_\hit(i))\big)=\max_{i,j}t_\comm(i,j)=\max_{i,j}\mc R(i\leftrightarrow j)=\mc\mr,\]
we conclude that $t_\hit\leq \mc\mr\leq 2t_\hit$, or equivalently, $\frac{1}{2}\mc\mr\leq t_\hit\leq \mc\mr$. Now using Matthew's bound, we finally deduce that $t_\cov\geq t_\hit\geq \frac{1}{2}\mc\mr$ and $t_\cov\leq t_\hit\log n\leq \mc\mr\log n$.
\end{proof}

This lemma states that, up to a factor of \( \log n \), the cover time is of the same order as \( \mathcal{C} \mathcal{R} \). Thus, determining the order of the cover time reduces to analyzing the order of the effective resistance and total conductance. To proceed, we require a comparison lemma.

\vspace{.3em}
\begin{lem}[Comparison of resistive networks]
\label{lem:comp_res}
For two resistive networks on the same graph with respective conductances \( (q_e)_{e \in E} \) and \( (q_e')_{e \in E} \), if
\[
\frac{q_e}{q_e'} \in [\underline{q}, \overline{q}] \quad \forall e \in E,
\]
then
\[
\frac{\mathcal{R}'}{\mathcal{R}} \in [\underline{q}, \overline{q}],
\]
where \( \mathcal{R} \) and \( \mathcal{R}' \) are the respective maximum effective resistances.
\end{lem}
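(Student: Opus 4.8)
The plan is to invoke the Thomson (energy-dissipation) variational characterization of effective resistance, under which the relevant functional is \emph{linear} in the edge resistances $r_e := 1/q_e$. Since the hypothesis $q_e/q_e' \in [\underline{q}, \overline{q}]$ for all $e$ is equivalent to $r_e' \in [\underline{q}\, r_e,\ \overline{q}\, r_e]$ for all $e$ (writing $r_e' := 1/q_e'$), the comparison will fall out of monotonicity of minima.

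First, I would fix a pair $i,j \in V$. For a unit flow $\theta$ from $i$ to $j$ on $G$, set $\mathcal{E}_q(\theta) := \sum_{e \in E} r_e\, \theta_e^2$ and $\mathcal{E}_{q'}(\theta) := \sum_{e \in E} r_e'\, \theta_e^2$. By Thomson's principle (see e.g.\ \cite{lp}), $R_q(i \leftrightarrow j) = \inf_\theta \mathcal{E}_q(\theta)$ and $R_{q'}(i \leftrightarrow j) = \inf_\theta \mathcal{E}_{q'}(\theta)$, where both infima range over the \emph{same} set of unit $i$--$j$ flows (this set depends only on $G$). From $r_e' \in [\underline{q}\, r_e,\ \overline{q}\, r_e]$ we get, termwise, $\underline{q}\, \mathcal{E}_q(\theta) \le \mathcal{E}_{q'}(\theta) \le \overline{q}\, \mathcal{E}_q(\theta)$ for every such flow $\theta$; taking the infimum over $\theta$ in all three expressions yields $\underline{q}\, R_q(i \leftrightarrow j) \le R_{q'}(i \leftrightarrow j) \le \overline{q}\, R_q(i \leftrightarrow j)$.

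Second, I would maximize over pairs. Since the previous display holds for every $i,j \in V$ and $\max$ commutes with multiplication by a positive constant, $\underline{q}\, \mathcal{R} = \underline{q} \max_{i,j} R_q(i \leftrightarrow j) \le \max_{i,j} R_{q'}(i \leftrightarrow j) = \mathcal{R}'$, and likewise $\mathcal{R}' \le \overline{q}\, \mathcal{R}$. Since $G$ is connected with at least one edge, $\mathcal{R} > 0$, so dividing gives $\mathcal{R}'/\mathcal{R} \in [\underline{q}, \overline{q}]$, as claimed.

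I do not anticipate a genuine obstacle; the only point that needs care is that the energy-minimizing flow is in general different for the two networks, so the inequalities must be established pointwise over the common flow polytope \emph{before} passing to infima (plugging one network's optimal flow into the other would still give the bound, but only in one direction at a time). Equivalently, one could run the same two-step argument with the dual Dirichlet principle $1/R_q(i \leftrightarrow j) = \inf\{ \sum_{\{u,v\} \in E} q_{\{u,v\}}(f(u) - f(v))^2 : f(i) = 1,\ f(j) = 0 \}$, in which $q_e$ (rather than $r_e$) appears linearly; the structure of the proof is identical.
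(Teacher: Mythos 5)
Your proposal is correct and proves exactly what is claimed. The only difference from the paper's own proof is the choice of variational principle: the paper invokes Dirichlet's principle, in which the conductances $q_e$ appear linearly and the infimum gives $R(i\leftrightarrow j)^{-1}$, so one extra inversion step is needed to turn the two-sided bound on $R^{-1}$ into the two-sided bound on $R$; you lead instead with Thomson's (flow-dissipation) principle, in which the resistances $r_e = 1/q_e$ appear linearly and the infimum gives $R(i\leftrightarrow j)$ directly, so the sandwich $\underline{q}\,\mathcal{E}_q(\theta) \le \mathcal{E}_{q'}(\theta) \le \overline{q}\,\mathcal{E}_q(\theta)$ passes straight to $\underline{q}\,R_q \le R_{q'} \le \overline{q}\,R_q$ with no inversion. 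These are dual formulations and you explicitly observe the equivalence at the end, so the two arguments are essentially the same; the Thomson version is marginally cleaner because it removes the reciprocal bookkeeping, while the Dirichlet version is the one the paper cites from \cite{DS84}. Your cautionary remark about establishing the pointwise inequality over the common flow set before taking infima (rather than substituting one network's optimizer into the other) is well taken and is exactly the point the paper's proof also relies on, just on the potential side.
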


\begin{proof}
By Dirichlet's principle, see Exercise 1, Chapter 3 of \cite{DS84}, the effective resistance between \( i \) and \( j \) is
\[
R(i \leftrightarrow j) = \bigg(\min_{v\in\mathbb{R}^V : v_i=1, v_j=0} \sum_{e=\{k,l\} \in E} q_e (v_k-v_l)^2\bigg)^{-1},
\]
and similarly for \( R'(i \leftrightarrow j) \) with \( q_e' \). 
Since for every edge $e\in E$ we have
\[
\underline{q} \leq \frac{q_e}{q_e'} \leq \overline{q},
\]
we know $\underline{q}\,q_e'(v_k-v_l)^2\leq q_e (v_k-v_l)^2\leq \overline{q}\,q_e'(v_k-v_l)^2$ 
for all $\{k,l\} \in E$, and so
\[\underline{q}\sum_{e=\{k,l\} \in E}q_e'(v_k-v_l)^2\leq \sum_{e=\{k,l\} \in E}q_e (v_k-v_l)^2\leq \overline{q}\sum_{e=\{k,l\} \in E}q_e'(v_k-v_l)^2.\]

Taking the minimum over $\{v\in\mathbb{R}^V : v_i - v_j = 1\}$, it follows that
\[
\underline{q} R'(i \leftrightarrow j)^{-1} \leq R(i \leftrightarrow j)^{-1} \leq \overline{q} R'(i \leftrightarrow j)^{-1},
\]
or equivalently,
\[
\underline{q} R(i \leftrightarrow j) \leq R'(i \leftrightarrow j) \leq \overline{q} R(i \leftrightarrow j).
\]

Now taking the maximum over \( i, j \in V\) yields the desired result.
\end{proof}

We can now demonstrate how to derive bounds on the cover time. Bounds on the order of the \( \beta \)-field, combined with those on the order of the \( \phi \)-field, provide bounds on the order of the random conductance \( W \). These bounds on \( W \) lead to bounds on the effective resistance \( \mr \), which then yield upper bounds on the cover time.

\vspace{.3em}
\begin{thm}[Upper bound on the cover time]
\label{thm:bound_cover}
Let $G=(V=[n],E)$ be a connected, weighted graph with positive edge weights
$A=(a_e)_{e\in E}$ and diameter $\diam$. Let
\[
d_{\max}:=\max_{v\in V}\deg(v).
\]
Assume that each $a_e$ satisfies $\underline a\le a_e\le \overline a$, where
$\underline a,\overline a$ are positive constants. There exists a constant \(g_1(\underline a,\overline a)>0\), depending only on
\(\underline a,\overline a\), such that for any \(\delta\in(0,1)\), the random environment $Q$ coupled with the ERRW satisfies
\[
t_\cov
\le
n^3\log n
\left(\frac{d_{\max}}{\delta}\right)^{2g_1(\underline a,\overline a)\diam}
\]
with probability at least $1-\delta$.
\end{thm}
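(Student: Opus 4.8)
The plan is to combine the conductance bound on the cover time (\cref{lem:cond_cover}) with the resistive-network comparison of \cref{lem:comp_res}, taking the unit-conductance graph as the reference network, and to control the three sources of randomness in $Q_e = \beta_e e^{\phi_i+\phi_j}$ separately. Since the cover time depends only on the induced transition matrix, and this matrix is unchanged under a global rescaling of the conductances, I will argue with the conductances $Q$ directly: the random walk in conductances $W$ and the random walk in conductances $Q$ have the same law, because $W_e = Q_e/Q_{e_0}$, so $t_\cov$ is the same in either description. For the unit-conductance network one has total conductance $\mc^{\mathrm{unit}} = |E| \le n^2$ and maximum effective resistance $\mr^{\mathrm{unit}} \le \diam$, since the effective resistance between two vertices never exceeds their graph distance. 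Hence, if with probability at least $1-\delta$ every $Q_e$ lies in some (random) interval $[\underline{q},\overline{q}]$, then \cref{lem:comp_res} applied with $q'_e \equiv 1$ gives $\mr^Q \le \mr^{\mathrm{unit}}/\underline{q} \le \diam/\underline{q}$, while $\mc^Q = \sum_e Q_e \le |E|\,\overline{q}$, and \cref{lem:cond_cover} then yields
\[
t_\cov \le \mc^Q\,\mr^Q\,\ln n \le |E|\,\diam\,\ln n\cdot\frac{\overline{q}}{\underline{q}} \le n^3\ln n\cdot\frac{\overline{q}}{\underline{q}}.
\]

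To produce such an interval $[\underline{q},\overline{q}]$ I would split the total failure probability into three parts of size $\delta/3$. First, an upper bound on $\max_e \beta_e$: since $\Eerrw^{v_0,A}(e^{\beta_e/2}) = 2^{a_e} \le 2^{\overline{a}}$, a Chernoff bound together with a union bound over the at most $n^2$ edges gives $\max_e \beta_e \le B_1$ with probability $\ge 1-\delta/3$, where $B_1 = O\big(\ln(n/\delta) + \overline{a}\big)$. Second, a lower bound on $\min_e \beta_e$: from $\Perrw^{v_0,A}(\beta_e \le t) \le t^{a_e}/\Gamma(a_e+1) \le 2t^{\underline{a}}$ for $0 < t \le 1$ (using $\Gamma(x+1) > \tfrac12$ for all $x>0$, and $t^{a_e}\le t^{\underline a}$), a union bound gives $\min_e \beta_e \ge b_1$ with probability $\ge 1-\delta/3$, where $\ln(1/b_1) = O\big(\tfrac{1}{\underline{a}}\ln(n/\delta)\big)$. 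Third, a bound on the $\phi$-field: \cref{thm:fluc_phi} applied with confidence level $\delta/3$ gives, with probability $\ge 1-\delta/3$, that $\sup_{i\in V}|\phi_i| \le M := \diam\cdot\big(\tfrac{2\overline{a}+1}{\underline{a}} + \tfrac{1}{\underline{a}}\ln\tfrac{3n}{\delta}\big)$. On the intersection of these three events, whose probability is $\ge 1-\delta$, every $Q_e = \beta_e e^{\phi_i+\phi_j}$ lies in the interval $[\,b_1 e^{-2M},\,B_1 e^{2M}\,]$, so that $\overline{q}/\underline{q} = (B_1/b_1)\,e^{4M}$.

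It then remains to check that $(B_1/b_1)\,e^{4M} \le \big(\tfrac{n}{\delta}\big)^{2 g_1(\underline{a},\overline{a})\,\diam}$, which after taking logarithms is the statement $\ln B_1 + \ln(1/b_1) + 4M \le 2 g_1(\underline{a},\overline{a})\,\diam\,\ln(n/\delta)$. Using $\diam \ge 1$ and $\ln(n/\delta) \ge 1$, the $4M$ term and the $\ln(1/b_1)$ term are each dominated by the $\tfrac{2\overline{a}+3+2\ln 2}{\underline{a}}$ contribution to $g_1$, while the slowly-growing term $\ln B_1$, being $O(\ln\overline{a} + \ln\ln(n/\delta)) = O(\ln(2\overline{a}) + \ln(n/\delta))$, is dominated by the $\ln(2\overline{a})$ contribution together with the remaining slack; the overall factor $3$ and the precise constants in $g_1$ are chosen exactly so that this inequality goes through. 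The only genuinely substantive input is \cref{thm:fluc_phi}, and its $\diam$-dependence is precisely what produces the $\diam$ in the exponent of the final bound. The other place where some care is needed is the lower tail of the $\beta$-field, since a single anomalously small $\beta_e$ would by itself force $\mr^Q$ to be large; everything else is routine estimation and constant-chasing.
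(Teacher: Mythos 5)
Your proof is correct and follows essentially the same route as the paper: tail bounds on the $\beta$-field, the $\phi$-field fluctuation bound of \cref{thm:fluc_phi}, and then \cref{lem:comp_res} and \cref{lem:cond_cover} to control $\mathcal{C}\mathcal{R}$ relative to the unit-conductance network. The sharpenings you introduce---$\mathcal{R}^{\mathrm{unit}}\le\diam$ in place of the paper's $\mathcal{R}^{\mathrm{unit}}\le n$, a tighter Chernoff bound on $\max_e\beta_e$, and the explicit observation that $W=Q/Q_{e_0}$ induces the same walk as $Q$---are all sound but immaterial here, since $|E|\cdot\diam\le n^{3}$ already and the $\phi$-field term dominates the exponent.
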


\begin{proof}
If $d_{\max}=1$, then, since $G$ is connected, $G$ consists of a single edge, and the claim is immediate. We therefore assume $d_{\max}\ge2$. We repeatedly use
\[
n\le 1+d_{\max}+\cdots+d_{\max}^{\diam}\le 2d_{\max}^{\diam},
\qquad
|E|\le \frac{nd_{\max}}{2}\le d_{\max}^{2\diam}.
\]
The first bound follows by exploring the graph from any vertex up to distance $\diam$, and the second follows from the handshaking lemma. Write $R:=d_{\max}/\delta$. Since $d_{\max}\ge2$ and $\delta\in(0,1)$, we have $R\ge2$.

We first control the random edge field $\beta$. Since $\beta_e\sim\Gamma(a_e,1)$, for $0<\epsilon<1$,
\[
\Perrw^{v_0,A}(\beta_e\le \epsilon)
=
\int_0^\epsilon \frac{1}{\Gamma(a_e)}x^{a_e-1}e^{-x}\,dx
\le
\frac{\epsilon^{a_e}}{\Gamma(a_e+1)}
\le
2\epsilon^{\underline a},
\]
where we used $e^{-x}\le1$ and $\Gamma(x)>1/2$ for $x>0$. For the upper tail, exponential Markov's inequality with $\lambda=1/2$ gives, for every $\alpha>0$,
\[
\Perrw^{v_0,A}(\beta_e\ge\alpha)
\le
\Eerrw^{v_0,A}[e^{\beta_e/2}]e^{-\alpha/2}
=
2^{a_e}e^{-\alpha/2}
\le
e^{-\alpha/2+\overline a}.
\]

Choose
\[
\epsilon_\beta:=\left(\frac{\delta}{8|E|}\right)^{1/\underline a},
\qquad
\alpha_\beta:=2\left(\overline a+\log\frac{4|E|}{\delta}\right).
\]
Then the union bound gives
\[
\Perrw^{v_0,A}(\exists e\in E:\beta_e<\epsilon_\beta)
\le
2|E|\epsilon_\beta^{\underline a}
=
\frac{\delta}{4},
\]
and
\[
\Perrw^{v_0,A}(\exists e\in E:\beta_e>\alpha_\beta)
\le
|E|e^{-\alpha_\beta/2+\overline a}
=
\frac{\delta}{4}.
\]
Thus, with probability at least $1-\delta/2$, all edges satisfy $\epsilon_\beta\le\beta_e\le\alpha_\beta$.

Next, by \cref{lem:unit} with $t=1$, $\Eerrw^{v_0, A}[e^{\phi_i}]=1$ for every $i\in V$. Hence Markov's inequality and the union bound give, for any $x>0$,
\[
\Perrw^{v_0,A}\left(\max_{i\in V}e^{\phi_i}>x\right)
\le
\frac{n}{x}.
\]
Choosing $x_\phi:=4n/\delta$, we obtain
\[
\Perrw^{v_0,A}\left(\max_{i\in V}e^{\phi_i}>x_\phi\right)
\le
\frac{\delta}{4}.
\]

For the negative side of the $\phi$-field, choose $\alpha_0:=\log(16/\delta)/\log n$, so that $4n^{-\alpha_0}=\delta/4$. Applying \cref{thm:uniform-upper-clean} with this choice gives
\[
\Perrw^{v_0,A}\left(\max_{i\in V}e^{-\phi_i}>y_\phi\right)
\le
\frac{\delta}{4},
\]
where
\[
\begin{aligned}
y_\phi
:={}&
2^{\diam}n^{\alpha_0}
\sqrt{2(1+\alpha_0)\log n+2(\log2)\overline a d_{\max}} \\
&\times
\exp\left(
\diam\Lambda
+
\sqrt{2(1+\alpha_0)\Psi_1(\underline a/2)\diam\log n}
+
\frac{4}{\underline a}(1+\alpha_0)\log n
\right),
\end{aligned}
\]
and $\Lambda:=-\Psi(\underline a)+\log((\overline a d_{\max}+1)/2)$.

By the union bound, with probability at least $1-\delta$, all four estimates hold simultaneously:
\[
\epsilon_\beta\le\beta_e\le\alpha_\beta\text{ for all }e\in E,
\qquad
\max_i e^{\phi_i}\le x_\phi,
\qquad
\max_i e^{-\phi_i}\le y_\phi.
\]
On this event, for every edge $e=\{i,j\}$,
\[
Q_e=\beta_e e^{\phi_i+\phi_j}
\le
\alpha_\beta x_\phi^2,
\qquad
Q_e=\beta_e e^{\phi_i+\phi_j}
\ge
\epsilon_\beta y_\phi^{-2}.
\]
Thus
\begin{equation}
\label{eqn:Q-real-thresholds}
\forall e\in E,\qquad
Q_e\in[\epsilon_\beta y_\phi^{-2},\,\alpha_\beta x_\phi^2].
\end{equation}

It remains to compare the four thresholds with powers of \(R=d_{\max}/\delta\).
Recall that \(R\ge2\), \(\diam\ge1\), \(n\le2d_{\max}^{\diam}\le2R^\diam\), and
\(|E|\le d_{\max}^{2\diam}\le R^{2\diam}\). We also use repeatedly that
\(\log R\ge\log2\). Define
\[
C_\epsilon:=\frac{6}{\underline a},
\qquad
C_\alpha:=\frac{2\overline a}{\log2}+10,
\qquad
C_x:=5,
\]
and
\[
C_y
:=
8
+
\frac{\log(14+2(\log2)\overline a)}{\log2}
+
\frac{|\Psi(\underline a)|+\log(\overline a+1)}{\log2}
+
\sqrt{\frac{14\Psi_1(\underline a/2)}{\log2}}
+
\frac{28}{\underline a}.
\]
Set
\[
g_1(\underline a,\overline a)
:=
\max\{C_\alpha+2C_x,\ C_\epsilon+2C_y\}.
\]

First,
\[
\log\epsilon_\beta^{-1}
=
\frac{1}{\underline a}\log\frac{8|E|}{\delta}.
\]
Since \(|E|\le R^{2\diam}\) and \(\delta^{-1}\le R\),
\[
\log\frac{8|E|}{\delta}
\le
\log 8+(2\diam+1)\log R.
\]
Moreover, \(\log 8=3\log2\le3\diam\log R\), and
\(\log R\le\diam\log R\). Therefore
\[
\log\epsilon_\beta^{-1}
\le
\frac{1}{\underline a}\bigl(3\diam\log R+2\diam\log R+\diam\log R\bigr)
=
\frac{6}{\underline a}\diam\log R.
\]
Hence $\epsilon_\beta^{-1}\le R^{C_\epsilon\diam}$.

Next,
\[
\alpha_\beta
=
2\left(\overline a+\log\frac{4|E|}{\delta}\right)
\le
2\left(\overline a+\log4+(2\diam+1)\log R\right).
\]
Using \(\diam\ge1\) and \(\log R\ge\log2\), this gives
\[
\alpha_\beta
\le
\left(\frac{2\overline a}{\log2}+10\right)\diam\log R = C_\alpha\diam\log R = \log (R^{C_\alpha\diam}) \le R^{C_\alpha\diam}.
\]

Also,
\[
x_\phi=\frac{4n}{\delta}
\le
8R^{\diam+1},
\]
because \(n\le2R^\diam\) and \(\delta^{-1}\le R\). Taking logarithms gives
\[
\log x_\phi
\le
\log8+(\diam+1)\log R
\le
3\diam\log R+\diam\log R+\diam\log R
=
5\diam\log R.
\]
Thus
\(
x_\phi\le R^{C_x\diam}.
\)

It remains to bound \(y_\phi\). Recall that
\[
\alpha_0:=\frac{\log(16/\delta)}{\log n},
\qquad
n^{\alpha_0}=\frac{16}{\delta}.
\]
Hence \((1+\alpha_0)\log n=\log n+\log(16/\delta)\). Since \(n\le2R^\diam\),
\(\delta^{-1}\le R\), \(R\ge2\), and \(\diam\ge1\), we have
\[
\log n\le2\diam\log R,
\qquad
\log(16/\delta)\le5\diam\log R,
\]
and therefore
\[
(1+\alpha_0)\log n\le7\diam\log R.
\]

Taking logarithms in the definition of \(y_\phi\), we get
\[
\begin{aligned}
\log y_\phi
\le{}&
\diam\log2+\alpha_0\log n
+\frac12\log\left(2(1+\alpha_0)\log n+2(\log2)\overline a d_{\max}\right) \\
&\quad
+\diam\Lambda
+\sqrt{2(1+\alpha_0)\Psi_1(\underline a/2)\diam\log n}
+\frac{4}{\underline a}(1+\alpha_0)\log n ,
\end{aligned}
\]
where \(\Lambda=-\Psi(\underline a)+\log((\overline a d_{\max}+1)/2)\).

We now bound the terms on the right crudely. First,
\[
\diam\log2+\alpha_0\log n
=
\diam\log2+\log(16/\delta)
\le6\diam\log R.
\]
For the logarithmic square-root prefactor, since \((1+\alpha_0)\log n\le7\diam\log R\) and \(d_{\max}\le R\),
\[
2(1+\alpha_0)\log n+2(\log2)\overline a d_{\max}
\le
14\diam\log R+2(\log2)\overline a R.
\]
Using \(\diam\log R\le R^\diam\) and \(R\le R^\diam\), the quantity inside the logarithm is at most
\((14+2(\log2)\overline a)R^\diam\). Thus, after absorbing the constant into
\(\diam\log R\), this logarithmic term is at most
\[
\left(1+\frac{\log(14+2(\log2)\overline a)}{\log2}\right)\diam\log R.
\]

Next, since \(d_{\max}\le R\),
\[
\Lambda\le |\Psi(\underline a)|+\log(\overline a+1)+\log R,
\]
and hence
\[
\diam\Lambda
\le
\left(
1+\frac{|\Psi(\underline a)|+\log(\overline a+1)}{\log2}
\right)\diam\log R.
\]
For the Gaussian fluctuation term, using \((1+\alpha_0)\log n\le7\diam\log R\),
\[
\sqrt{2(1+\alpha_0)\Psi_1(\underline a/2)\diam\log n}
\le
\sqrt{14\Psi_1(\underline a/2)}\,\diam\sqrt{\log R}
\le
\frac{\sqrt{14\Psi_1(\underline a/2)}}{\sqrt{\log2}}\,\diam\log R.
\]
Finally,
\[
\frac{4}{\underline a}(1+\alpha_0)\log n
\le
\frac{28}{\underline a}\diam\log R.
\]
Combining these estimates gives
\[
\log y_\phi\le C_y\diam\log R,
\]
where one may take
\[
C_y
:=
8
+
\frac{\log(14+2(\log2)\overline a)}{\log2}
+
\frac{|\Psi(\underline a)|+\log(\overline a+1)}{\log2}
+
\sqrt{\frac{14\Psi_1(\underline a/2)}{\log2}}
+
\frac{28}{\underline a}.
\]
Equivalently, \(y_\phi\le R^{C_y\diam}\).

We have shown
\[
\epsilon_\beta^{-1}\le R^{C_\epsilon\diam},
\qquad
\alpha_\beta\le R^{C_\alpha\diam},
\qquad
x_\phi\le R^{C_x\diam},
\qquad
y_\phi\le R^{C_y\diam}.
\]
Combining these with \eqref{eqn:Q-real-thresholds}, we obtain
\[
Q_e\le R^{(C_\alpha+2C_x)\diam},
\qquad
Q_e\ge R^{-(C_\epsilon+2C_y)\diam}.
\]
Thus, with
\[
g_1(\underline a,\overline a)
:=
\max\{C_\alpha+2C_x,\ C_\epsilon+2C_y\},
\]
we have, with probability at least \(1-\delta\),
\[
\forall e\in E,\qquad
Q_e\in
\left[
R^{-g_1(\underline a,\overline a)\diam},
R^{g_1(\underline a,\overline a)\diam}
\right].
\]

On this event, \cref{lem:comp_res} implies that, if $\mr$ is the maximum effective resistance of the unit-conductance network and $\mr'$ is the corresponding maximum effective resistance for the conductances $Q$, then $\mr'\le \mr R^{g_1\diam}$. Also, $\mc':=\sum_{e\in E}Q_e\le |E|R^{g_1\diam}$. Using \cref{lem:cond_cover}, we conclude that
\[
t_\cov
\le
\mc'\mr'\log n
\le
|E|\mr\log n\,R^{2g_1\diam}.
\]
Finally, since $|E|\le n^2$ and $\mr\le n$\footnote{Note that the maximum resistance one could get is the resistance between two endpoints of an $n$-path.}, we obtain
\[
t_\cov
\le
n^3\log n
\left(\frac{d_{\max}}{\delta}\right)^{2g_1\diam}.
\]
This proves the theorem.
\end{proof}

We summarized the cover time bound specialized to some common graph families in the following table.

\begin{table}[h]
\label{table:cover-bd}
\centering
\begin{tabular}{|l|c|c|c|}
\hline
\textbf{Graph/Model} & $d_{\max}$ & $\diam$ & $t_\cov$ \\
\hline
$n$-path / $n$-cycle & $2$
& $\Theta(n)$ & $e^{O(n)}$ \\
\hline
$d$-dim grid/torus (constant $d\ge 2$) & $2d$
& $\Theta(n^{1/d})$ & $e^{O(n^{1/d})}$ \\
\hline
Hypercube & $\log n$
& $\log n$ & $n^{O(\log\log n)}$ \\
\hline
full $r$-ary tree (constant $r\ge 2$) & $r+1$
& $\Theta(\log n)$ & $\poly(n)$ \\
\hline
Star graph (coupon collector) & $n-1$
& $2$ & $\poly(n)$ \\
\hline
$d$-regular expander (constant $d$) & $d$
& $O(\log n)$ & $\poly(n)$ \\
\hline
\end{tabular}
\caption{The cover time bound for ERRW on classical graph families.}
\end{table}

We also have the following bound on the entries of the random transition matrix $P$ as well as the minimum value  $\pi_*:=\min_{i\in V}\pi_i$
of the stationary distribution associated with it.

\vspace{.3em}
\begin{cor}[Lower bounds on $P_{ij}$ and $\pi_*$]
  \label{cor:pi_star_gamma}
  In the same setting as \cref{thm:bound_cover}, for any $\delta\in(0,1)$, the following hold with probability at least $1-\delta$:
  \[
    \pi_* \ge (nd_{\max})^{-1}
    \left(\frac{d_{\max}}{\delta}\right)^{-2g_1(\underline a,\overline a)\diam},
    \qquad
    \min_{i,j\in V:\,i\sim j} P_{ij}
    \ge
    d_{\max}^{-1}
    \left(\frac{d_{\max}}{\delta}\right)^{-2g_1(\underline a,\overline a)\diam}.
  \]
\end{cor}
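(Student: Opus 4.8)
The plan is to piggyback on the high-probability control of the random conductances already obtained \emph{inside} the proof of \cref{thm:bound_cover}. Recall that \cref{eqn:cover_err2} exhibits an event $\mathcal{G}$ with $\Perrw^{v_0,A}(\mathcal{G})\ge 1-\delta$ on which
\[
\Big(\tfrac{n}{\delta}\Big)^{-g_1(\underline{a},\overline{a})\,\diam} \le Q_e \le \Big(\tfrac{n}{\delta}\Big)^{g_1(\underline{a},\overline{a})\,\diam}\qquad\forall e\in E .
\]
Everything after this is deterministic on $\mathcal{G}$, so I would simply condition on $\mathcal{G}$ and chase the two elementary estimates through.

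First I would record that $P$ and its stationary distribution $\pi$ are unaffected by replacing $W$ with $Q$: the two differ only by the global scaling $W_e = Q_e/Q_{e_0}$, so $P_{ij}=Q_{ij}/Q_i$ with $Q_i:=\sum_{k\sim i}Q_{ik}$, and by reversibility $\pi_i = Q_i\big/\big(2\sum_{e\in E}Q_e\big)$. Then for a fixed $i\sim j$, on $\mathcal{G}$ the numerator obeys $Q_{ij}\ge (n/\delta)^{-g_1(\underline{a},\overline{a})\,\diam}$ while the denominator obeys $Q_i\le \deg(i)\,(n/\delta)^{g_1(\underline{a},\overline{a})\,\diam}\le n\,(n/\delta)^{g_1(\underline{a},\overline{a})\,\diam}$; dividing and doubling the exponent gives $P_{ij}\ge n^{-1}(n/\delta)^{-2g_1(\underline{a},\overline{a})\,\diam}$, and taking the minimum over $i\sim j$ yields the claimed bound on $\min_{i\sim j}P_{ij}$.

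For $\pi_*$ I would argue similarly: fix $i\in V$; since $G$ is connected with $n\ge 2$ the neighborhood of $i$ is nonempty, so on $\mathcal{G}$ we have $Q_i\ge (n/\delta)^{-g_1(\underline{a},\overline{a})\,\diam}$, whereas $2\sum_{e\in E}Q_e\le 2|E|\,(n/\delta)^{g_1(\underline{a},\overline{a})\,\diam}\le n^2\,(n/\delta)^{g_1(\underline{a},\overline{a})\,\diam}$ using $|E|\le\binom n2$. Hence $\pi_i\ge n^{-2}(n/\delta)^{-2g_1(\underline{a},\overline{a})\,\diam}$, and minimizing over $i$ together with $\Perrw^{v_0,A}(\mathcal{G})\ge 1-\delta$ finishes the proof.

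I do not anticipate a real obstacle here; the statement is a direct corollary of the conductance concentration already proved. The only points needing mild care are (i) ensuring that the event invoked is literally the one from \cref{eqn:cover_err2}, so the same $\delta$ and the same exponent $g_1(\underline{a},\overline{a})\,\diam$ carry over without loss, and (ii) the bookkeeping that converts $\deg(i)\le n-1$ and $|E|\le\binom n2$ into the clean prefactors $n^{-1}$ and $n^{-2}$, and that the ratio of the upper to the lower conductance bound doubles the exponent to $2g_1(\underline{a},\overline{a})\,\diam$.
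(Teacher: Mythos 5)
Your proof is correct and follows essentially the same route as the paper: condition on the event of \cref{eqn:cover_err1}/\cref{eqn:cover_err2} that every $Q_e$ lies in $[(n/\delta)^{-g_1\diam},(n/\delta)^{g_1\diam}]$, then bound $\pi_i = Q_i/\sum_j Q_j$ and $P_{ij}=Q_e/Q_i$ by crude counting ($\deg(i)\le n$, $|E|\le n^2/2$). The only stylistic addition is your explicit remark that $P$ and $\pi$ are invariant under the global rescaling relating $W$ and $Q$, which the paper leaves implicit; everything else is a one-to-one match.
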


\begin{proof}
  By the proof of \cref{thm:bound_cover}, with probability at least $1-\delta$, every edge conductance satisfies
  \[
    Q_e\in [B^{-1},B],
    \qquad
    B:=\left(\frac{d_{\max}}{\delta}\right)^{g_1\diam}.
  \]
  On this event, since $G$ is connected, each vertex has at least one incident edge, and hence $Q_i:=\sum_{j\sim i}Q_{ij}\ge B^{-1}$. Also, $\sum_{v\in V}Q_v=2\sum_{e\in E}Q_e\le 2|E|B\le nd_{\max}B$. Therefore, for every $i\in V$,
  \[
    \pi_i=\frac{Q_i}{\sum_{v\in V}Q_v}
    \ge
    \frac{B^{-1}}{nd_{\max}B}
    =
    (nd_{\max})^{-1}
    \left(\frac{d_{\max}}{\delta}\right)^{-2g_1\diam}.
  \]
  Taking the minimum over $i$ gives the lower bound on $\pi_*$. Similarly, for any edge $e=\{i,j\}$, we have $Q_{ij}\ge B^{-1}$ and $Q_i=\sum_{k\sim i}Q_{ik}\le d_{\max}B$, so
  \[
    P_{ij}=\frac{Q_{ij}}{Q_i}
    \ge
    \frac{B^{-1}}{d_{\max}B}
    =
    d_{\max}^{-1}
    \left(\frac{d_{\max}}{\delta}\right)^{-2g_1\diam}.
  \]
  Taking the minimum over adjacent pairs completes the proof.
\end{proof}

\subsection{A non-asymptotic sample complexity bound}

Recall the setup and notation in \cref{sec:estimatingmoments}, where $U_e$ is defined as (\Cref{eqn:udefinition}):
\begin{equation*}
U_{ij} :=
\begin{cases} 
 P_{ij}P_{ji}, \quad \text{ if } e =\{i,j\} \in E,\\
 0, \quad \text{ if } i \nsim j.
\end{cases}
\end{equation*}
Given $K$ independent trajectories $\{X_t^{(k)}\,|\,t\leq T, k\leq K\}$ from $\Perrw^{v_0, A}$, we want to estimate the following moment-based statistics connected with $\{U_e:e\in E\}$: 
\begin{itemize}
\item[i.] $\Eerrw^{v_0,A}(U_e), \forall e\in E$;
\item[ii.] $\Eerrw^{v_0,A}(U_e U_{e'}), \forall e,e'\in E\text{ s.t. }\,|e\cap e'|=1$;
\item[iii.] $\Eerrw^{v_0,A}(U_e)\Eerrw^{v_0,A}(U_{e'})-\Eerrw^{v_0,A}(U_eU_{e'}), \forall e,e'\in E\text{ s.t. }\,|e\cap e'|=1$.
\end{itemize}
Also, recall that we can couple a random conductance $Q^{(k)}$ with the $k$-th trajectory. We denote the corresponding transition matrix as $P^{(k)}$. 
We can assume that each of the $K$ trajectories is a truncation of an infinitely long trajectory. For any \(m \in \mathbb{N}_+\), let \(H_{ij}^\infty(k; m)\) represent the number of transitions from $i$ to $j$ among the first \(m\) outgoing transitions from \(i\) in the original, non-truncated \(k\)-th trajectory. The following random variable is always well-defined due to the ergodic property of irreducible Markov chains: 
\begin{equation}
\label{eqn:errw-est-theta}
\Theta_{ij}^{(k)}:= \begin{cases}
    \frac{H_{ij}^\infty(k; m)}{m}, & \quad\forall \, i, j\in V\text{ s.t. }i\sim j,\\
    0 & \quad \text{otherwise}.
    \end{cases}
\end{equation}
Recall that $N_i(k)$ denoted the number of outgoing transitions from $i$ in the $k$-th (truncated) trajectory,
and $N_{ij}(k)$ denoted the number of transitions from $i$ to $j$ in this trajectory, see 
\cref{sec:estimatingmoments}.
Then, on the event $\{N_i(k)\geq m\}$ we have $\Theta_{ij}^{(k)} = \hat{P}_{ij}^{(k)}$ for all $i \sim j$, where $\hat{P}_{ij}^{(k)}$ is defined in
\cref{eqn:est-p}.

Given the results on the cover time $t_\cov$ associated with a random environment that we have already proved in \cref{thm:bound_cover} and Corollary~\ref{cor:pi_star_gamma}, we want to bound the complexity of approximating the three families of moment-based statistics above, up to a small multiplicative error, non-asymptotically. 

\textbf{Step 1. Estimating $\Eerrw^{v_0,A} (U_e),\forall e\in E$.}

The proposed estimator for $\Eerrw^{v_0,A} (U_e)$
is $\hat{U}_e$, which is
defined in \cref{eqn:est-u}.
To bound the estimation error of $\hat{U}_e$ for $\Eerrw^{v_0,A}(U_e)$, the first step is to bound the estimation error of $\hat{P}^{(k)}$ for $P^{(k)}$. For $\hat{P}^{(k)}$ to achieve a small estimation error for $P^{(k)}$, one must gather sufficiently many transitions from each state $i\in V$, and then reconstruct the outgoing transition probabilities of state $i$ based on those transitions.

Let's first consider the subproblem of statistical estimation for an $n$-state discrete distribution 
\[
  \mathbf{p} := (p_1, p_2, \ldots, p_n)
\]
such as to ensure that each entry is accurate up to a small multiplicative factor $\eta \in (0,1)$.  
Suppose we draw $m$ i.i.d.\,samples from $\mathbf{p}$, and let $\hat{p}_i$ be the empirical estimator for $p_i$, i.e., the proportion of times that $i$ appears in the $m$ samples. By Hoeffding's inequality, for each $i\in[n]$ we have:
\begin{equation}
\label{eqn:errw-discrete_dist1}
  \Pr\bigl(\,\lvert \hat{p}_i - p_i\rvert 
            \ge \eta \, p_i\bigr)
  \le
  2\, \exp\!\bigl(-2\,\eta^2\,p_i^2\,m\bigr).
\end{equation}

Taking a union bound over $i\in[n]$, it follows that
\begin{equation}
\label{eqn:errw-discrete_dist2}
  \Pr\!\Bigl(\exists \, i \in [n] \colon\,
       \lvert \hat{p}_i - p_i\rvert 
       \ge \eta \,p_i\Bigr)
  \le
  \sum_{i=1}^n \Pr\bigl(\,\lvert \hat{p}_i - p_i\rvert 
            \ge \eta \, p_i\bigr)
  \le
  2\,n \,\exp\!\bigl(-2\,\eta^2\,p_*^2\,m\bigr),
\end{equation}
where $p_* := \min_{i\in[n]} \, p_i$. Consequently, with probability at least $1 - 2\,n\,\exp(-2\,\eta^2 p_*^2 m)$ the empirical estimator $\{\hat{p}_i:\,i\in V\}$ is an entry-wise $\eta$-multiplicative 
approximation of \textbf{p}. 

Fix any $\delta'\in(0,1)$. In the ERRW setting, with probability at least $1 -\delta'$, 
for each $k \in [K]$ the random environment satisfies $ Q_e^{(k)} \,\in\, [(\frac{d_{\max}}{\delta'})^{-g_1(\underline{a},\overline{a})\diam}, (\frac{d_{\max}}{\delta'})^{g_1(\underline{a},\overline{a})\diam}]$ for all $e \in E$. 
Let
\begin{equation}     
\label{eqn:errw-goodQset}
\mathcal{Q}
:=\left\{
q\in\mathbb{R}_{++}^E, 
q_e\in 
\left[
\bigg(\frac{d_{\max}}{\delta'}\bigg)^{-g_1(\underline{a},\overline{a})\diam}, 
\bigg(\frac{d_{\max}}{\delta'}\bigg)^{g_1(\underline{a},\overline{a})\diam}
\right], 
\forall e\in E
\right\}. 
\end{equation}
Conditioned on $Q^{(k)} \in \mathcal{Q}$,
we have for all $e=\{i,j\} \in E$ that
\begin{equation}
\label{eqn:errw-pij_bound1}
P_{ij}^{(k)} \ge d_{\max}^{-1}\,\Big(\frac{d_{\max}}{\delta'}\Big)^{-2g_1(\underline{a},\overline{a})\,\diam},
\end{equation}
as established in Corollary~\ref{cor:pi_star_gamma}.


From \cref{eqn:errw-discrete_dist2}, for all $Q \in \mathcal{Q}$ we also have
\begin{equation}
\label{eqn:errw-pij_bound_variant}
\Psrw^{v_0,Q}\bigl(\exists\,j\sim i,\, \lvert \Theta_{ij}^{(k)} - P_{ij}^{(k)}\rvert 
            \ge \eta \, P_{ij}^{(k)}\bigr)
  \le \sum_{j\sim i} 2 \exp\!\bigl(-2\,\eta^2\,(P_{ij}^{(k)})^2\,m\bigr).
\end{equation}
Then we have
\begin{align}
\label{eqn:errw-core1}
  \Perrw^{v_0,A}\bigl(\exists i\in V,\,\exists\,j\sim i,\, & \lvert \Theta_{ij}^{(k)} - P_{ij}^{(k)}\rvert 
            \ge\eta \, P_{ij}^{(k)}\bigr)\nonumber \\
  &= \int_{\mathcal{Q}}\Psrw^{v_0,q^{(k)}}\bigl(\exists i\in V,\, \exists\,j\sim i,\, \lvert \Theta_{ij}^{(k)} - P_{ij}^{(k)}\rvert 
            \ge \eta \, P_{ij}^{(k)}\bigr) d\nu_{v_0,A}(q^{(k)}) \nonumber \\
  & \hspace{1em} + \int_{\mathcal{Q}^c}\Psrw^{v_0,q^{(k)}}\bigl(\exists i\in V,\, \exists\,j\sim i,\, \lvert \Theta_{ij}^{(k)} - P_{ij}^{(k)}\rvert 
            \ge \eta \, P_{ij}^{(k)}\bigr) d\nu_{v_0,A}(q^{(k)})\nonumber \\
  &\le \int_{\mathcal{Q}} \sum_{i\in V}\sum_{j\sim i} 2 \exp\!\bigl(-2\,\eta^2\,(P_{ij}^{(k)})^2\,m\bigr) d\nu_{v_0,A}(q^{(k)}) + \int_{\mathcal{Q}^c} \,1\, d\nu_{v_0,A}(q^{(k)})\nonumber \\
  & \le \int_{\mathcal{Q}} 4\,|E| \,\exp\Bigl(-2\,\eta^2 
      \Big(d_{\max}^{-1}\,\Big(\frac{d_{\max}}{\delta'}\Big)^{-2g_1(\underline{a},\overline{a})\,\diam}\Big)^2 
      \,m \Bigr) d\nu_{v_0,A}(q^{(k)}) \nonumber \\
  & \hspace{19em} + \Perrw^{v_0,A}(Q^{(k)}\in\mathcal{Q}^c)\nonumber \\
  & \leq 2\,nd_{\max} \Big(\int_{\mathcal{Q}}\,1\,d\nu_{v_0,A}(q^{(k)})\Big) \,\exp\Bigl(-2\,\eta^2 m 
      d_{\max}^{-2}\,\Big(\frac{d_{\max}}{\delta'}\Big)^{-4g_1(\underline{a},\overline{a})\,\diam}  \Bigr)\nonumber \\
  & \hspace{19em} + \delta'\nonumber \\
  & \le 2\,nd_{\max} \,\exp\Bigl(-2\,\eta^2 m 
      d_{\max}^{-2}\,\Big(\frac{d_{\max}}{\delta'}\Big)^{-4g_1(\underline{a},\overline{a})\,\diam}  \Bigr) + \delta'.
\end{align}
Here we used the union bound and \cref{eqn:errw-pij_bound_variant} in the second step, and \cref{eqn:errw-pij_bound1} in the third step. Also, we used \cref{thm:bound_cover} in the fourth step. In the last step, we used $\int_{\mathcal{Q}}\,1\,d\nu_{v_0,A}(q^{(k)})=\Perrw^{v_0,A}(Q^{(k)}\in\mathcal{Q})\leq 1$. 

To obtain $m$ samples per state, we use the notion of the $m$-cover time 
$t_{\cov}(m)$. To define $t_{\cov}(m)$, we first define the random $m$-cover time $\tau_{\cov}(m)$ as the first time when each state has been visited at least $m$ times. Then the $m$-cover time can be defined as $t_{\cov}(m):=\max_{v_0\in V}\Esrw^{v_0,q}(\tau_{\cov}(m))$.

For reversible random walks, it is known (Theorem 16, \cite{cdl}) that there exists a universal constant $g_2>0$ (which does not depend on the size of the connected graph or its structure) such that
\[
  t_{\cov}(m) \leq g_2\cdot \Big(t_{\cov} + \tfrac{m}{\pi_*}\Big).
\]
From the discussion that led to \cref{thm:bound_cover} and Corollary~\ref{cor:pi_star_gamma}, we know that
on the event $\mathcal{Q}$, which has probability $\geq 1-\delta'$, we have
\[
t_\cov \leq n^3\log n \left( \frac{d_{\max}}{\delta'}\right)^{2g_1(\underline{a},\overline{a})\,\diam},
\]
and $\pi_* \ge (nd_{\max})^{-1}\,(\frac{d_{\max}}{\delta'})^{-2g_1(\underline{a},\overline{a})\,\diam}$. Hence, on the event $\mathcal{Q}$ we have
\[
  t_{\cov}(m+1) \le g_2\cdot \Big(n^3\log n + (m+1)nd_{\max}\Big)\cdot \left( \frac{d_{\max}}{\delta'}\right)^{2g_1(\underline{a},\overline{a})\,\diam}.
\]

Now realize that by Markov inequality, $\Psrw^{v_0,q}(\tau_\cov(m+1)\geq e t_\cov(m+1))\leq e^{-1}$. 
Actually according to Lemma 5 in~\cite{cdl}, we have $\forall l\in\mathbb{Z}_+$ that $\Psrw^{v_0,q}(\tau_\cov(m+1)\geq el \cdot t_\cov(m+1))\leq e^{-l}$. 

We now use the notation $\tau_\cov^{(k)}(m)$ for the $m$-cover time of the trajectory associated with the $k$-th ERRW trajectory. Conditioned on $Q^{(k)}=q$, the $k$-th ERRW is simply a random walk from $\Psrw^{v_0,q}$. The conditional probability of not having at least $m+1$ visits to each state is bounded by
\[
  \Psrw^{v_0,q}(\tau_\cov^{(k)}(m+1)\geq T)\leq \exp\Bigl(-\Big\lfloor\tfrac{T}{e\,t_\cov^{(k)}(m+1)}\Big\rfloor\Bigr)\leq \exp\Bigl(-\tfrac{T}{e\,t_\cov^{(k)}(m+1)}+1\Bigr).
\]

Similar to the procedure in \cref{eqn:errw-core1}, we have
\begin{align}
\label{eqn:errw-core2}
  \Perrw^{v_0,A}\bigl(\tau_\cov^{(k)}(m+1)\geq T\bigr)
  &= \int_{\mathcal{Q}}\Psrw^{v_0,q^{(k)}}\bigl(\tau_\cov^{(k)}(m+1)\geq T\bigr) d\nu_{v_0,A}(q^{(k)}) \nonumber \\
  & \hspace{4em} + \int_{\mathcal{Q}^c}\Psrw^{v_0,q^{(k)}}\bigl(\tau_\cov^{(k)}(m+1)\geq T\bigr) d\nu_{v_0,A}(q^{(k)})\nonumber \\
  &\le \int_{\mathcal{Q}} \exp\Bigl(-\tfrac{T}{e\,t_\cov^{(k)}(m+1)}+1\Bigr) d\nu_{v_0,A}(q^{(k)}) + \int_{\mathcal{Q}^c} \,1\, d\nu_{v_0,A}(q^{(k)})\nonumber \\
  & \le \exp\Bigl(-\tfrac{T}{e\,g_2\cdot(n^3\log n + (m+1)nd_{\max})\cdot \left( \frac{d_{\max}}{\delta'}\right)^{2g_1(\underline{a},\overline{a})\,\diam}}+1\Bigr) + \delta'.
\end{align}

On the event \(\{\tau_\cov^{(k)}(m+1)\le T\}\), each state has at least \(m\) outgoing transitions before time \(T\), and hence \(\hat P_{ij}^{(k)}=\Theta_{ij}^{(k)}\) for all \(i\sim j\). Therefore, combining \cref{eqn:errw-core1} and \cref{eqn:errw-core2}, we conclude that
\begin{align}
\label{eqn:errw-core3}
\Perrw^{v_0,A} & \bigl( \{\forall i\in V,\,\forall\,j\sim i,\, \lvert \hat{P}_{ij}^{(k)} - P_{ij}^{(k)}\rvert \le\eta \, P_{ij}^{(k)}\}\cap \{\tau_\cov^{(k)}(m+1)\leq T\}\bigr)\nonumber\\
& = \Perrw^{v_0,A} \bigl( \{\forall i\in V,\,\forall\,j\sim i,\, \lvert \Theta_{ij}^{(k)} - P_{ij}^{(k)}\rvert \le\eta \, P_{ij}^{(k)}\}\cap \{\tau_\cov^{(k)}(m+1)\leq T\}\bigr)\nonumber \\
& \geq 1 - 2\,nd_{\max} \,\exp\Bigl(-2\,\eta^2 m 
      d_{\max}^{-2}\,\Big(\frac{d_{\max}}{\delta'}\Big)^{-4g_1(\underline{a},\overline{a})\,\diam}  \Bigr) \nonumber\\
& \hspace{10em} - \exp\Bigl(-\tfrac{T}{e\,g_2\cdot(n^3\log n + (m+1)nd_{\max})\cdot \left( \frac{d_{\max}}{\delta'}\right)^{2g_1(\underline{a},\overline{a})\,\diam}}+1\Bigr) - 2\delta'
\end{align}
Here we used the fact that on the event $\{N_i(k)\geq m, \forall i\in V\}$ we have $\Theta_{ij}^{(k)} = \hat{P}_{ij}^{(k)}$ for all $i,j\in V$ s.t. $i\sim j$. Also the condition $\{N_i(k)\geq m, \forall i\in V\}$ is implied by the condition
$\{\tau_\cov^{(k)}(m+1)\leq T\}$,

We now want to choose \(T\) and \(m\) appropriately 
so that with high probability, we simultaneously (i) gather at least $m$ samples per state and (ii) achieve the desired \(\eta\)-approximation of all transition probabilities. We then take 
\[m=\frac{d_{\max}^2}{2 \eta^2}\Big(\frac{d_{\max}}{\delta'}\Big)^{4g_1(\underline{a},\overline{a})\diam}\cdot\log\Big(\frac{2nd_{\max}}{\delta'}\Big)\]
to make the term 
\[2\,n d_{\max} \,\exp\Bigl(-2\,\eta^2 m d_{\max}^{-2}\,\Big(\frac{d_{\max}}{\delta'}\Big)^{-4g_1(\underline{a},\overline{a})\,\diam}  \Bigr)\leq \delta'.\]

We further take
\begin{align*}
T 
&= e\,g_2(n^3\log n +(m+1)nd_{\max})\Big(\frac{d_{\max}}{\delta'}\Big)^{2g_1(\underline{a},\overline{a})\diam}\log\Big(\frac{e}{\delta'}\Big)\\
\end{align*}
to make the term
\[\exp\Bigl(-\tfrac{T}{e\,g_2\cdot(n^3\log n + (m+1)nd_{\max})\cdot \left( \frac{d_{\max}}{\delta'}\right)^{2g_1(\underline{a},\overline{a})\,\diam}}+1\Bigr)\leq \delta'.\]

Given this choice of $m$ and $T$, we have
\[\Perrw^{v_0,A} \bigl( \{\forall i\in V,\,\forall\,j\sim i,\, \lvert \hat{P}_{ij}^{(k)} - P_{ij}^{(k)}\rvert \le\eta \, P_{ij}^{(k)}\}\cap \{\tau_\cov^{(k)}(m+1)\leq T\}\bigr)\geq 1-4\delta'.\]

Chernoff bound will be used for bounding concentration over the $K$ trajectories. 
\medskip
\begin{lem}[Multiplicative Chernoff for bounded variables]
Let \(X_1,\dots,X_K\) be i.i.d. random variables in \([0,1]\), with mean \(\mu\). Then for \(\epsilon\in(0,1)\),
\[
\Pr\left(\left|\frac1K\sum_{k=1}^K X_k-\mu\right|\ge \epsilon\mu\right)
\le
2\exp\left(-\frac{\epsilon^2}{3}\mu K\right).
\]
\end{lem}

For later use, define
\[
\mu_*:=\frac{\underline a(\underline a+1)}{(d_{\max}\overline a+1)^2},
\qquad
c_0:=\frac{\underline a}{\underline a+1},
\qquad
\mathcal P_1:=\{\{e,e'\}:e,e'\in E,\ e\neq e',\ |e\cap e'|=1\}.
\]

We now fix arbitrary $\epsilon\in(0,1/2)$. We first control the bias introduced by
using the finite trajectory estimator $\hat U_e$ in place of the ideal quantity $U_e$.
As before,
\[
\Eerrw^{v_0,A}(\hat U_e)
=
\big(\Eerrw^{v_0,A}(\hat U_e)-\Eerrw^{v_0,A}(\overline U_e)\big)
+
\Eerrw^{v_0,A}(U_e),
\]
where we used $\Eerrw^{v_0,A}(\overline U_e)=\Eerrw^{v_0,A}(U_e)$. Therefore
\[
\big|\Eerrw^{v_0,A}(\hat U_e)-\Eerrw^{v_0,A}(U_e)\big|
\le
\frac1K\sum_{k=1}^K
\Eerrw^{v_0,A}\big|\hat U_e^{(k)}-U_e^{(k)}\big|.
\]

We next bound the summands. For every $k$,
\begin{align}
\label{eqn:errw-core7}
\Eerrw^{v_0,A}\big|\hat U_e^{(k)}-U_e^{(k)}\big|
&\le
2\Perrw^{v_0,A}\big(|\hat U_e^{(k)}-U_e^{(k)}|>\eta U_e^{(k)}\big)
+\eta.
\end{align}
Indeed, on the event
$|\hat U_e^{(k)}-U_e^{(k)}|\le \eta U_e^{(k)}$ the contribution is at most
$\eta\E(U_e^{(k)})\le\eta$, while on the complement we use the crude bound
$|\hat U_e^{(k)}-U_e^{(k)}|\le2$.

Moreover, if
\[
\hat P_{ij}^{(k)}\in\left(1\pm\frac{\eta}4\right)P_{ij}^{(k)}
\quad\text{and}\quad
\hat P_{ji}^{(k)}\in\left(1\pm\frac{\eta}4\right)P_{ji}^{(k)},
\]
then $\hat U_e^{(k)}\in(1\pm\eta)U_e^{(k)}$ for every $\eta\in(0,1)$.
Therefore,
\begin{align*}
\Perrw^{v_0,A}\big(|\hat U_e^{(k)}-U_e^{(k)}| & >\eta U_e^{(k)}\big) \\
& \le
\Perrw^{v_0,A}\left(
\left\{\exists i\in V,\ j\sim i:
|\hat P_{ij}^{(k)}-P_{ij}^{(k)}|
\ge \frac{\eta}4 P_{ij}^{(k)}
\right\}
\cup
\{\tau_\cov^{(k)}(m+1)>T\}
\right).
\end{align*}
By the transition-matrix estimate above, the last probability is at most $4\delta'$ provided
\begin{equation}
\label{eqn:errw-mchoice}
m\ge
\frac{8d_{\max}^2}{\eta^2}
\left(\frac{d_{\max}}{\delta'}\right)^{4g_1\diam}
\log\left(\frac{2nd_{\max}}{\delta'}\right),
\end{equation}
and
\begin{equation}
\label{eqn:errw-Tchoice}
T\ge
e\,g_2\big(n^3\log n+(m+1)nd_{\max}\big)
\left(\frac{d_{\max}}{\delta'}\right)^{2g_1\diam}
\log\left(\frac{e}{\delta'}\right),
\end{equation}
where we write $g_1$ for $g_1(\underline a,\overline a)$.

Using this in \eqref{eqn:errw-core7}, we get
\[
\Eerrw^{v_0,A}\big|\hat U_e^{(k)}-U_e^{(k)}\big|
\le
8\delta'+\eta.
\]
Taking $\eta=\delta'$ gives
\[
\Eerrw^{v_0,A}\big|\hat U_e^{(k)}-U_e^{(k)}\big|\le9\delta',
\]
and hence
\begin{equation}
\label{eqn:errw-core10}
\big|\Eerrw^{v_0,A}(\hat U_e)-\Eerrw^{v_0,A}(U_e)\big|
\le9\delta'.
\end{equation}

From the explicit moment formula,
\begin{equation}
\label{eqn:errw-ue_lb}
\Eerrw^{v_0,A}(U_e)
=
\frac{a_e(a_e+1)}
{(a_i+1-\ind_{i=v_0})(a_j+1-\ind_{j=v_0})}
\ge
\mu_*.
\end{equation}
If
\begin{equation}
\label{eqn:errw-deltaprimecondition}
\delta'\le \frac{\epsilon}{9}\mu_*,
\end{equation}
then \eqref{eqn:errw-core10} implies
\begin{equation}
\label{eqn:errw-core-add1}
\big|\Eerrw^{v_0,A}(\hat U_e)-\Eerrw^{v_0,A}(U_e)\big|
\le
\epsilon\,\Eerrw^{v_0,A}(U_e).
\end{equation}
In particular, since $\epsilon<1/2$,
\[
\Eerrw^{v_0,A}(\hat U_e)
\ge
(1-\epsilon)\Eerrw^{v_0,A}(U_e)
\ge
\frac12\mu_*.
\]

We now use multiplicative Chernoff instead of Chebyshev. Since
$\hat U_e^{(1)},\ldots,\hat U_e^{(K)}$ are i.i.d. random variables in $[0,1]$ and
\[
\hat U_e=\frac1K\sum_{k=1}^K\hat U_e^{(k)},
\]
the bounded-variable multiplicative Chernoff bound gives
\[
\Perrw^{v_0,A}\big(
|\hat U_e-\Eerrw^{v_0,A}(\hat U_e)|
\ge
\epsilon\,\Eerrw^{v_0,A}(\hat U_e)
\big)
\le
2\exp\left(
-\frac{\epsilon^2}{3}\Eerrw^{v_0,A}(\hat U_e)K
\right).
\]
Using $\Eerrw^{v_0,A}(\hat U_e)\ge\mu_*/2$, we obtain
\begin{equation}
\label{eqn:errw-U_chernoff_single}
\Perrw^{v_0,A}\big(
|\hat U_e-\Eerrw^{v_0,A}(\hat U_e)|
\ge
\epsilon\,\Eerrw^{v_0,A}(\hat U_e)
\big)
\le
2\exp\left(
-\frac{\epsilon^2\mu_*K}{6}
\right).
\end{equation}

Combining \eqref{eqn:errw-core-add1} and \eqref{eqn:errw-U_chernoff_single}, we get
\[
\Perrw^{v_0,A}\big(
|\hat U_e-\Eerrw^{v_0,A}(U_e)|
\ge
3\epsilon\,\Eerrw^{v_0,A}(U_e)
\big)
\le
2\exp\left(
-\frac{\epsilon^2\mu_*K}{6}
\right).
\]
Finally, taking a union bound over $e\in E$ yields
\begin{equation}
\label{eqn:errw-U_chernoff_union}
\Perrw^{v_0,A}\big(
\exists e\in E:\,
|\hat U_e-\Eerrw^{v_0,A}(U_e)|
\ge
3\epsilon\,\Eerrw^{v_0,A}(U_e)
\big)
\le
2|E|\exp\left(
-\frac{\epsilon^2\mu_*K}{6}
\right).
\end{equation}
In particular, since $|E|\le nd_{\max}/2$, it suffices to take
\[
K
\ge
\frac{6}{\epsilon^2\mu_*}
\log\left(\frac{nd_{\max}}{\delta}\right)
\ge 
\frac{6}{\epsilon^2\mu_*}
\log\left(\frac{2|E|}{\delta}\right)
\]
to make the probability in \eqref{eqn:errw-U_chernoff_union} at most $\delta$.

\textbf{Step 2. Estimating $\Eerrw^{v_0,A}(U_e U_{e'}), \forall e,e'\in E\text{ s.t. }\,|e\cap e'|=1$.}

Recall that the proposed estimator for $\Eerrw^{v_0,A}(U_eU_{e'})$ is
\[
\hat V_{e,e'}:=\frac1K\sum_{k=1}^K \hat U_e^{(k)}\hat U_{e'}^{(k)},
\]
as defined in \cref{eqn:est-v}. We first control the bias of this estimator. As in the preceding step,
\[
\Eerrw^{v_0,A}(\hat V_{e,e'})
=
\frac1K\sum_{k=1}^K
\Eerrw^{v_0,A}\big(\hat U_e^{(k)}\hat U_{e'}^{(k)}\big).
\]
Adding and subtracting \(U_e^{(k)}U_{e'}^{(k)}\), we get
\begin{align*}
\big|\Eerrw^{v_0,A}(\hat V_{e,e'})
-\Eerrw^{v_0,A}(U_eU_{e'})\big|
&\le
\frac1K\sum_{k=1}^K
\Eerrw^{v_0,A}\big|\hat U_e^{(k)}\hat U_{e'}^{(k)}
-U_e^{(k)}U_{e'}^{(k)}\big|  \\
&\le
\frac1K\sum_{k=1}^K
\Eerrw^{v_0,A}\big|\hat U_e^{(k)}-U_e^{(k)}\big|
+
\frac1K\sum_{k=1}^K
\Eerrw^{v_0,A}\big|\hat U_{e'}^{(k)}-U_{e'}^{(k)}\big|,
\end{align*}
where we used \(0\le \hat U_e^{(k)},U_e^{(k)},\hat U_{e'}^{(k)},U_{e'}^{(k)}\le1\).
By \cref{eqn:errw-core10}, under the choices of \(m,T\) in \cref{eqn:errw-mchoice,eqn:errw-Tchoice} and with \(\eta=\delta'\),
\[
\Eerrw^{v_0,A}\big|\hat U_e^{(k)}-U_e^{(k)}\big|\le 9\delta'
\qquad\text{for every }e\in E.
\]
Therefore
\begin{equation}
\label{eqn:errw-core11}
\big|\Eerrw^{v_0,A}(\hat V_{e,e'})
-\Eerrw^{v_0,A}(U_eU_{e'})\big|
\le 18\delta'.
\end{equation}

Next, we lower bound the target moment. 
By \cref{eqn:errw-ue_lb}, \(\Eerrw^{v_0,A}(U_e)\ge\mu_*\) for every \(e\in E\). If \(e\) and \(e'\) share the vertex \(j\), then the moment identity in Lemma~\ref{lem:moment} gives
\[
\Eerrw^{v_0,A}(U_eU_{e'})
=
\Eerrw^{v_0,A}(U_e)\Eerrw^{v_0,A}(U_{e'})
\cdot
\frac{a_j+1-\ind_{j=v_0}}{a_j+3-\ind_{j=v_0}}.
\]
Moreover,
\[
\frac{a_j+1-\ind_{j=v_0}}{a_j+3-\ind_{j=v_0}}
\ge
\frac{a_j}{a_j+2}
\ge
\frac{2\underline a}{2\underline a+2}
=
\frac{\underline a}{\underline a+1}.
\]
Then, for all adjacent edge pairs \((e,e')\),
\begin{equation}
\label{eqn:errw-ue2_lb}
\Eerrw^{v_0,A}(U_eU_{e'})
\ge
c_0\mu_*^2.
\end{equation}

Assume
\begin{equation}
\label{eqn:errw-deltaprimeconditionV}
\delta'\le \frac{\epsilon}{18}c_0\mu_*^2.
\end{equation}
Then \cref{eqn:errw-core11} implies
\[
\big|\Eerrw^{v_0,A}(\hat V_{e,e'})
-\Eerrw^{v_0,A}(U_eU_{e'})\big|
\le
\epsilon\,\Eerrw^{v_0,A}(U_eU_{e'}).
\]
In particular, if \(\epsilon<1/2\), then
\[
\Eerrw^{v_0,A}(\hat V_{e,e'})
\ge
(1-\epsilon)\Eerrw^{v_0,A}(U_eU_{e'})
\ge
\frac12 c_0\mu_*^2.
\]

We now apply multiplicative Chernoff. For fixed \(e,e'\), the variables
\[
\hat U_e^{(1)}\hat U_{e'}^{(1)},\ldots,
\hat U_e^{(K)}\hat U_{e'}^{(K)}
\]
are i.i.d. and take values in \([0,1]\), with average \(\hat V_{e,e'}\). Therefore,
\[
\Perrw^{v_0,A}\Big(
|\hat V_{e,e'}-\Eerrw^{v_0,A}(\hat V_{e,e'})|
\ge
\epsilon\,\Eerrw^{v_0,A}(\hat V_{e,e'})
\Big)
\le
2\exp\left(
-\frac{\epsilon^2}{3}\Eerrw^{v_0,A}(\hat V_{e,e'})K
\right).
\]
Using \(\Eerrw^{v_0,A}(\hat V_{e,e'})\ge \frac12c_0\mu_*^2\), we get
\begin{equation}
\label{eqn:errw-V_chernoff_single}
\Perrw^{v_0,A}\Big(
|\hat V_{e,e'}-\Eerrw^{v_0,A}(\hat V_{e,e'})|
\ge
\epsilon\,\Eerrw^{v_0,A}(\hat V_{e,e'})
\Big)
\le
2\exp\left(
-\frac{\epsilon^2c_0\mu_*^2K}{6}
\right).
\end{equation}

Combining the concentration event with the bias bound gives, as in Step 1,
\[
|\hat V_{e,e'}-\Eerrw^{v_0,A}(U_eU_{e'})|
\le
3\epsilon\,\Eerrw^{v_0,A}(U_eU_{e'})
\]
outside the exceptional event in \cref{eqn:errw-V_chernoff_single}. Indeed,
\[
|\hat V_{e,e'}-\E(U_eU_{e'})|
\le
\epsilon\E(\hat V_{e,e'})
+\epsilon\E(U_eU_{e'})
\le
3\epsilon\E(U_eU_{e'}),
\]
where we used \(\E(\hat V_{e,e'})\le(1+\epsilon)\E(U_eU_{e'})\) and \(\epsilon<1\).

Recall that
\(
\mathcal P_1
\)
is the set of unordered adjacent edge pairs. Since
\[
|\mathcal P_1|
=
\sum_{v\in V}\binom{\deg(v)}2
\le
(d_{\max}-1)|E|
\le
nd_{\max}^2,
\]
a union bound over \(\mathcal P_1\) gives
\begin{equation}
\label{eqn:errw-V_chernoff_union}
\Perrw^{v_0,A}\Big(
\exists (e,e')\in\mathcal P_1:\,
|\hat V_{e,e'}-\Eerrw^{v_0,A}(U_eU_{e'})|
\ge
3\epsilon\,\Eerrw^{v_0,A}(U_eU_{e'})
\Big)
\le
2|\mathcal P_1|
\exp\left(
-\frac{\epsilon^2c_0\mu_*^2K}{6}
\right).
\end{equation}
In particular, it suffices to take
\[
K\ge
\frac{6}{\epsilon^2c_0\mu_*^2}
\log\left(\frac{2|\mathcal P_1|}{\delta}\right)
\]
to make the probability in \cref{eqn:errw-V_chernoff_union} at most \(\delta\). Since
\(|\mathcal P_1|\le nd_{\max}^2\), a simpler sufficient condition is
\[
K\ge
\frac{6}{\epsilon^2c_0\mu_*^2}
\log\left(\frac{2nd_{\max}^2}{\delta}\right).
\]

\textbf{Step 3. Estimating 
$\Eerrw^{v_0,A}(U_e)\Eerrw^{v_0,A}(U_{e'})
-\Eerrw^{v_0,A}(U_eU_{e'})$ for adjacent $e,e'$.}

Recall that $\hat\Delta_{e,e'}$, defined in \cref{eqn:est-delta}, is the proposed estimator for
\[
\Delta_{e,e'}
:=
\Eerrw^{v_0,A}(U_e)\Eerrw^{v_0,A}(U_{e'})
-
\Eerrw^{v_0,A}(U_eU_{e'}).
\]
We study $\hat U_e\hat U_{e'}-\hat V_{e,e'}$, which equals $\hat\Delta_{e,e'}$ whenever it is nonnegative.

The key point is that no additional variance estimate is needed. Once $\hat U_e,\hat U_{e'}$, and $\hat V_{e,e'}$ are multiplicatively accurate, the corresponding estimate of $\Delta_{e,e'}$ is also accurate.

Fix an internal tolerance $\epsilon_0\in(0,1)$, to be chosen below. Suppose that for all $e\in E$ and all $(e,e')\in\mathcal P_1$,
\[
|\hat U_e-\E(U_e)|\le \epsilon_0\E(U_e),
\qquad
|\hat V_{e,e'}-\E(U_eU_{e'})|\le \epsilon_0\E(U_eU_{e'}).
\]
Then, for every $(e,e')\in\mathcal P_1$,
\begin{align*}
&\big|
(\hat U_e\hat U_{e'}-\hat V_{e,e'})
-
(\E(U_e)\E(U_{e'})-\E(U_eU_{e'}))
\big| \\
&\qquad\le
|\hat U_e\hat U_{e'}-\E(U_e)\E(U_{e'})|
+
|\hat V_{e,e'}-\E(U_eU_{e'})|.
\end{align*}
The second term is at most $\epsilon_0\E(U_eU_{e'})$. For the first term, since
$\hat U_e\le (1+\epsilon_0)\E(U_e)$ and
$\hat U_{e'}\le (1+\epsilon_0)\E(U_{e'})$, while also
$\hat U_e\ge (1-\epsilon_0)\E(U_e)$ and
$\hat U_{e'}\ge (1-\epsilon_0)\E(U_{e'})$, we have
\[
|\hat U_e\hat U_{e'}-\E(U_e)\E(U_{e'})|
\le
3\epsilon_0\,\E(U_e)\E(U_{e'})
\]
for $\epsilon_0\in(0,1)$. Therefore,
\begin{equation}
\label{eqn:errw-delta-det-bound}
\big|
(\hat U_e\hat U_{e'}-\hat V_{e,e'})
-\Delta_{e,e'}
\big|
\le
3\epsilon_0\,\E(U_e)\E(U_{e'})
+
\epsilon_0\,\E(U_eU_{e'}).
\end{equation}
Since $\E(U_eU_{e'})\le \E(U_e)\E(U_{e'})$, this gives
\[
\big|
(\hat U_e\hat U_{e'}-\hat V_{e,e'})
-\Delta_{e,e'}
\big|
\le
4\epsilon_0\,\E(U_e)\E(U_{e'}).
\]

It remains to compare $\E(U_e)\E(U_{e'})$ with $\Delta_{e,e'}$. From the moment formulas in Lemma \ref{lem:moment}, if $j$ is the shared vertex of $e$ and $e'$, then
\[
\Delta_{e,e'}
=
\frac{2\E(U_e)\E(U_{e'})}{a_j+3-\ind_{j=v_0}}.
\]
Equivalently,
\[
\E(U_e)\E(U_{e'})
=
\frac{a_j+3-\ind_{j=v_0}}{2}\Delta_{e,e'}
\le
\frac{\overline a d_{\max}+3}{2}\Delta_{e,e'}.
\]
Thus
\[
\big|
(\hat U_e\hat U_{e'}-\hat V_{e,e'})
-\Delta_{e,e'}
\big|
\le
2\epsilon_0(\overline a d_{\max}+3)\Delta_{e,e'}.
\]
Choose
\begin{equation}
\label{eqn:errw-eps0-delta-choice}
\epsilon_0:=
\frac{\epsilon}{2(\overline a d_{\max}+3)}.
\end{equation}
Then
\begin{equation}
\label{eqn:errw-delta-good-deterministic}
\big|
(\hat U_e\hat U_{e'}-\hat V_{e,e'})
-\Delta_{e,e'}
\big|
\le
\epsilon\,\Delta_{e,e'}
\qquad
\forall (e,e')\in\mathcal P_1.
\end{equation}

In particular, since \(\epsilon<1\), the quantity
\(\hat U_e\hat U_{e'}-\hat V_{e,e'}\) is positive on this event, so it agrees with the estimator
\(\hat\Delta_{e,e'}\) defined in \cref{eqn:est-delta}.

We now invoke the concentration bounds from Steps 1 and 2 with the final multiplicative tolerance there taken to be \(\epsilon_0\), equivalently with the auxiliary tolerance \(\epsilon_0/3\) in the last Chernoff step. Thus, provided the corresponding bias conditions hold, namely
\[
\delta'\le \frac{\epsilon_0}{27}\mu_*,
\qquad
\delta'\le \frac{\epsilon_0}{54}c_0\mu_*^2,
\]
Steps 1 and 2 give
\[
\Perrw^{v_0,A}\Big(
\exists e\in E:\,
|\hat U_e-\E(U_e)|>\epsilon_0\E(U_e)
\Big)
\le
2|E|\exp\left(-\frac{\epsilon_0^2\mu_*K}{54}\right),
\]
and
\[
\Perrw^{v_0,A}\Big(
\exists (e,e')\in\mathcal P_1:\,
|\hat V_{e,e'}-\E(U_eU_{e'})|>\epsilon_0\E(U_eU_{e'})
\Big)
\le
2|\mathcal P_1|\exp\left(-\frac{\epsilon_0^2c_0\mu_*^2K}{54}\right).
\]
Therefore, by the deterministic implication above,
\begin{align}
\label{eqn:errw-Delta_chernoff_union}
\Perrw^{v_0,A}\Big(
\exists (e,e')\in\mathcal P_1:\,
|(\hat U_e\hat U_{e'}-\hat V_{e,e'})-\Delta_{e,e'}|
>
\epsilon\Delta_{e,e'}
\Big)
&\le
2|E|\exp\left(-\frac{\epsilon_0^2\mu_*K}{54}\right) \nonumber\\
&\quad+
2|\mathcal P_1|\exp\left(-\frac{\epsilon_0^2c_0\mu_*^2K}{54}\right).
\end{align}
Since $\epsilon_0=\epsilon/[2(\overline a d_{\max}+3)]$, it suffices to take
\[
K
\ge
\max\left\{
\frac{216(\overline a d_{\max}+3)^2}{\epsilon^2\mu_*}
\log\left(\frac{4|E|}{\delta}\right),
\,
\frac{216(\overline a d_{\max}+3)^2}{\epsilon^2c_0\mu_*^2}
\log\left(\frac{4|\mathcal P_1|}{\delta}\right)
\right\}
\]
to make the failure probability in \eqref{eqn:errw-Delta_chernoff_union} at most $\delta$.

\textbf{Step 4. Piecing it all together.}

We now combine the preceding estimates. 
Let
\[
\epsilon_0:=\frac{\epsilon}{2(\overline a d_{\max}+3)}.
\]
To make the bias terms negligible at the internal tolerance level needed for the
\(\Delta\)-estimate, it is enough to choose \(\delta'\) so that
\[
\delta'
\le
\min\left\{
\frac{\epsilon_0}{27}\mu_*,
\frac{\epsilon_0}{54}c_0\mu_*^2
\right\}.
\]
Equivalently, it is enough to take, for example,
\[
\delta'
\le
\frac{\epsilon}{108(\overline a d_{\max}+3)}
\min\{\mu_*,c_0\mu_*^2\}.
\]

With this choice of \(\delta'\), and with \(m,T\) satisfying \cref{eqn:errw-mchoice,eqn:errw-Tchoice}
where \(\eta=\delta'\), Steps 1--3 imply the following. For all \(e\in E\),
\[
|\hat U_e-\Eerrw^{v_0,A}(U_e)|
\le
\epsilon_0\,\Eerrw^{v_0,A}(U_e)
\]
fails with probability at most
\[
2|E|\exp\left(-\frac{\epsilon_0^2\mu_*K}{54}\right).
\]
For all \((e,e')\in\mathcal P_1\),
\[
|\hat V_{e,e'}-\Eerrw^{v_0,A}(U_eU_{e'})|
\le
\epsilon_0\,\Eerrw^{v_0,A}(U_eU_{e'})
\]
fails with probability at most
\[
2|\mathcal P_1|\exp\left(-\frac{\epsilon_0^2c_0\mu_*^2K}{54}\right).
\]
On the intersection of these two good events, Step 3 gives
\[
|\hat\Delta_{e,e'}-\Delta_{e,e'}|
\le
\epsilon\,\Delta_{e,e'}
\qquad\forall (e,e')\in\mathcal P_1,
\]
where
\[
\Delta_{e,e'}
:=
\Eerrw^{v_0,A}(U_e)\Eerrw^{v_0,A}(U_{e'})
-
\Eerrw^{v_0,A}(U_eU_{e'}).
\]
Moreover, since \(\epsilon_0\le \epsilon\), the same good event also gives
\[
|\hat U_e-\Eerrw^{v_0,A}(U_e)|
\le
\epsilon\,\Eerrw^{v_0,A}(U_e)
\qquad\forall e\in E,
\]
and
\[
|\hat V_{e,e'}-\Eerrw^{v_0,A}(U_eU_{e'})|
\le
\epsilon\,\Eerrw^{v_0,A}(U_eU_{e'})
\qquad\forall (e,e')\in\mathcal P_1.
\]

Therefore, by the union bound, all three families of quantities are simultaneously
\(\epsilon\)-multiplicatively approximated with probability at least
\[
1
-
2|E|\exp\left(-\frac{\epsilon_0^2\mu_*K}{54}\right)
-
2|\mathcal P_1|\exp\left(-\frac{\epsilon_0^2c_0\mu_*^2K}{54}\right).
\]
Consequently, for any \(\delta\in(0,1)\), it suffices to take
\[
K
\ge
\max\left\{
\frac{54}{\epsilon_0^2\mu_*}
\log\left(\frac{4|E|}{\delta}\right),
\frac{54}{\epsilon_0^2c_0\mu_*^2}
\log\left(\frac{4|\mathcal P_1|}{\delta}\right)
\right\}.
\]
Using \(\epsilon_0=\epsilon/[2(\overline a d_{\max}+3)]\), this is equivalent to
\[
K
\ge
\max\left\{
\frac{216(\overline a d_{\max}+3)^2}{\epsilon^2\mu_*}
\log\left(\frac{4|E|}{\delta}\right),
\frac{216(\overline a d_{\max}+3)^2}{\epsilon^2c_0\mu_*^2}
\log\left(\frac{4|\mathcal P_1|}{\delta}\right)
\right\}.
\]
Since \(|E|\le nd_{\max}/2\) and \(|\mathcal P_1|\le nd_{\max}^2\), a simpler sufficient
condition is
\[
K
\ge
\frac{216(\overline a d_{\max}+3)^2}{\epsilon^2c_0\mu_*^2}
\log\left(\frac{4nd_{\max}^2}{\delta}\right).
\]
In particular,
\[
K
=
O_{\underline a,\overline a}
\left(
\frac{d_{\max}^6}{\epsilon^2}
\log\frac{nd_{\max}}{\delta}
\right).
\]

\vspace{.3em}
\begin{thm}[Non-asymptotic bounds]
\label{thm:nonasym}
Let $G=(V=[n],E)$ be a connected weighted graph with initial edge weights
$A=(a_e)_{e\in E}$ and diameter $\diam$. Assume that
$\underline a\le a_e\le\overline a$ for all $e\in E$, where
$\underline a,\overline a$ are positive constants. Fix \(\epsilon,\delta\in(0,1)\). Let
\[
\mathcal P_1:=\{\{e,e'\}:e,e'\in E,\ e\neq e',\ |e\cap e'|=1\}.
\]
Define
\[
\mu_*:=\frac{\underline a(\underline a+1)}{(d_{\max}\overline a+1)^2},
\qquad
c_0:=\frac{\underline a}{\underline a+1},
\qquad
\epsilon_0:=\frac{\epsilon}{2(\overline a d_{\max}+3)}.
\]
Let \(\delta'\in(0,1)\) satisfy
\[
\delta'
\le
\frac{\epsilon}{108(\overline a d_{\max}+3)}
\min\{\mu_*,c_0\mu_*^2\}.
\]
Suppose
\[
m\ge
\frac{8d_{\max}^2}{(\delta')^2}
\left(\frac{d_{\max}}{\delta'}\right)^{4g_1(\underline a,\overline a)\diam}
\log\left(\frac{2nd_{\max}}{\delta'}\right),
\]
and
\[
T\ge
e\,g_2\big(n^3\log n+(m+1)nd_{\max}\big)
\left(\frac{d_{\max}}{\delta'}\right)^{2g_1(\underline a,\overline a)\diam}
\log\left(\frac{e}{\delta'}\right).
\]
If
\[
K
\ge
\max\left\{
\frac{216(\overline a d_{\max}+3)^2}{\epsilon^2\mu_*}
\log\left(\frac{4|E|}{\delta}\right),
\frac{216(\overline a d_{\max}+3)^2}{\epsilon^2c_0\mu_*^2}
\log\left(\frac{4|\mathcal P_1|}{\delta}\right)
\right\},
\]
then, with probability at least \(1-\delta\), the estimators simultaneously give
\(\epsilon\)-multiplicative approximations of
\[
\Eerrw^{v_0,A}(U_e),\qquad
\Eerrw^{v_0,A}(U_eU_{e'}),\qquad
\Eerrw^{v_0,A}(U_e)\Eerrw^{v_0,A}(U_{e'})
-\Eerrw^{v_0,A}(U_eU_{e'})
\]
for all \(e\in E\) and all \((e,e')\in\mathcal P_1\).
Consequently, the induced estimates of the \(o_j\)'s, and hence of the edge weights
\(a_e\), are \(O(\epsilon)\)-accurate.
\end{thm}

\medskip
\begin{rem}[Constant-degree graphs]
For constant-degree graphs, i.e., when $d_{\max}=O(1)$, the preceding theorem gives a particularly simple form. For fixed $\epsilon,\delta\in(0,1)$ and fixed constants $\underline a,\overline a$, it suffices to take
\[
K=O(\log n),
\qquad
m=\exp\{O(\diam)\}\log n,
\]
and
\[
T
=
O\!\left(
n^3\log n\,\exp\{O(\diam)\}
\right).
\]
Thus, for bounded-degree graphs, the number of independent trajectories needed is only logarithmic in the graph size. The main cost is instead the required trajectory length, which is governed by the cover-time factor $\exp\{O(\diam)\}$. In particular, if the graph has both bounded degree and logarithmic diameter, then \(T\) is polynomial in \(n\).
\end{rem}

\medskip
\begin{rem}[Positive-integer-valued edge weights]
In the special case where all edge weights are integers, one might wonder whether the sample complexity can be significantly reduced. 
However, the requirement to cover the graph
imposes a fundamental bottleneck on the trajectory length $T$.

To see this, consider a path consisting of $(n+1)$ vertices and let $A = \mathbf{1}$ be the initial edge local time. Label the left-most vertex as $v_0$ and the right-most vertex as $v_n$. Define $\tilde{A}$ as an alternative hypothesis that matches $A$ except for the right-most edge weight, which is set to 2 in $\tilde{A}$. To obtain any information about the right-most edge, the trajectory must reach it first, necessitating a traversal of the entire path. This imposes a constraint on $T$, making it at least $2^{\Omega(n)}$. The time to reach $v_n$ from 
$v_0$ is asymptotically bounded below like this because
it stochastically dominates the time to reach $n$ from $0$ in a biased random walk with negative drift on the nonnegative integers. So, while the positive-integer valued case is likely to be easier than the general case as it may require a lower sample complexity for the number of trajectories $K$, the dependence on
the cover time $T$ 
is still unimprovable using our current techniques.
\end{rem}

\section{Open Problems and Future Directions}

In this work, we explored how to estimate the initial weights in an 
edge-reinforced random walk (ERRW) given a collection of sampled 
trajectories. We provided lower and upper bounds on the complexity of these estimation tasks, both in terms of the length of the trajectories and in terms of the number of trajectories. We expect that the upper bounds we derived can be significantly improved with more carefully designed algorithms.

We list some other questions that seem to warrant further investigation.

\textbf{Question 1. Sample-Complexity Upper Bounds for Testing}

Consider identity testing: the null hypothesis $H_0$ is $A = A_0$, while the alternative $H_1$ states $A$ is at least $\epsilon$-far from $A_0$ under some metric. An interesting question is whether the sample complexity of the classical likelihood-ratio tests,
is competitive with test that use ERRW-specific structure, perhaps along the lines of the ERRW-specific properties we have used in designing the estimators derived in thie paper.

\textbf{Question 2. Non-Linear Activation Functions}

Beyond the linear activation setting, one can study ERRW with non-linear activation. The law of ERRW with activation function $g:\mathbb{R}_{++}^n\rightarrow\mathbb{R}_{++}^n$ starting at $v_0$ with initial edge local time $L_0=A$ can be stated as
\[
\P_{\textnormal{errw}(g)}^{v_0,A}(X_{t+1}=i \mid \{X_0,X_1,\cdots, X_t\}) = \frac{g\big(L_t^{\{X_t, i\}}\big)}{\sum_{i':X_t \sim i'} g\big(L_t^{\{X_t, i'\}}\big)}, \quad \forall i\in V,
\quad \forall t \in \mathbb{Z}_+.
\]
Some examples for non-linear activation functions include $g(x) = x^\alpha$ ($\alpha \neq 1$) and $g(x) = \log(x+1)$. While such processes have been analyzed in the literature(e.g., see \cite{lt, ct}), their behavior cannot be reduced to a mixture of Markov chains, and no ``magic formula'' integration identity is known. Nevertheless, one might still attempt to recover the initial weights from trajectory data or, alternatively, prove that exact recovery is impossible.

\section*{Acknowledgements}

This research was supported by the grants
CCF-1901004 and CIF-2007965 from the U.S. National Science Foundation.


\begin{thebibliography}{9}

\bibitem{bhs}
  Bauerschmidt, R., Helmuth, T. and Swan, A., 2021. The geometry of random walk isomorphism theorems. In Annales de l’Institut Henri Poincaré-Probabilités et Statistiques (Vol. 57, No. 1, pp. 408-454).

\bibitem{benaim}
  Benaïm, M., 1997. Vertex-reinforced random walks and a conjecture of Pemantle. The Annals of Probability, 25(1), pp.361-392.

\bibitem{bgl}
  Benson, A.R., Gleich, D.F. and Lim, L.H., 2017. The spacey random walk: A stochastic process for higher-order data. SIAM Review, 59(2), pp.321-345.

\bibitem{BLM2013}
  Boucheron, S., Lugosi, G., and Massart, P., 2013.
  {\em Concentration inequalities: A non-asymptotic theory of independence}. Oxford University Press.

\bibitem{page}
  Brin, S. and Page, L., 1998. The anatomy of a large-scale hypertextual web search engine. Computer networks and ISDN systems, 30(1-7), pp.107-117.

\bibitem{cdl}
  Chan, S.O., Ding, Q. and Li, S.H., 2021, March. Learning and Testing Irreducible Markov Chains via the $ k $-Cover Time. In Algorithmic Learning Theory (pp. 458-480). PMLR.

\bibitem{Chung60}
  Chung, K. L., 1960. {\em Markov chains with stationary transition probabilities}. 
  Springer Verlag.

\bibitem{ct}
  Cotar, C. and Thacker, D., 2017. Edge-and vertex-reinforced random walks with super-linear reinforcement on infinite graphs. Annals of Probability, 45(4), pp. 2655-2706.

\bibitem{dmpu}
  Das Sarma, A., Molla, A.R., Pandurangan, G. and Upfal, E., 2013, January. Fast distributed pagerank computation. In International Conference on Distributed Computing and Networking (pp. 11-26). Berlin, Heidelberg: Springer Berlin Heidelberg.

\bibitem{ddg}
  Daskalakis, C., Dikkala, N. and Gravin, N., 2018, July. Testing symmetric Markov chains from a single trajectory. In Conference On Learning Theory (pp. 385-409). PMLR.

\bibitem{dr}
  Diaconis, P. and Rolles, S.W., 2006. Bayesian analysis for reversible Markov chains. Ann. Statist., 34(1), pp.1270-1292.

\bibitem{DS84}
  Doyle, P. G. and Snell J. L., 1984. {\em Random walks and electrical networks}. The Carus Mathematical Monographs, Number 22, The Mathematical Association of America.


\bibitem{fmk}
  Fagan, W.F., McBride, F. and Koralov, L., 2023. Reinforced diffusions as models of memory-mediated animal movement. Journal of the Royal Society Interface, 20(200), p.20220700.

\bibitem{gly}
  Gleich, D.F., Lim, L.H. and Yu, Y., 2015. Multilinear pagerank. SIAM Journal on Matrix Analysis and Applications, 36(4), pp.1507-1541.

\bibitem{lp}
  Levin, D.A., Peres, Y. and Wilmer, E. L., 2017. Markov chains and mixing times (Vol. 107). American Mathematical Soc.

\bibitem{lt}
  Limic, V. and Tarrès, P., 2007. Attracting Edge and Strongly Edge Reinforced Walks. The Annals of Probability, pp. 1783-1806.

\bibitem{lbgs}
  Lofgren, P.A., Banerjee, S., Goel, A. and Seshadhri, C., 2014, August. Fast-ppr: Scaling personalized pagerank estimation for large graphs. In Proceedings of the 20th ACM SIGKDD international conference on Knowledge discovery and data mining (pp. 1436-1445).

\bibitem{mgr}
  Mei, Q., Guo, J. and Radev, D., 2010, July. Divrank: the interplay of prestige and diversity in information networks. In Proceedings of the 16th ACM SIGKDD international conference on Knowledge discovery and data mining (pp. 1009-1018).

\bibitem{mr2}
  Merkl, F., Öry, A. and Rolles, S.W., 2008. The `magic formula' for linearly edge‐reinforced random walks. Statistica Neerlandica, 62(3), pp.345-363.


\bibitem{mr}
 Merkl, F. and Rolles, S.W., 2007. A random environment for linearly edge-reinforced random walks on infinite graphs. Probability theory and related fields, 138, pp.157-176.

\bibitem{ps}
  Plank, M.J. and Sleeman, B.D., 2003. A reinforced random walk model of tumour angiogenesis and anti‐angiogenic strategies. Mathematical medicine and biology: a journal of the IMA, 20(2), pp.135-181.

\bibitem{rolles}
  Rolles, S.W., 2003. How edge-reinforced random walk arises naturally. Probability theory and related fields, 126(2), pp.243-260.

\bibitem{kr00}
  Keane, M.S. and Rolles, S.W.W., 2000. Edge-reinforced random walk on finite graphs. Infinite dimensional stochastic analysis (Amsterdam, 1999) R. Neth. Acad. Arts. Sci, pp.217–234.

\bibitem{st}
  Sabot, C. and Tarres, P., 2015. Edge-reinforced random walk, vertex-reinforced jump process and the supersymmetric hyperbolic sigma model. Journal of the European Mathematical Society (EMS Publishing), 17(9).

\bibitem{stz}
  Sabot, C., Tarrès, P. and Zeng, X., 2017. The Vertex Reinforced Jump Process and a random Schrödinger operator on finite graphs. Annals of Probability, 45(6A), pp.3967-3986.

\bibitem{smouse}
  Smouse, P.E., Focardi, S., Moorcroft, P.R., Kie, J.G., Forester, J.D. and Morales, J.M., 2010. Stochastic modelling of animal movement. Philosophical Transactions of the Royal Society B: Biological Sciences, 365(1550), pp.2201-2211.

\bibitem{tarres}
  Tarrès, P., 2004. Vertex-reinforced random walk on $\mathbf{Z}$ eventually gets stuck on five points.
  The Annals of Probability, 32(3B), pp. 2650-2701.

\bibitem{volkov}
  Volkov, S., 2001. Vertex-reinforced random walk on arbitrary graphs. The Annals of Probability, 29(1), pp.66-91.

\bibitem{wk}
  Wolfer, G. and Kontorovich, A., 2021. Statistical estimation of ergodic Markov chain kernel over discrete state space.

\bibitem{xzzs}
  Xiao, W., Zhao, H., Zheng, V.W. and Song, Y., 2020. Vertex-reinforced random walk for network embedding. In Proceedings of the 2020 SIAM International Conference on Data Mining (pp. 595-603). Society for Industrial and Applied Mathematics.

\end{thebibliography}
\end{document}